\pgfplotsset{compat=1.14}
\renewcommand{\tilde}{\widetilde}
\newcommand{\E}[1]{\mathbb{E}\left[#1\right]}
\newcommand{\eps}{\varepsilon}
\renewcommand{\P}[1]{\mathbb{P}\left(#1\right)}
\renewcommand{\U}{\mathcal{U}}
\newcommand{\ind}[1]{\mathbb{I}{\left\{#1\right\}}}
\newcommand{\N}{\mathbb{N}}
\newcommand{\cA}{\mathcal{A}}
\newcommand{\cB}{\mathcal{B}}
\newcommand{\cE}{\mathcal{E}}
\newcommand{\cF}{\mathcal{F}}
\newcommand{\cS}{\mathcal{S}}
\newcommand{\cU}{\mathcal{U}}
\newcommand{\cX}{\mathcal{X}}
\newcommand{\cY}{\mathcal{Y}}
\newcommand{\bbE}{\mathbb{E}}
\newcommand{\I}{\mathbb{I}}
\newcommand{\bbL}{\mathbb{L}}
\newcommand{\Pb}{\mathbb{P}}
\newcommand{\bbQ}{\mathbb{Q}}
\newcommand{\e}{\varepsilon}
\newcommand{\fhi}{\varphi}
\newcommand{\lrb}[1]{\left(#1\right)}
\newcommand{\brb}[1]{\bigl(#1\bigr)}
\newcommand{\Brb}[1]{\Bigl(#1\Bigr)}
\newcommand{\bbrb}[1]{\biggl(#1\biggr)}
\newcommand{\lsb}[1]{\left[#1\right]}
\newcommand{\bsb}[1]{\bigl[#1\bigr]}
\newcommand{\Bsb}[1]{\Bigl[#1\Bigr]}
\newcommand{\bcb}[1]{\bigl\{#1\bigr\}}
\newcommand{\Bcb}[1]{\Bigl\{#1\Bigr\}}
\newcommand{\lce}[1]{\left\lceil#1\right\rceil}
\newcommand{\lfl}[1]{\left\lfloor#1\right\rfloor}
\newcommand{\lno}[1]{\left\lVert#1\right\rVert}
\newcommand{\bno}[1]{\bigl\lVert#1\bigr\rVert}
\newcommand{\diff}{\,\mathrm{d}}
\newcommand{\dif}{\mathrm{d}}
\newcommand{\m}{\setminus}
\newcommand{\iop}{\infty}
\newcommand{\perturb}{\Xi}
\newcommand{\leb}{\mathbb{L}}
\newcommand{\cprob}{c_{\mathrm{prob}}}
\newcommand{\cplat}{c_{\mathrm{plat}}}
\newcommand{\cspike}{c_{\mathrm{spike}}}
\newcommand{\kl}{\mathcal{D}_{\mathrm{KL}}}
\definecolor{mygreen}{RGB}{155,205,155}
\definecolor{myblue}{RGB}{150,210,240}
\definecolor{myred}{RGB}{255,150,150}
\newcommand{\blindExpThree}{Blind-Exp3}
\newcommand{\fullfed}{Price-Hedge}
\newcommand{\shortfullfed}{\text{Price-Hedge}}
\newcommand{\gener}{multi-apple tasting}
\newcommand{\gft}{\textsc{GFT}}
\newcommand{\egft}{\widehat \gft}
\newtheorem{claim}{Claim}
\newtheorem{lemma}{Lemma}
\newtheorem{definition}{Definition}
\newtheorem{theorem}{Theorem}
\newtheorem{observation}{Observation}
\title{Repeated Bilateral Trade\\ Against a Smoothed Adversary\thanks{Partially supported by ERC Advanced Grant 788893 AMDROMA ``Algorithmic and Mechanism Design Research in Online Markets'' and MIUR PRIN project ALGADIMAR ``Algorithms, Games, and Digital Markets''. 
The work of TC has benefited from the AI Interdisciplinary Institute ANITI, which is funded by the French ``Investing for the Future --- PIA3'' program under the Grant agreement ANR-19-P3IA-0004. 
TC also acknowledges the support of the project BOLD from the French national research agency (ANR), and that of IBM.
NCB and RC also acknowledge the support of the EU Horizon 2020 ICT-48 research and innovation action under grant agreement 951847, project ELISE (European Learning and Intelligent Systems Excellence) and the FAIR (Future Artificial Intelligence Research) project, funded by the NextGenerationEU program within the PNRR-PE-AI scheme.
} }
\author[1,5]{Nicol\`o Cesa-Bianchi}
\author[2]{Tommaso Cesari}
\author[1,3]{Roberto Colomboni}
\author[4]{\\ Federico Fusco}
\author[4]{Stefano Leonardi}
\affil[1]{Universit\`a degli Studi di Milano, Milano, Italy}
\affil[2]{University of Ottawa, Ottawa, Canada}
\affil[3]{Istituto Italiano di Tecnologia, Genova, Italy}
\affil[4]{Sapienza Universit\`a di Roma, Roma, Italy}
\affil[5]{Politecnico di Milano, Milano, Italy}
\begin{document}

\maketitle

\begin{abstract}
We study repeated bilateral trade where an adaptive $\sigma$-smooth adversary generates the valuations of sellers and buyers. 
We provide a complete characterization of the regret regimes for fixed-price mechanisms under different feedback models in the two cases where the learner can post either the same or different prices to buyers and sellers.
We begin by showing that the minimax regret after $T$ rounds is of order $\sqrt{T}$ in the full-feedback scenario. 
Under partial feedback, any algorithm that has to post the same price to buyers and sellers suffers worst-case linear regret. 
However, when the learner can post two different prices at each round, we design an algorithm enjoying regret of order $T^{3/4}$ ignoring log factors.
We prove that this rate is optimal by presenting a surprising $T^{3/4}$ lower bound, which is the main technical contribution of the paper.
\end{abstract}

\clearpage

\section{Introduction}
In the bilateral trade problem, two strategic agents---a seller and a buyer---wish to trade some good. They both privately hold a personal valuation for it and strive to maximize their respective quasi-linear utility. The solution to the problem consists in designing a mechanism that intermediates between the two parties to make the trade happen. 
In general, an ideal mechanism for the bilateral trade problem would optimize the efficiency, i.e., the gain in social welfare resulting from trading the item from seller to buyer, while enforcing incentive compatibility (IC) and individual rationality (IR). 
The assumption that makes a two-sided mechanism design more complex than its one-sided counterpart is budget balance (BB): the mechanism cannot subsidize the market. 
Unfortunately, as \citet{Vickrey61} observed in his seminal work, the optimal incentive compatible mechanism maximizing social welfare for bilateral trade may not be budget balanced.
A more general result due to \citet{MyersonS83} shows that there are some problem instances where a fully efficient mechanism for bilateral trade that satisfies IC, IR, and BB does not exist.  This impossibility result holds even if prior information on the buyer and seller's valuations is available and the truthful notion is relaxed to Bayesian incentive compatibility.
To circumvent this obstacle, the subsequent vast body of work primarily considers the Bayesian version of the problem, where agents' valuations are drawn from some distribution and the efficiency is evaluated in expectation with respect to the valuations' randomness.
There are many incentive compatible mechanisms that give a constant approximation to the social welfare---see, e.g., \citet{BlumrosenD14, Duetting20}, and more recently to the harder problem of approximating the gain from trade \citep{DengMSW21}. 
Although in some sense necessary---without any information on the priors there is no way to extract any meaningful approximation to the social welfare \citep{Duetting20}---the Bayesian assumption of perfect knowledge of the valuations' underlying distributions is unrealistic. 

Following recent work \citep{Nicolo21,AzarFF22}, we study this fundamental mechanism design problem in an online learning setting where at each time $t$, a new seller/buyer pair arrives.
The seller has a private valuation $s_t \in [0,1]$ representing the smallest price they are willing to accept in order to trade. 
Similarly, the buyer has a private value $b_t \in [0,1]$ representing the highest price they will pay for the item.
We assume both valuations are generated by an adversary.
Independently, the learner posts two (possibly randomized) prices: $p_t \in [0,1]$ to the seller and $q_t\in [0,1]$ to the buyer. We require budget balance: it must hold that $p_t \le q_t$ for all $t$ or, equivalently, that the pair $(p_t,q_t)$ belongs to the upper triangle $\cU = \bcb{ (x,y) \in [0,1]^2 \mid x \le y }$.
A trade happens if and only if both agents agree to trade, i.e., when $s_t \le p_t$ and $ q_t \le b_t$.
When this is the case, the learner observes some feedback $z_t$ and is awarded the gain from trade at time $t$:
\[
	\gft_t(p, q) = \brb{ (b_t - q) + (p - s_t) } \cdot\ind{s_t \le p \le q \le b_t}\footnote{Other works considered the similar definition $(b_t-s_t)\cdot\ind{s_t \le p \le q \le b_t}$. All our results translate with minimal effort to this definition as well.}.
\]
When the two prices $p$ and $q$ are equal, we omit one of the arguments to simplify the notation. When we want to stress the dependence on the valuations, we use the notation $\gft(p,q,s_t,b_t)$ instead of $\gft_t(p,q)$. 
We refer the reader to the Learning Protocol of Sequential Bilateral Trade Against a $\sigma$-smooth Adversary (the definition of $\sigma$-smoothness is recalled below).
The regret of a learning algorithm $\cA$ against an adversary $\cS$ generating the sequence of random pairs $(S_t,B_t)$ is defined by:
\[
        R_T(\cA,\cS) = \max_{(p,q)\in\U}\E{\sum_{t=1}^T \gft_t(p,q) - \sum_{t=1}^T \gft_t(P_t,Q_t)}.
\]
We use $P_t, Q_t$ to stress that the prices are possibly randomized, with the convention that uppercase letters refer to random variables and the corresponding lowercase letters to their realizations.
The expectation in the previous formula is then with respect to the internal randomization of the learning algorithm and the adversary. 
The regret $R_T(\cA)$ of a learning algorithm $\cA$ is defined as its performance against the hardest adversary, i.e., as the supremum over all adversaries $\cS$ (in a certain class we define in the next paragraph) of $R_T(\cA,\cS)$. 
Our goal is to study the minimax regret $R_T^\star$, which measures the performance of the best algorithm against the worst possible adversary, i.e., the infimum over all algorithms $\cA$ of $R_T(\cA)$.
The set of learning algorithms we allow varies with the different settings we consider, i.e., with how many prices are posted and what feedback is available---see below. 
\begin{algorithm*}[t]
\caption*{\textbf{Learning Protocol of Sequential Bilateral Trade Against a $\sigma$-smooth Adversary}}
\begin{algorithmic}[t]
    \For{time $t=1,2,\ldots$}
        \State The adversary privately chooses the $\sigma$-smooth distribution of an r.v.\ $(S_t,B_t)$ on $[0,1]^2$
        \State Seller and buyer valuations $(s_t,b_t)$ are drawn from $(S_t,B_t)$
        \State The learner posts prices $(p_t,q_t) \in \cU$
        \State The learner receives a (hidden) reward $ \gft_t(p_t, q_t)\in [0,1]$
        \State Feedback $z_t$ is revealed to the learner
    \EndFor
\label{a:learning-model}
\end{algorithmic}
\end{algorithm*}

Smoothed analysis of algorithms, originally introduced by \citet{spielman2004smoothed} and later formalized for online learning by \citet{rakhlin2011online} and \citet{haghtalab2020smoothed}, is an approach to the analysis of algorithms in which the instances at every round are generated from a distribution that is not too concentrated. 

In this work, we consider a smoothed valuation-generating model that interpolates between the adversarial and the stochastic regimes. This is a natural choice for the bilateral trade problem, where algorithms with sublinear regret only exist for the stochastic i.i.d.\ setting (with additional assumptions), and where the adversarial model is known to be intractable \citep{Nicolo21}. 
At each time step $t$, a pair of valuations $(s_t,b_t)$ is sampled according to the random variable $(S_t,B_t)$, whose distribution is chosen by the adversary.
Our adversary is adaptive because the distribution of $(S_t, B_t)$ may depend on the past realizations of the valuations and the past internal randomization of the algorithm.
We focus on $\sigma$-smoothed adversaries, where the distributions of $(S_t, B_t)$ are not too concentrated, according to the following notion.
\begin{definition}[\cite{HaghtalabRS21}]
Let $X$ be a domain that supports a uniform distribution $\nu$. A measure $\mu$ on $X$ is said to be $\sigma$-smooth if for all measurable subsets $A \subseteq X$, we have $\mu(A) \le \frac{\nu(A)}{\sigma}$.
\end{definition}

We say that a random variable is $\sigma$-smooth if its distribution is $\sigma$-smooth. We consider two families of learning algorithms, corresponding to two ways of being budget balanced:
\begin{itemize}[topsep=4pt,itemsep=0pt,leftmargin=9pt]
\item{\textbf{Single-price mechanisms}.} If we want to enforce a stricter notion of budget balance, namely strong budget balance, the mechanism is neither allowed to subsidize nor extract revenue from the system. This is modeled by imposing $p_t = q_t$, for all $t$. 
\item{\textbf{Two-price mechanisms}.} If we require that the mechanism enforces (weak) budget balance, it can post two different prices, $p_t$ to the seller and $q_t$ to the buyer, as long as $p_t\le q_t$ at each time step. Namely, we only require that the mechanism never subsidize a trade; i.e., it can still make a profit.
\end{itemize}

\begin{observation} \label{obs:twopriceobs}
The only reason for a budget-balanced algorithm to post two different prices is to obtain more information.
A direct verification shows that the expected gain from trade can always be maximized by posting the same price to both the seller and the buyer. 
\end{observation}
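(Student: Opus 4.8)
The plan is to reduce the statement to a pointwise inequality between gains from trade and then take expectations. Concretely, I will show that for every budget-balanced pair $(p,q)\in\cU$ and every realization $(s,b)\in[0,1]^2$ of the valuations,
\[
    \gft(p,q,s,b)\;\le\;\gft(p,p,s,b)
    \qquad\text{and}\qquad
    \gft(p,q,s,b)\;\le\;\gft(q,q,s,b).
\]
This is the ``direct verification'' alluded to in the statement: it says that, \emph{as a function of the valuations}, the single price $p$ (equivalently $q$) pointwise dominates the pair $(p,q)$.

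To prove the first inequality, I distinguish two cases. If $s\le p\le q\le b$, then $p\in[s,b]$ (from $s\le p$ and $p\le q\le b$), so $\gft(p,p,s,b)=(b-s)\ind{s\le p\le b}=b-s$; on the other hand $\gft(p,q,s,b)=(b-q)+(p-s)=(b-s)-(q-p)\le b-s$ since $q\ge p$, which gives the claim. If instead the chain $s\le p\le q\le b$ fails, then $\gft(p,q,s,b)=0\le(b-s)\ind{s\le p\le b}=\gft(p,p,s,b)$, where we used $\gft(p,p,s,b)\ge 0$. The inequality with $q$ in place of $p$ is proved identically, noting that $s\le p\le q\le b$ also implies $q\in[s,b]$.

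From here the conclusion is immediate. Since the above holds for \emph{every} realization of the valuations---and hence, a fortiori, for every $\sigma$-smooth (possibly history-dependent) law the adversary may choose for $(S_t,B_t)$---taking expectations yields $\E{\gft_t(p,q)}\le\E{\gft_t(p,p)}$ for all $(p,q)\in\cU$. Thus $\max_{(p,q)\in\cU}\E{\gft_t(p,q)}=\max_{r\in[0,1]}\E{\gft_t(r,r)}$: the per-round expected gain from trade is always maximized at a single price, and in fact at either endpoint of the posted pair. The same pointwise domination also shows that the comparator $\max_{(p,q)\in\cU}\E{\sum_{t}\gft_t(p,q)}$ appearing in the definition of regret is attained by a constant single price. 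There is essentially no obstacle here; the only point worth stressing is that the domination is \emph{pointwise} in $(s,b)$, so it is insensitive to how the adversary chooses the (adaptive) distribution, and it says nothing about the \emph{feedback} generated by the two prices---which is precisely the informational advantage that the rest of the paper exploits, and the reason the observation is phrased the way it is.
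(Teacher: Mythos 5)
Your proof is correct and it is precisely the ``direct verification'' the paper alludes to but does not spell out. The pointwise domination $\gft(p,q,s,b)\le\gft(p,p,s,b)$ (and $\le\gft(q,q,s,b)$) is established cleanly in both cases: on the event $\{s\le p\le q\le b\}$ the two-price gain loses exactly $q-p\ge 0$ relative to the single-price gain $b-s$, and off that event the two-price gain is zero while the single-price gain is nonnegative. Since the inequality holds realization by realization, it survives any choice of (possibly adaptive, possibly non-smooth) distribution for $(S_t,B_t)$, and taking expectations gives the observation. Your closing remark---that the domination is pointwise in $(s,b)$ and hence silent about the feedback, which is exactly the asymmetry the rest of the paper exploits---accurately captures why the observation is stated the way it is.
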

We consider three natural types of feedback models, in increasing order of difficulty for the learner.
The last two are partial feedback models that enjoy the desirable property of requiring only a minimal amount of information from the agents:
\begin{itemize}[topsep=4pt,itemsep=0pt,leftmargin=9pt]
\item{\textbf{Full feedback.}} $z_t = (s_t,b_t)$: The learner observes both seller and buyer valuations. This model corresponds to a direct revelation mechanism. (By Observation \ref{obs:twopriceobs}, in this model, there is no reason to post two distinct prices, as all the relevant information is revealed anyway.)
\item{\textbf{Two-bit feedback.}} $z_t = \brb{ \I\{s_t \le p_t\}, \,\I\{q_t \le b_t\} }$: The learner observes separately if the two agents accept the prices offered to each of them.
\item{\textbf{One-bit feedback.}} $z_t = \I\{s_t \le p_t \le q_t \le b_t\}$: The learner only observes whether or not the trade occurs. 
This is arguably the minimal feedback the learner could get. 
\end{itemize}

\subsection{Overview of results}
We characterize (up to logarithmic terms) the dependence in the time horizon of the minimax regret regimes for the online learning version of the bilateral trade problem against an adaptive $\sigma$-smooth adversary for various feedback models and notions of budget balance, as outlined in \Cref{table:results}. We prove the following results:
    {
    \renewcommand{\arraystretch}{1.4}
    \begin{table}[tbp]
    \centering
    \begin{tabular}{c|c|c|c|}
        \cline{2-4}
        & Full Feedback & Two-bit Feedback & One-bit Feedback                        \\ \hline
        \multicolumn{1}{|l|}{Single Price} & $\tilde{O}\brb{\sqrt{T}}$ \hspace*{3mm} Theorem \ref{thm:full-upper} & $\Omega(T)$ \hspace*{3mm}& $\Omega(T)$
        \\ \hline
        \multicolumn{1}{|l|}{Two Prices}   & $\Omega \brb{ \sqrt{T} }$ & $\Omega(T^{3/4})$ \hspace*{3mm} Theorem \ref{thm:two-bit-two-prices-lower} & $\tilde{O}\brb{  T^{3/4} }$ \hspace*{3mm} Theorem \ref{thm:one-bit-two-prices-upper} 
        \\ \hline
    \end{tabular}
    \caption{\footnotesize{Overview of the regret regimes against a $\sigma$-smooth adversary. The lower bound for the full feedback model is from \citet[Thm.~3.3]{Nicolo21}, that for single price with two-bit feedback is from Thm. 5 of the same paper.}}
    \label{table:results}
    \end{table}
    }
\begin{itemize}[topsep=4pt,itemsep=0pt,leftmargin=9pt]
\item For the full feedback model, we design the \fullfed{} algorithm, that posts a single price at each time step and enjoys an $O(\sqrt{T \ln T})$ bound on the regret (\Cref{thm:full-upper}). 
By \citet[Theorem~3.3]{Nicolo21}, this rate is optimal, up to logarithmic terms.
\item For both partial-feedback models, we design the \blindExpThree{} algorithm, that posts two prices at each time step and enjoys an $\tilde O(T^{3/4})$ bound on the regret (\Cref{thm:one-bit-two-prices-upper}).
We prove that, surprisingly, this rate is optimal (\Cref{thm:two-bit-two-prices-lower}), up to logarithmic terms.
\item We prove that no algorithm can achieve worst-case sublinear regret when the platform is allowed to post a single price but receives partial feedback (one or two bits), even in the case of an i.i.d.\ $\sigma$-smooth adversary where the valuations of buyers and sellers are independent (\Cref{thm:two-bit-one-price-lower}).\footnote{%
This strengthens a result in \citet[Theorem 5]{Nicolo21}, where the same lower bound was proven without the additional restriction of independence of seller and buyer.%
}
\end{itemize}
We highlight three salient qualitative features of our results. 
First, we construct a (surprising) lower bound of order $T^{3/4}$ for the minimax regret of the problem with partial feedback where the learner is allowed to post two prices. This lower bound, which is also our main technical contribution, is strictly worse that the $T^{2/3}$ rate that can be obtained with access to bandit feedback,\footnote{Although our decision space is two-dimensional, one can see that, in a bandit feedback with a smooth adversary,
a regret of order $T^{2/3}$ can be obtained by running an optimal bandit algorithm (e.g., MOSS \citealt{audibert2009minimax}, that has an upper bound on the regret of order $\sqrt{KT}$) on a discretization of $K = \Theta(T^{1/3})$ equispaced prices on the diagonal $\{(p,q) \in \cU \mid p = q \}$. Similar results appeared, e.g., in \citet{kleinberg2004nearly,auer2007improved}.%
} 
and substantially
departs from the rates $\sqrt{T}, T^{2/3}, T$ that can be found in the two closest partial feedback models in the literature: online learning with feedback graphs \citep{AlonCGMMS17} and partial monitoring \citep{BartokFPRS14}.
Second, we introduce the first sublinear-regret learning algorithm for the partial feedback version of the bilateral trade problem beyond the (strict) stochastic i.i.d.\ assumption on the valuations.
Finally, our results imply that, from the online learning perspective, there is no difference between receiving one or two bits of feedback when two prices can be posted; this is in stark contrast, for instance, with what happens in the stochastic case when only one price can be posted---see \citet[Section~8]{Nicolo21}.
    
\subsection{Technical challenges and our techniques}

        The repeated bilateral trade problem is characterized by two key features that set it apart from the standard online learning with full or bandit feedback models: the nature of the action space and the partial feedback structure. Both these challenges need to be taken into account to construct the $T^{3/4}$ lower bound, which is the main technical endeavor of this work. 
        
        \paragraph{The action space.} The action space of the bilateral trade problem is continuous (the prices live in a subset of $[0,1]^2$), while the gain from trade is discontinuous. 
        This entails that, without any smoothness assumptions on the distributions, the problem     turns out to be utterly intractable in the standard adversarial setting (see the ``needle in a haystack'' phenomenon in Theorem 6 of \citet{Nicolo21} and Theorem 3 of \citet[]{AzarFF22}). We leverage the $\sigma$-smoothness by showing that it induces regularity on the expected gain from trade (Lemma \ref{l:lip}). This in turn allows us to prove a key discretization result (\Cref{claim:discretization}).

        \paragraph{The feedback structure.} The main peculiarity of the bilateral trade problem lies in the partial feedback models naturally associated with it. 
        Receiving only information about the relative ordering of the prices posted and the realized valuations does not allow the learner to directly reconstruct the gain from trade received at each time step. 
        For instance, if the learner posts the same price $0.5$ to both agents and they both accept, there is no way of assessing whether its gain from trade is constant (e.g., $(s,b)=(0,1)$) or arbitrarily small (e.g., $s=0.5-\e$ and $b=0.5 + \e$). Conversely, if one of the two agents rejects the price posted, the learner can only infer loose bounds on the lost trade opportunity. The key technical tool to address this challenge is given by a one-bit estimation technique that exploits the possibility of posting {\em two} prices to estimate the gain from trade it would have achieved by posting {\em one} single price to both agents \citep{Nicolo21, AzarFF22}. This tool, together with our discretization result (Lemma \ref{claim:discretization}) are behind our \blindExpThree{} algorithm achieving a $T^{3/4}$ regret.

        \paragraph{Our $T^{3/4}$ lower bound.} 
        At a (very) high level, we show that bilateral trade with partial feedback contains instances that are closely related to instances of online learning with feedback graphs \citep{AlonCDK15}.
        The corresponding feedback graph $G_K$ is over $2K$ actions: $K$ of them are ``exploring'' and the others are ``exploiting'', see \Cref{fig:feedback-graph}, left.
        Exploring actions are costly and reveals feedback on corresponding exploiting actions.
        One of the exploiting actions is optimal, but none of them returns any feedback.
        We then build ``hard'' instances so that any algorithm is forced to spend a long time playing each of the many exploring actions.
        By selecting optimally the number of arms in the reduction and the difference in reward between exploiting actions, we obtain the $T^{3/4}$ rate.
        This proof sketch hides many technical challenges: 
        we need to carefully design $\sigma$-smooth distributions of the adversary that we can map into instances of online learning with feedback graphs that achieve their lower bound.
        This presents two problems: on the one hand, the gain from trade achievable from different prices is inherently dependent (while in usual lower bound constructions for online learning with feedback graphs, the rewards of arms can be chosen arbitrarily, \citealt{AlonCDK15}), on the other hand, the embedding needs to preserve the feedback structure, which is significantly different from classic ones (such as bandits or experts) and as such, requires novel and subtle arguments. To address the second challenge, we prove a general information-theoretic result (\Cref{t:inverse-transformation-method-2}, in \Cref{appe:inverse-transform-section}) that may be of independent interest for further lower-bounds constructions in related problems.

        \subsection{Additional related works}
        \label{sec:additional}
            Recent works on the smoothed analysis of online learning algorithms include \citet{haghtalab2020smoothed}, \citet{haghtalaboracle}, and \citet{block2022smoothed}---see \Cref{sec:additional} for additional related works. 
            Further applications of smoothed analysis to online learning problems include the works by \citet{block2022efficient} and \citet{block2023smoothed}. \citet{sachs2022between} study a related stochastic adversary in the more general online convex optimization setting; however, they do not insist on the smoothness of the distributions. 
            
            In online learning settings with partial feedback, like the one we study here, smoothed analysis has been primarily applied to linear contextual bandits \citep{kannan2018smoothed,raghavan2020greedy,sivakumar2020structured,sivakumar2022smoothed}, where contexts are drawn from smooth distributions. However, the focus of those works has been on improving regret bounds specifically for the greedy algorithm that has linear regret in the worst case. Although the smoothed adversary causes the expected gain from trade to be Lipschitz, the best possible regret rates for the partial feedback models considered here are provably worse than those achievable with bandit feedback. 
            To the best of our knowledge, bilateral trade with a smoothed adversary was previously studied only by \citet{Nicolo21} in the two-bit feedback model.
            Another line of work considers regret bounds parameterized by variations of losses across time and other related measures of smoothness \citep{hazan2010extracting,chiang2012online,steinhardt2014adaptivity}. See also \citet{chen2021impossible} for recent results in this area.
            
            The minimax regret of online learning with partial feedback is rather well understood when the learner selects actions from a finite set---see, e.g., the vast literature on feedback graphs and the recent work by \citet{lattimore2022minimax} on partial monitoring. General analyses of settings with infinitely many actions sets are mostly limited to bandit feedback \citep{kleinberg2019bandits}.

\section{Warm-up: one-price setting}

In this section, we present our discretization error result (sharpening by a constant the bound in \citealt{Nicolo21}) and present our results in the single-price setting.

\subsection{Regret due to discretization.}
\label{sec:discretization}

Our first theoretical result concerns the study of how discretization impacts the regret against $\sigma$-smooth adversaries. 
Although the gain from trade is, in general, discontinuous, its expectation is $1/\sigma$-Lipschitz, thus opening the way to discretization methods, as formalized by the following result.

\begin{lemma}[Lipschitzness]
    \label{l:lip}
        Let $(S,B)$ be a $\sigma$-smooth random variable on $[0,1]^2$, then the induced expected gain from trade $\gft$ is $1/\sigma$-Lipschitz:
        \begin{equation}\label{eq:lips}
            |\E{ \gft(y) - \gft(x)}| \le \frac{1}{\sigma} |y-x|, \quad \forall x,y \in [0,1]
        \end{equation}
    \end{lemma}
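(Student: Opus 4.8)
The plan is to reduce the Lipschitz bound to an estimate on the marginal laws of $S$ and $B$. First I would observe that posting a single price $p$ to both agents makes the gain from trade collapse to $\gft(p) = \gft(p,p) = \brb{(B-p)+(p-S)}\ind{S\le p\le B} = (B-S)\,\ind{S\le p\le B}$, and that on the event $\{S\le p\le B\}$ the weight $B-S$ lies in $[0,1]$. Fixing $x\le y$ in $[0,1]$ (the reverse case being symmetric), I would then write
\[
	\E{\gft(y)-\gft(x)} = \E{(B-S)\brb{\ind{S\le y\le B}-\ind{S\le x\le B}}}
\]
and analyse the bracket pointwise.

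Since $x\le y$, the two indicators can differ in exactly two mutually exclusive ways: on the event $\{x<S\le y\le B\}$ the bracket equals $+1$, and on the event $\{S\le x\le B<y\}$ it equals $-1$; everywhere else it vanishes. Hence the expectation is the difference of the two nonnegative quantities $\E{(B-S)\ind{x<S\le y\le B}}$ and $\E{(B-S)\ind{S\le x\le B<y}}$, each bounded above by the probability of the corresponding event because $B-S\in[0,1]$ there. This yields
\[
	\babs{\E{\gft(y)-\gft(x)}} \le \max\bcb{\P{S\in(x,y]},\,\P{B\in[x,y)}}.
\]

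It remains to bound these marginal probabilities, which is exactly where $\sigma$-smoothness is used. For any measurable $A\subseteq[0,1]$ we have $\P{S\in A}=\P{(S,B)\in A\times[0,1]}\le \leb(A\times[0,1])/\sigma=\leb(A)/\sigma$ by the defining inequality of $\sigma$-smoothness (with $\leb$ denoting Lebesgue measure), so the marginal law of $S$ on $[0,1]$ is again $\sigma$-smooth, and likewise for $B$. Taking $A=(x,y]$ and $A=[x,y)$ shows both probabilities are at most $(y-x)/\sigma$, which is \eqref{eq:lips}. The only point requiring care is the pointwise case analysis of the indicator difference — checking that no further region contributes and that the two surviving terms genuinely have opposite, definite signs — but this is elementary, and once settled the rest of the argument is immediate.
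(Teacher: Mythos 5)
Your proof is correct and uses essentially the same approach as the paper: both proofs rewrite $\E{\gft(y)-\gft(x)}$ via the same case analysis of the indicator difference $\ind{S\le y\le B}-\ind{S\le x\le B}$, which reduces to the two disjoint events you identify (the paper writes them for $x>y$ with closed boundaries, a measure-zero difference). The only variation is cosmetic: you close the argument by taking the $\max$ of the two nonnegative terms and invoking $\sigma$-smoothness of the marginals of $S$ and $B$, whereas the paper sums the two terms, bounds each by the Lebesgue area of the corresponding rectangle in $[0,1]^2$, and observes that the two areas sum to at most $|x-y|$; both yield the same $|x-y|/\sigma$ bound.
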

    \begin{proof}
        Let $x>y$ be any two prices in $[0,1]$, we have the following:
        \begin{align*}
            |\E{ \gft(y) - \gft(x)}| &= |\E{(B-S)(\ind{S\le y \le B} - \ind{S \le x \le B})}|  \\
            &= |\E{(B-S) (\ind{S\le y \le B \le x} - \ind{y \le S \le x \le B}) }|\\
            &\le \P{S\le y \le B \le x} + \P{y \le S \le x \le B}\\
            &= \P{(S,B) \in [0,y]\times[y,x]} + \P{(S,B) \in [y,x]\times[x,1]}\\
            &\le \frac{1}{\sigma}\P{(U,V) \in [0,y]\times[y,x]} + \frac{1}{\sigma}\P{(U,V) \in [y,x]\times[x,1]} \\
            &= \frac{1}{\sigma}\left[ y \cdot (x-y) + (1-x)(x-y) \right] \le \frac{1}{\sigma} (x-y)
        \end{align*}
        Note that in the second to last inequality we used the assumption on the smoothness of $(S,B)$ and we introduced $U$ and $V$, two independent uniform random variables in $[0,1]$.
    \end{proof}

\begin{claim}[Discretization error]
\label{claim:discretization}
Let $G$ be any finite grid of prices in $[0,1]$ and let $\delta(G)$ be the largest distance of a point in $[0,1]$ to $G$, {\sl i.e.}, $\delta(G) = \max_{p \in [0,1]}\min_{g \in G} |p-g|$,
then for any sequence of $\sigma$-smooth distributions $\S = (S_1,B_1), \ldots, (S_T,B_T)$, we have the following:
\[
    \max_{p \in [0,1]}\bbE\lsb{\sum_{t=1}^T \gft_t(p)} - \max_{g \in G}\bbE\lsb{\sum_{t=1}^T \gft_t(g)}\le \frac{\delta(G)}{\sigma} T \;.
\]
\end{claim}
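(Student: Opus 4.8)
The plan is to reduce the claim about the sum of expected gains from trade over $T$ rounds to a single application of the Lipschitz bound from \Cref{l:lip}, summed over the rounds. First I would fix an optimal continuous price $p^\star \in \argmax_{p \in [0,1]} \bbE\bsb{\sum_{t=1}^T \gft_t(p)}$ (the maximum is attained since $p \mapsto \bbE[\gft_t(p)]$ is continuous, being Lipschitz by \Cref{l:lip}, and $[0,1]$ is compact). Then I would pick the closest grid point $g^\star \in G$ to $p^\star$, so that by the definition of $\delta(G)$ we have $|p^\star - g^\star| \le \delta(G)$.

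Next, the key step: for each round $t$, \Cref{l:lip} applied to the $\sigma$-smooth random variable $(S_t, B_t)$ gives
\[
    \bbE\bsb{\gft_t(p^\star)} - \bbE\bsb{\gft_t(g^\star)} \le \babs{\bbE\bsb{\gft_t(p^\star) - \gft_t(g^\star)}} \le \frac{1}{\sigma}\babs{p^\star - g^\star} \le \frac{\delta(G)}{\sigma}.
\]
Summing this over $t = 1, \ldots, T$ yields $\bbE\bsb{\sum_{t=1}^T \gft_t(p^\star)} - \bbE\bsb{\sum_{t=1}^T \gft_t(g^\star)} \le \frac{\delta(G)}{\sigma} T$. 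Finally, I would chain inequalities: the left-hand side of the claim equals $\bbE\bsb{\sum_t \gft_t(p^\star)} - \max_{g \in G} \bbE\bsb{\sum_t \gft_t(g)} \le \bbE\bsb{\sum_t \gft_t(p^\star)} - \bbE\bsb{\sum_t \gft_t(g^\star)}$, since $g^\star$ is just one element of $G$ and hence its sum of expected gains is at most the maximum over $G$. Combining with the previous display closes the proof.

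One subtlety worth addressing carefully is that \Cref{l:lip} is stated for a \emph{single} $\sigma$-smooth random variable $(S,B)$, whereas here the distributions $(S_t,B_t)$ may differ across rounds (and, in the general learning protocol, be chosen adaptively). Since we are only comparing two fixed prices $p^\star$ and $g^\star$ and the bound in \Cref{l:lip} holds pointwise for each round's distribution regardless of how it was selected, summing is legitimate — but I would phrase this so that the adaptivity causes no issue (conditioning on the past, each $(S_t,B_t)$ is $\sigma$-smooth, so the per-round bound applies, and we can take total expectation). There is essentially no hard obstacle here; the only thing to be slightly careful about is the existence of the maximizer $p^\star$ and the legitimacy of summing a per-round Lipschitz estimate across a possibly adaptive sequence. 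This is genuinely a short argument — it is really just "Lipschitz, summed."
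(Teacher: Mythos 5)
Your proof is correct and takes essentially the same approach as the paper: fix the optimal price $p^\star$, choose the nearest grid point, and apply the Lipschitz bound from \Cref{l:lip}. You are in fact somewhat more careful than the paper's proof — you explicitly sum the per-round Lipschitz estimate and justify why adaptivity is harmless, whereas the paper's displayed chain of inequalities appears to drop the factor of $T$ in its final line (an evident typo, since the claim statement has it).
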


\begin{proof}
        Let $p^*$ be the best fixed price in hindsight in $[0,1]$ with respect to the sequence $\mathcal S$; if $p^* \in Q$, then there is nothing to prove. If this is not the case, then there exist $p_G \in G$, is such that $|p^*-p_G| \le \delta(G).$
        We have the following:
        \begin{align*}
            &\E{\sum_{t=1}^T \gft_t(p^*)} - \max_{p \in G}\E{\sum_{t=1}^T \gft_t(p)}\\ 
            & \quad \le \E{\sum_{t=1}^T \gft_t(p^*)} - \E{\sum_{t=1}^T \gft_t(p_G)} \\
            & \quad \le \frac{|p^*-p_Q|}{\sigma} \le \frac{\delta(G)}{\sigma},
        \end{align*}
        where, in the second to last inequality, we used the Lipschitz property of the expected gain from trade as in Lemma \ref{l:lip}.
    \end{proof}

\subsection{Posting a single price in full information.}
\label{sec:fullfeedback}

In the full feedback model, the learner observes a realization $z_t = (s_t,b_t)$ of $(S_t,B_t)$ at the end of each round $t$.
Thus, they are able to reconstruct the gain from trade of any other pair of prices. 
By \Cref{claim:discretization}, we can therefore run our favorite learning algorithm for (non-oblivious adversarial) online learning with expert advice on a discrete set of prices.
For example, using Hedge \citep{freund1997decision} we obtain the Price-Hedge algorithm, whose regret is controlled by the following theorem.

\begin{algorithm*}[t]
\caption*{\textbf{Learning algorithm with full feedback} : \fullfed{} }
\begin{algorithmic}[t]
\State \textbf{Input:} time horizon $T$, Hedge algorithm $\mathcal A$, grid of prices $G$, with $|G|=K$
\State \textbf{Initialization:} Initialize $\mathcal A$ on time horizon $T$ with $K$ actions, one for each $p \in G$
\For{time $t=1,2,\ldots$}
    \State Receive from $\mathcal A$ the price $p_t \sim P_t \in G$
    \State Post price $p_t$ to the agents and receive feedback $z_t = (s_t,b_t)$
    \State Feed to $\mathcal A$ the rewards $\gft_t(p) = (b_t-s_t) \ind{s_t \le p \le b_t}$, for all $p \in G$
\EndFor
\label{alg:full-feedback}
\end{algorithmic}
\end{algorithm*}

\begin{theorem}
\label{thm:full-upper}
    Consider the problem of repeated bilateral trade against a $\sigma$-smooth adversary in the full feedback model, for any $\sigma \in (0,1]$.
    Then the regret of \fullfed{}, run using the uniform $K$-grid $G$ on $[0,1]$, for $K \ge 2$, satisfies:
    \[
        R_T(\mathrm{\shortfullfed}) \le 2\sqrt{T \ln K} + \frac{T}{\sigma K}.
    \]
    In particular, for $K = \big\lfloor{\sqrt{T}}\big\rfloor$, the bound becomes: $
        R_T\mathrm{(\shortfullfed}) \le \frac{4}{\sigma} \cdot \sqrt{T \ln T}$.
\end{theorem}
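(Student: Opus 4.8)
The plan is to split the regret of \shortfullfed{} into two independent contributions — the error incurred by restricting attention to the grid $G$, and the error incurred by the Hedge subroutine on that grid — and to control the first with \Cref{claim:discretization} and the second with the standard regret bound of Hedge. First I would fix an arbitrary sequence $\cS$ of $\sigma$-smooth distributions and simplify the comparator. Since \shortfullfed{} posts a single price, $P_t=Q_t$ for all $t$; moreover, for every $(p,q)\in\cU$ one has $\gft_t(p,p)\ge\gft_t(p,q)$ pointwise, because $\gft_t(p,p)=(B_t-S_t)\ind{S_t\le p\le B_t}\ge 0$ whereas $\gft_t(p,q)$ is $0$ unless $S_t\le p\le q\le B_t$, in which case it equals $(B_t-S_t)-(q-p)\le B_t-S_t$. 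Summing over $t$ and taking expectations gives $\max_{(p,q)\in\cU}\bbE[\sum_t\gft_t(p,q)]=\max_{p\in[0,1]}\bbE[\sum_t\gft_t(p)]$, in agreement with \Cref{obs:twopriceobs}. Inserting $\max_{g\in G}\bbE[\sum_t\gft_t(g)]$, I would then write $R_T(\shortfullfed,\cS)=\mathrm{(I)}+\mathrm{(II)}$ with
\[
    \mathrm{(I)}=\max_{p\in[0,1]}\bbE\Bsb{\textstyle\sum_t\gft_t(p)}-\max_{g\in G}\bbE\Bsb{\textstyle\sum_t\gft_t(g)},\qquad
    \mathrm{(II)}=\max_{g\in G}\bbE\Bsb{\textstyle\sum_t\gft_t(g)}-\bbE\Bsb{\textstyle\sum_t\gft_t(P_t)}.
\]
Term $\mathrm{(I)}$ is exactly the discretization error, so by \Cref{claim:discretization} it is at most $\delta(G)\,T/\sigma$; and for the uniform $K$-grid $\{0,\tfrac1{K-1},\dots,1\}$ one has $\delta(G)=\tfrac1{2(K-1)}\le\tfrac1K$ precisely because $K\ge2$, giving $\mathrm{(I)}\le T/(\sigma K)$.

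Next I would bound $\mathrm{(II)}$ by the regret guarantee of Hedge. The point to verify is that at each round $t$ the reward vector $(\gft_t(g))_{g\in G}\in[0,1]^K$ fed to $\cA$ is a legitimate, causally revealed input: the pair $(S_t,B_t)$ is drawn before $P_t$, and conditionally on the history $\cF_{t-1}$ it is independent of $P_t$. Hence Hedge's deterministic regret bound for reward vectors in $[0,1]^K$, with the learning rate tuned as a function of $T$ and $K$, yields pathwise $\max_{g\in G}\sum_t\gft_t(g)-\sum_t\sum_{g\in G}P_t(g)\,\gft_t(g)\le 2\sqrt{T\ln K}$, where $P_t(g)$ is the probability Hedge assigns to $g$ at round $t$ \citep{freund1997decision}. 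Taking expectations and using $\bbE[\gft_t(P_t)\mid\cF_{t-1}]=\sum_{g\in G}P_t(g)\,\gft_t(g)$ turns the left-hand side into $\mathrm{(II)}$, so $\mathrm{(II)}\le 2\sqrt{T\ln K}$. Summing the two bounds and taking the supremum over $\cS$ gives $R_T(\shortfullfed)\le 2\sqrt{T\ln K}+T/(\sigma K)$, the first claimed inequality.

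Finally I would substitute $K=\bfl{\sqrt T}$: from $\ln K\le\tfrac12\ln T$ we get $2\sqrt{T\ln K}\le\sqrt2\,\sqrt{T\ln T}$, and from $\bfl{\sqrt T}\ge\tfrac12\sqrt T$ we get $T/(\sigma K)\le 2\sqrt T/\sigma\le(2/\sigma)\sqrt{T\ln T}$ whenever $\ln T\ge1$ (the finitely many smaller values of $T$ are trivial, since the regret never exceeds $T$). As $\sigma\le1$ implies $\sqrt2+2/\sigma\le4/\sigma$, the bound $R_T(\shortfullfed)\le\tfrac4\sigma\sqrt{T\ln T}$ follows.

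I expect the only genuinely delicate point to be the treatment of the adaptive (non-oblivious) adversary in the bound on $\mathrm{(II)}$: one must observe that Hedge's guarantee is really a pathwise statement about an \emph{arbitrary} reward sequence revealed causally, so the adversary's adaptivity costs nothing once we condition on $\cF_{t-1}$ and pass to expectations. Everything else is the concatenation of \Cref{claim:discretization} with a black-box regret bound, plus elementary estimates on $\bfl{\sqrt T}$.
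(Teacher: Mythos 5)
Your proof is correct and follows essentially the same route as the paper: insert the best-on-grid comparator to split the regret into a discretization term bounded by \Cref{claim:discretization} and a Hedge-on-$G$ term bounded by the standard $2\sqrt{T\ln K}$ guarantee, then tune $K=\lfloor\sqrt T\rfloor$. The only cosmetic differences are that you take the grid $\{0,\tfrac1{K-1},\dots,1\}$ (with $\delta(G)=\tfrac1{2(K-1)}\le\tfrac1K$) while the paper uses $\{\tfrac1K,\dots,1\}$ (with $\delta(G)=\tfrac1K$), and you spell out the reduction to a single-price comparator and the conditioning argument for the adaptive adversary that the paper delegates to \Cref{obs:twopriceobs} and a one-line remark.
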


 \begin{proof}
        We show how running \fullfed{} for the right choices grid of prices $G$ yields the desired result. As $\mathcal A$, we choose the Hedge algorithm for full information feedback \citep{freund1997decision}, while for any fixed $K\ge 2$, we consider the uniform grid $G$ on $[0,1]$ of the positive integer multiples of $\frac 1K$: $G=\left\{\frac{1}K, \frac 2K, \dots, 1\right\}$. For any $\sigma$-smooth adversary $\cS$, we have the following:
        \begin{align*}
            R_T(\shortfullfed, \cS) &= \max_{p \in [0,1]}\E{\sum_{t=1}^T \gft_t(p) - \sum_{t=1}^T \gft_t(P_t)} \pm \max_{p \in G}\E{\sum_{t=1}^T \gft_t(p)} \\
            &\le \max_{p \in G}\E{\sum_{t=1}^T \gft_t(p) - \sum_{t=1}^T \gft_t(P_t)} + \frac{T}{\sigma K}\\
            &\le 2 \sqrt{T \ln K} + \frac{T}{\sigma K}.
        \end{align*}
        Note that, in the first inequality we used \Cref{claim:discretization}, which holds for any (possibly adaptive) sequence of $\sigma$-smooth random variables, while in the second inequality, we used the well-known bound on the regret of Hedge---see, e.g., \citet[Theorem~2.5]{AroraHK12} with $\eta = \frac{1}{\sqrt T}$. 
        We remark that the last bound holds for any (possibly adversarial) realizations of the agents' valuations, in expectation with respect to the internal randomness of the algorithm.
    \end{proof}

\noindent We note here that the upper bound we achieved in \Cref{thm:full-upper} is tight in the time horizon, up to logarithmic terms. 
This follows from the fact that the distribution used in the $\Omega(\sqrt{T})$ lower bound in \cite[Theorem 3.3]{Nicolo21} is $\sigma$-smooth, for $\sigma \le \nicefrac{1}{4}$.

\subsection{Posting a single price in partial information.}
\label{sec:one-price}

\cite{Nicolo21} proved that sublinear regret is achievable with one price and partial information in the stochastic i.i.d.\ case, when seller and buyer distributions are smooth and independent of each other. 
They also showed that removing either the smoothness assumption or the independence of $S$ and $B$ leads to linear lower bounds. 
They did not, however, investigate whether the i.i.d.\ assumption could be lifted in a setting other than the classic adversarial one while still achieving sublinear regret. 
In contrast to the full information scenario above (and the one with two prices and partial feedback that we discuss later), we give a negative answer to this question. 
\begin{figure}[t]
    \centering
    \begin{tikzpicture}
    \def\k{0.45}
    \definecolor{myblue}{RGB}{25,175,255}
    \definecolor{myred}{RGB}{255,70,75}
    \def\colorOne{myblue}
    \def\colorTwo{myred}
    \fill[\colorOne] ({0*\k},{3*\k}) rectangle ({1*\k},{4*\k});
    \fill[\colorOne] ({4*\k},{5*\k}) rectangle ({5*\k},{6*\k});
    \fill[\colorOne] ({2*\k},{7*\k}) rectangle ({3*\k},{8*\k});
    \fill[\colorTwo] ({2*\k},{3*\k}) rectangle ({3*\k},{4*\k});
    \fill[\colorTwo] ({4*\k},{7*\k}) rectangle ({5*\k},{8*\k});
    \fill[\colorTwo] ({0*\k},{5*\k}) rectangle ({1*\k},{6*\k});
    \draw ({0*\k}, {8*\k}) -- ({8*\k}, {8*\k}) -- ({8*\k}, {0*\k});
    \draw[gray, dotted] ({0*\k}, {8*\k}) -- ({8*\k}, {0*\k});
    \draw[->] ({-2*\k}, {0*\k}) -- ({11*\k}, {0*\k}) node[below right] {$s$};
    \draw[->] ({0*\k}, -{1*\k}) -- ({0*\k}, {8.5*\k}) node[above left] {$b$};
    \draw (0,0) node[below left] {$0$}
        ({\k*8, 0}) node[below] {$1$}
        ({0, \k*8}) node[left] {$1$}
        ;
        \draw 
            ({\k/2}, {\k*7/2}) node {$Q_1$};
        \draw 
            ({\k*5/2}, {\k*15/2}) node {$Q_2$};
        \draw 
            ({\k*9/2}, {\k*11/2}) node {$Q_3$};
        \draw 
            ({\k*9/2}, {\k*15/2}) node {$Q_4$};
        \draw 
            ({\k*1/2}, {\k*11/2}) node {$Q_5$};
        \draw 
            ({\k*5/2}, {\k*7/2}) node {$Q_6$};
    \end{tikzpicture}
    \caption{The squares $Q_1,\dots,Q_6$ appearing in the proof of \Cref{thm:two-bit-one-price-lower}.}
    \label{fig:proof-lb-linear}
\end{figure}
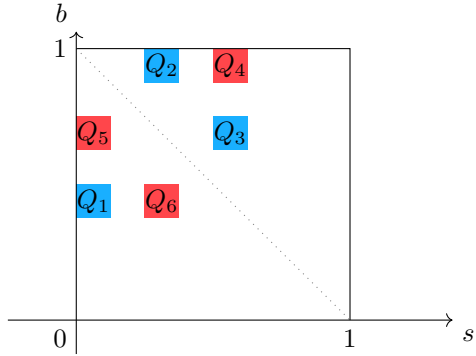
\begin{theorem}
\label{thm:two-bit-one-price-lower}
    Consider the problem of repeated bilateral trade against a $\sigma$-smooth adversary in the two-bit feedback model, for any $\sigma \le \tfrac{1}{64}$. Then any learning algorithm that posts a single price per time step suffers at least $\tfrac{T}{24}$ regret, even if $S_1, B_1, S_2, \dots$ is an independent family of random variables.
\end{theorem}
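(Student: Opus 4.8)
First, I would set up the following randomised family of instances. Split the seller axis into $s_1=[0,\tfrac18]$, $s_2=[\tfrac28,\tfrac38]$, $s_3=[\tfrac48,\tfrac58]$ and the buyer axis into $b_1=[\tfrac38,\tfrac48]$, $b_2=[\tfrac58,\tfrac68]$, $b_3=[\tfrac78,1]$, and fix the permutations $\pi_0=(1\mapsto1,\,2\mapsto3,\,3\mapsto2)$ and $\pi_1=(1\mapsto2,\,2\mapsto1,\,3\mapsto3)$ of $\{1,2,3\}$. For $\beta\in\{0,1\}$ and $i\in\{1,2,3\}$ put $C_i^\beta=s_i\times b_{\pi_\beta(i)}$; the six cells $C_1^0,C_2^0,C_3^0$ (the ``blue'' pattern) and $C_1^1,C_2^1,C_3^1$ (the ``red'' pattern) are exactly the squares $Q_1,\dots,Q_6$ of \Cref{fig:proof-lb-linear}. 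The adversary draws a colour $\beta\sim\mathrm{Unif}\{0,1\}$ and an i.i.d.\ sequence $i_1,\dots,i_T\sim\mathrm{Unif}\{1,2,3\}$, all mutually independent, and at round $t$ samples $(S_t,B_t)$ uniformly on $C_{i_t}^\beta$. Since each cell is an axis-aligned $\tfrac18\times\tfrac18$ square, the uniform law on it has density $64$ and is $\sigma$-smooth for every $\sigma\le\tfrac1{64}$, and conditionally on $(\beta,i_{1:T})$ the family $(S_1,B_1,\dots,S_T,B_T)$ is independent (independent draws from products of uniform laws on intervals). Hence each realisation of $(\beta,i_{1:T})$ is a legal oblivious adversary $\mathcal S_{\beta,i_{1:T}}$ of exactly the kind allowed in the statement, and for any single-price algorithm $\mathcal A$ it suffices to lower bound $\bbE_{\beta,i_{1:T}}\bsb{R_T(\mathcal A,\mathcal S_{\beta,i_{1:T}})}$.

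The key lemma I would establish is \emph{colour-blindness of the feedback}: for every posted price $p\in[0,1]$, the conditional law of the pair $\brb{\ind{S_t\le p},\ind{p\le B_t}}$ (given that $p$ is posted) is the same whether $\beta=0$ or $\beta=1$. Writing $c_i(p)=\P{S\le p}$ for $S\sim\mathrm{Unif}(s_i)$ and $d_j(p)=\P{p\le B}$ for $B\sim\mathrm{Unif}(b_j)$, the within-cell independence of $S_t,B_t$ makes this conditional law equal to $\tfrac13\sum_{i=1}^3\mathrm{Bern}\brb{c_i(p)}\otimes\mathrm{Bern}\brb{d_{\pi_\beta(i)}(p)}$. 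The intervals are arranged so that for $p\le\tfrac38$ all $d_j(p)=1$, for $p\ge\tfrac58$ all $c_i(p)=1$, and for $\tfrac38<p<\tfrac58$ one has $c_1(p)=c_2(p)=d_2(p)=d_3(p)=1$ (so the only factors that can lie strictly in $(0,1)$ are $c_3$ and $d_1$); in each of these three cases the multiset $\bcb{\brb{c_i(p),\,d_{\pi_0(i)}(p)}:i=1,2,3}$ coincides with $\bcb{\brb{c_i(p),\,d_{\pi_1(i)}(p)}:i=1,2,3}$ --- because the buyer factors that move with $\beta$ are either all $\mathrm{Bern}(1)$ themselves or are always paired with a seller factor equal to $\mathrm{Bern}(1)$ --- so the two sums of product measures are identical. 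An induction over rounds then upgrades this to: for \emph{any} single-price algorithm, the full transcript $(P_1,Z_1,\dots,P_T,Z_T)$, hence the marginal law $\rho_t$ of each posted price $P_t$, has the same distribution under $\beta=0$ and $\beta=1$.

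To finish, I would compute the per-round expected gains. Let $\gft^\beta(p)=\tfrac13\sum_{i=1}^3\E{\gft(p,p,S,B)}$ with $(S,B)\sim\mathrm{Unif}(C_i^\beta)$, and $g=\tfrac12(\gft^0+\gft^1)=\tfrac16\sum_{k=1}^6\gft_{Q_k}$, where $\gft_{Q_k}(p)$ is the expected gain from trade of price $p$ when $(S,B)$ is uniform on $Q_k$. A direct calculation --- each cell $s\times b$ contributes its constant value $\E{B-S}$ on the price band separating its seller and buyer intervals, tapering to $0$ just outside --- gives $\gft^0(\tfrac38)=\gft^1(\tfrac58)=\tfrac13$, so the benchmark $\max_{(p,q)\in\cU}\sum_t\bbE\bcb{\gft_t(p,q)}$ of $R_T(\mathcal A,\mathcal S_{\beta,i_{1:T}})$ is at least $\sum_t\gft_{C_{i_t}^0}(\tfrac38)$ when $\beta=0$ and at least $\sum_t\gft_{C_{i_t}^1}(\tfrac58)$ when $\beta=1$, each of expectation $\tfrac T3$ over $i_{1:T}$; and inspecting the overlaps of the six cells' gain curves gives $\max_p\sum_{k=1}^6\gft_{Q_k}(p)=\tfrac74$, i.e.\ $\max_p g(p)=\tfrac7{24}$. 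On the reward side, since $i_t$ is a fresh uniform draw we have $\E{\gft_t(P_t)\mid P_t=p,\,\beta}=\gft^\beta(p)$, and by colour-blindness the law of $P_t$ is the $\beta$-free $\rho_t$; hence the $\beta$-average of the learner's expected reward equals $\sum_t\int g\,\dif\rho_t\le\tfrac7{24}T$. Combining the two bounds,
\[
    R_T(\mathcal A)\;\ge\;\bbE_{\beta,i_{1:T}}\bsb{R_T(\mathcal A,\mathcal S_{\beta,i_{1:T}})}\;\ge\;\frac T3-\frac{7T}{24}\;=\;\frac T{24}.
\]

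The main obstacle will be the colour-blindness lemma, and in particular checking it at \emph{every} price --- including those for which some $c_i(p)$ or $d_j(p)$ lies strictly in $(0,1)$ --- while simultaneously keeping the two colours' optimal prices a constant apart and each cell a single $\tfrac18\times\tfrac18$ square (so that $\sigma$-smoothness survives up to $\sigma=\tfrac1{64}$). Lifting the per-round feedback identity to a transcript-level identity is exactly what turns the reduction into a genuinely \emph{linear}, rather than merely sublinear, lower bound; for this construction the elementary round-by-round induction above should suffice, but in related settings one may instead need the information-theoretic machinery mentioned in the introduction.
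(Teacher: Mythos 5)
Your proposal is correct and reaches the paper's $T/24$ bound using the same six squares $Q_1,\dots,Q_6$ and the same blue/red dichotomy, but it takes a genuinely different route. The paper's proof is short because it invokes Theorem~4.3 of \citet{Nicolo21} as a black box: that cited theorem already contains the two-bit-feedback indistinguishability of the blue and red \emph{mixture} distributions and the resulting $T/24$ regret gap. The paper's own addition is the derandomization step, inequality~\eqref{eq:magic}: the mixture adversary (under which $S$ and $B$ are dependent) can be realised as a uniform draw from the family $F^{\text{color}}$ of oblivious adversaries that sample a fresh square each round, so the linear lower bound transfers to an independence-respecting instance. You instead build the per-round-square adversary directly---so independence holds by construction and the derandomization step evaporates---and re-derive both ingredients of the cited theorem from scratch: your colour-blindness lemma (verified in the three ranges $p\le\tfrac38$, $\tfrac38<p<\tfrac58$, $p\ge\tfrac58$, where in the middle range the only factors that can lie strictly in $(0,1)$ are $c_3$ and $d_1$ and both sit opposite a $\mathrm{Bern}(1)$ factor) is exactly the indistinguishability content of that theorem, and your explicit computation $\gft^0(\tfrac38)=\gft^1(\tfrac58)=\tfrac13$ and $\max_p g(p)=\tfrac7{24}$ reproduces the $\tfrac T3-\tfrac{7T}{24}=\tfrac T{24}$ gap. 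Both proofs are valid; yours is longer but self-contained and makes the mechanism of the lower bound visible rather than hidden behind a citation. Two small things worth making explicit when writing it out: in step~7 you implicitly use Observation~\ref{obs:twopriceobs} to replace $\max_{(p,q)\in\cU}$ by a max over a single price, and the reward upper bound relies on $i_t$ being independent of the transcript $(P_1,Z_1,\dots,P_{t-1},Z_{t-1},P_t)$ given $\beta$, which holds because the adversary is oblivious and the $i_t$ are i.i.d.\ --- both facts are baked into your construction but deserve a sentence.
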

\begin{proof}
Consider the following six squares, depicted in \Cref{fig:proof-lb-linear}:
\begin{align*}
    Q_1 
& =  
    \lsb{ 0, \,\frac{1}{8} } \times \lsb{ \frac{3}{8}, \,\frac{1}{2} }
\;, &  
    Q_2
& =  
    \lsb{ \frac{1}{4}, \,\frac{3}{8} } \times \lsb{ \frac{7}{8}, \,1}
\;, &
    Q_3
& =
    \lsb{ \frac{1}{2}, \,\frac{5}{8} } \times \lsb{ \frac{5}{8}, \, \frac{3}{4} } \;,
\\
    Q_4 
& =  
    \lsb{ \frac{1}{2}, \,\frac{5}{8} } \times \lsb{ \frac{7}{8}, \,1}
\;, &  
    Q_5
& =  
    \lsb{ 0, \,\frac{1}{8} } \times \lsb{ \frac{5}{8}, \,\frac{3}{4}}
\;, &
    Q_6
& =
    \lsb{ \frac{1}{4}, \,\frac{3}{8} } \lsb{ \frac{3}{8}, \,\frac{1}{2} } \;,
\end{align*}
To each square $Q_i$, we associate a uniform probability distribution over it: we say that the random valuations $(S,B)$ are distributed uniformly over $Q_i$ under $\Pb^i$ and $\mathbb{E}^i$, for each $i = 1,\dots,6$. Starting from these distributions, we construct two other distributions: the ``red'' one and the ``blue'' one. When $(S,B)$ is sampled from the blue one, it is sampled u.a.r. from the union of the blue squares: ($Q_1,Q_2$ and $Q_3$). In formula, the probability measure $\Pb^{\textrm{blue}}$ is just a uniform mixture of $\Pb^1$, $\Pb^2$ and $\Pb^3$. The same can be done for the red distribution over the red squares ($Q_4,Q_5$ and $Q_6$). Note that both the red and the blue distributions are $\frac{1}{64}$ smooth.

From \citet[Theorem~4.3]{Nicolo21}, we know that any learning algorithm $\cA$ that can only post one price $P_t$ suffers linear regret against at least one of the following i.i.d. instance: the adversary chooses at the beginning of time either the red or the blue distribution and extracts valuations from it i.i.d. over the rounds. In formula:
\begin{equation}
\label{eq:oldlower}
    \max_{\textrm{color} \in  \{\textrm{blue},\textrm{red}\}} \bbrb{ \max_{p \in [0,1]} \sum_{t=1}^T \bbE^{\textrm{color}} \bsb{ \gft_t(p) - \gft_t(P_t) } }
\ge
    \frac1{24}T.
\end{equation}
We cannot use directly this construction for our result, as seller and buyer valuations are not independent in the blue and red distributions. However, we can exploit the non i.i.d. structure of the smooth adversary, to generate an equivalent random sequence of smooth distributions such that each one of them has {\em independent} seller and buyer valuations.  

Consider the following family $F$ of $1/64$-smooth oblivious adversaries: each $\cS$ of them is characterized by a color red or blue, and a sequence $\{i_t\}$ of $T$ indices, where red adversaries have $i_t \in \{4,5,6\}$ and blue adversaries have $i_t \in \{1,2,3\}.$ We denote with $F^{\textrm{red}}$ the set of all such adversaries and with $F^{\textrm{blue}}$ the blue ones. Any $\cS$ in the sequence generates the valuations as follows: $(S_t,B_t)$ is drawn independently and uniformly at random from $Q_{i_t}$. Note that any $\cS \in F$ 
 enjoys the property that the distribution chosen at each time step has independent seller and buyer. We argue that any learning algorithm $\cA$ suffers linear regret against at least one of these adversaries. In formula:
\begin{align}
    \nonumber
    R_T(\cA) &\ge \max_{\cS \in F}\left[\max_{p\in[0,1]}\left(\sum_{t=1}^T \mathbb{E}^{i_t}\left[\gft_t(p) - \gft_t(P_t)\right]\right)\right]\\
    \nonumber
    &= \max_{\textrm{color} \in \{\textrm{red},\textrm{blue}\}}\max_{\cS \in F^{\textrm{color}}}\left[\max_{p\in[0,1]}\left(\sum_{t=1}^T \mathbb{E}^{i_t}\left[\gft_t(p) - \gft_t(P_t)\right]\right)\right]
    \\
    \label{eq:magic}
    &\ge \max_{\textrm{color} \in \{\textrm{red},\textrm{blue}\}}\left[\max_{p\in[0,1]}\left(\sum_{t=1}^T \mathbb{E}^{\textrm{color}}\left[\gft_t(p) - \gft_t(P_t)\right]\right)\right]
\end{align}
Note that the $i_t$ are the indices induced by $\cS$. The previous inequality, combined with \Cref{eq:oldlower} concludes the proof. The only delicate step we need to clarify is the last inequality in \Cref{eq:magic}. To this end, fix any color, let's say red (same argument holds for blue). 
The regret of $\cA$ against the worst sequence in $F^{\textrm{red}}$ is at least the expected regret of $\cA$ against a randomized adversary that is obtained by drawing u.a.r. $\cS$ from $F^{\textrm{red}}$ (note that the adversaries in $F^{\textrm{red}}$ are oblivious).
Now, the crucial argument is that the sequence of valuations $(S_t,B_t)$ obtained by choosing u.a.r. an adversary $\cS$ from $F^{\textrm{red}}$ follows the exact same distribution as drawing $(S_t,B_t)$ i.i.d. from the red distribution. In fact, the valuations at different steps are independent and every square has the same probability of being chosen at each time step.
\end{proof}

\section{A \texorpdfstring{$T^{3/4}$}{Omega(T 3/4)} lower bound: two bits and two prices}
\label{sec:lowerbound}

In this section, we present the main contribution of this paper: an unexpected and intriguing lower bound of order $T^{3/4}$. This result has two notable implications.
First, it provides a formalization to the intuition that \emph{partial} feedback (both one- and two-bit models) is strictly less informative than the \emph{bandit} feedback, being the regret of the latter of order at most $T^{2/3}$.
Second, noting that the hard instances constructed in the proof of \Cref{thm:two-bit-two-prices-lower} are i.i.d., we solve an open problem in \cite{Nicolo21}, where it was erroneously conjectured that the correct minimax rate was $T^{2/3}$.

\begin{theorem}
\label{thm:two-bit-two-prices-lower}
Consider the problem of repeated bilateral trade against a $\sigma$-smooth adversary in the two-bit feedback model, for any $\sigma \le \tfrac 19$. If $T \ge 8008$, then any learning algorithm $\cA$ that posts two prices per time step suffers at least a regret of
\[
    R_T(\cA)
\ge
    \frac{1}{50^3}T \;.
\]
\end{theorem}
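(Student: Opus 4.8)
The plan is to build a reduction from online learning with feedback graphs, following the high-level sketch in the introduction, and then to instantiate the feedback-graph lower bound carefully. First I would fix a parameter $K$ (to be optimized at the end; I expect $K = \Theta(T^{1/4})$) and construct a family of $\sigma$-smooth distributions supported on a carefully chosen collection of small axis-aligned rectangles in $[0,1]^2$. The rectangles come in two groups of size $K$: ``exploring'' configurations and ``exploiting'' configurations. The geometry is arranged so that there are $K$ disjoint ``exploiting price intervals'' on the diagonal, and for each index $j \in [K]$ there is an exploring price pair $(p_j, q_j) \in \cU$ with $p_j < q_j$ whose two-bit feedback $\brb{\I\{s_t \le p_j\}, \I\{q_j \le b_j\}}$ reveals information about which exploiting interval is ``good,'' while posting any exploiting (diagonal) price reveals essentially nothing. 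Crucially, I need the distributions to be genuinely $\sigma$-smooth for $\sigma \le 1/9$; this constrains how small the rectangles can be (each rectangle of side $\ell$ supports smoothness parameter $\sigma = \ell^2$ for a uniform-on-rectangle law, so I will take rectangles of side $\ge 1/3$, which is why only a bounded number of ``shapes'' are available and the construction has to reuse translated/rescaled copies of a bounded-complexity gadget).

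Second I would set up the hard instance as an i.i.d.\ mixture: a hidden index $J$ is drawn uniformly from $[K]$, and conditioned on $J = j$ the per-round valuation distribution $\mu_j$ is a fixed $\sigma$-smooth law (independent across rounds) designed so that (i) the gain-from-trade-optimal single price lies in exploiting interval $j$ and yields a fixed advantage $\Delta = \Theta(1)$ over all other exploiting intervals, and (ii) the only way to gain Kullback--Leibler information distinguishing $\mu_j$ from $\mu_{j'}$ is to post the exploring pair $(p_i, q_i)$ for $i \in \{j, j'\}$ — and doing so is costly, since exploring pairs earn gain from trade bounded away from the optimum by a constant. I would then invoke the standard feedback-graph / needle-in-haystack argument: to identify $J$ with constant probability, the learner must play each of the $K$ exploring actions $\Omega(1)$ times (a per-index KL / Pinsker computation using \Cref{t:inverse-transformation-method-2} to translate the two-bit feedback into a clean information-theoretic statement), costing $\Omega(K)$ regret from the exploration itself; and if it plays exploring actions a total of $o(T)$ times, then on a constant fraction of the remaining rounds it cannot have localized $J$ and pays $\Omega(\Delta)$ per round, for $\Omega(T)$ regret. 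Balancing $K$ against $T$ — here the point is that $K$ is a free design parameter — one simply takes $K$ as large as the smoothness budget and the constants allow; since each exploring action only needs to be played a constant number of times and still produces only a constant total regret while the non-exploration alternative is already $\Omega(T)$, the bound is linear in $T$ regardless. The explicit constant $1/50^3$ and the threshold $T \ge 8008$ come from tracking the (deliberately loose) constants through Pinsker's inequality, the KL chain rule over rounds, and the union bound over the $K$ indices.

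The third and most delicate step — and the one I expect to be the main obstacle — is verifying that the two-bit feedback really does behave like a feedback graph on this gadget: namely, that posting any exploiting (diagonal) price, or any price pair other than the designated exploring ones, yields a feedback distribution that is \emph{identical} under all $\mu_j$ (so it carries zero information about $J$), while still being consistent with $\sigma$-smoothness and with the claimed $\Theta(1)$ gap in gain from trade. Making the ``observation $=$ zero information'' property exact (not just approximate) requires the rectangles to be aligned so that every non-exploring price cuts through all the $\mu_j$'s in the same combinatorial pattern; this is a constraint-satisfaction problem on the $O(1)$-many rectangle positions, and its feasibility for $\sigma \le 1/9$ is precisely what pins down the numbers $1/9$ and $8008$. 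Secondary obstacles: handling adaptive (not just oblivious) algorithms — standard, via conditioning on the algorithm's internal randomness and a minimax / Yao argument since the hard instance is a fixed distribution over oblivious i.i.d.\ adversaries; and ensuring the gain-from-trade gap $\Delta$ survives the smoothness requirement (a uniform-on-rectangle mixture washes out sharp gaps, so I would use a mixture of a uniform ``floor'' component with weight $\sigma$ and a structured component, exactly as in the $\sigma$-smoothness construction elsewhere in the paper). Once these are in place, the regret lower bound follows by the two-case dichotomy above, and plugging the loose constants gives $R_T(\cA) \ge T/50^3$ for $T \ge 8008$.
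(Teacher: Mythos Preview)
The displayed statement has a typo: the paper actually proves $R_T(\cA)\ge \frac{1}{50^3}T^{3/4}$ (and the matching upper bound in \Cref{thm:one-bit-two-prices-upper} rules out anything linear). You took the typo at face value and sketched a linear lower bound; the argument cannot work, and the place it breaks is your gap/sample-complexity accounting.

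You posit a $\Theta(1)$ gap between the optimal exploiting action and the other exploiting ones, together with the claim that posting any exploiting (diagonal) price yields a feedback distribution identical under all $\mu_j$. Under two-bit feedback these two requirements are incompatible: at a diagonal price $p$ the feedback is $\brb{\I\{S\le p\},\I\{p\le B\}}$, and a $\Theta(1)$ change in $\bbE[\gft(p,S,B)]$ forces a $\Theta(1)$ change in the distribution of those bits. The paper resolves this by designing a \emph{small} perturbation $g_{v,\e}$ with $\e=\Theta(T^{-1/4})$ that integrates to zero over every quadrant determined by any $(p,q)\notin Q_6$ (this is \Cref{cl:perturbation}, precisely the ``zero information outside the exploring region'' step you flag as the crux), but therefore can only shift the expected gain from trade by $\Theta(\e)$ at the optimal exploiting price (\Cref{cl:cost_sub}). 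The $\Theta(1)$ gap is between exploring actions in $Q_6$ and all exploiting actions (\Cref{cl:cost_expl}), not among exploiting actions.

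With the forced $\Theta(\e)$ exploiting gap, distinguishing the good arm via its exploring action requires $\Theta(1/\e^2)$ pulls, not $\Theta(1)$ (this is where the KL/Pinsker computation enters, \Cref{claim:finale}). The regret dichotomy is then: exploring all $K$ arms costs $\Theta(K/\e^2)$ at $\Theta(1)$ regret per exploration round, while not exploring costs $\Theta(\e T)$; combined with the geometric constraint $K\e=\Theta(1)$ (the $K$ perturbation centers tile the interval $[\tfrac13,\tfrac12]$), balancing gives $K=\Theta(T^{1/4})$, $\e=\Theta(T^{-1/4})$, regret $\Theta(T^{3/4})$. Your ``$\Theta(1)$ pulls each, else $\Omega(T)$'' dichotomy skips the balance and yields a false bound. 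A secondary error: your smoothness reasoning (rectangles of side $\ge 1/3$, hence $K=O(1)$) is too restrictive; the perturbation has bounded \emph{density}, not bounded support, so adding it to the base density preserves $1/9$-smoothness however thin the rectangles $R^i_{v,\e}$ are, which is exactly what lets $K$ grow with $T$.
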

\begin{proof}
We prove this result in several steps: in \Cref{proof:step1} we begin by constructing a hard instance of the repeated bilateral trade learning problem, then in \Cref{proof:step2} we present a related (easier) learning problem (\gener{})and, finally, in \Cref{proof:step3} we show that the minimax regret of the latter (and therefore, the former) is at least $\Omega(T^{3/4})$. The proofs of some of the intermediate Lemmata are postponed to \Cref{app:lowerbound}.

    \subsection{The construction of a hard family of adversaries}
    \label{proof:step1}
    
    Fix any $\sigma \in(0, \nicefrac19]$ and $T \ge 8008$. Our lower bound is based on a family of i.i.d. $\sigma$-smooth adversaries for the repeated bilateral trade learning problem. Under each such adversary the valuations $(S_t, B_t)$ are drawn i.i.d.\ according to a fixed distribution, obliviously of the actions of the learner.\footnote{This assumption makes our result even stronger: restricting the adversary can never make the lower bound bigger.} Formally, for any algorithm $\cA$ of the learner, we prove the existence of an i.i.d. $\sigma$-smooth adversary such that the following lower bound on the regret holds:
    \[
        \max_{p\in [0,1]} T \cdot \bbE\bsb{ \gft(p,S,B)}
        -
        \bbE \lsb{ \sum_{t=1}^T \gft(P_t,Q_t,S_t,B_t) }
        \ge
        \frac{1}{50^3} T^{3/4} \;.
    \]
    For the sake of convenience, each adversary is identified by a probability measure over $[0,1]^2$, from which the random valuations are drawn. These probability measures are absolutely continuous with respect to the Lebesgue measure and are obtained by suitable perturbations over a base distribution $f$, whose support is given by 
    the union of the six squares $Q_1, \dots, Q_6$ (see \Cref{fig:LB-six-squares}, left): 
    \begin{align*}
    Q_1 & = \textstyle{ \lsb{0, \frac{1}{6}} \times \left[ \frac{1}{3}, \frac{1}{2} \right) } \;, 
    & Q_2 & = \textstyle{ \lsb{0, \frac{1}{6}} \times \lsb{ \frac{1}{2}, \frac{2}{3} } } \;,
    & Q_3 & = \textstyle{ \lsb{ 0, \frac16 } \times \lsb{ \frac56, 1 } } \;,
    \\
    Q_4 & = \textstyle{ \lsb{ \frac56, 1 } \times \lsb{ \frac56, 1 } } \;,
    & Q_5 & = \textstyle{ \lsb{ \frac56, 1 } \times \lsb{ 0, \frac16 } } \;,
    & Q_6 & = \textstyle{ \lsb{ \frac13, \frac12 } \times \lsb{ \frac23, \frac56 } } \;.
    \end{align*}
\def\v{1/3 + 3/48}%
\def\eps{1/48}%
\def\vme{\v - \eps}%
\def\vpe{\v + \eps}%
\def\cOne{2.5}%
\def\cTwo{30}%
\def\cThree{20}%
\begin{figure}
\centering
\begin{tikzpicture}[scale = 3]
\draw[gray!25!white, very thin] 
({\vme},4/6) -- ({\vme}, 0)
({\v},4/6) -- ({\v}, 0)
({\vpe},4/6) -- ({\vpe}, 0)
;
\draw [very thin, mygreen!50!black, fill = mygreen]
    (0,1/3) rectangle (1/6,1/2)
    (0,1/2) rectangle (1/6,2/3)
    (0,5/6) rectangle (1/6,1)
    (5/6,5/6) rectangle (1,1)
    (5/6,0) rectangle (1,1/6)
    (1/3,2/3) rectangle ({\vme},5/6)
    ({\vpe},2/3) rectangle (1/2,5/6)
;
\draw [very thin, myblue!50!black, fill = myblue]
    ({\vme},2/3) rectangle ({\v},3/4)
    ({\v},3/4) rectangle ({\vpe},5/6)
;
\draw [very thin, myred!50!black, fill = myred]
    ({\vme},3/4) rectangle ({\v},5/6)
    ({\v},2/3) rectangle ({\vpe},3/4)
;
\draw (0,1) -- (1,1) -- (1,0);
\draw [<->] (0,1.1) -- (0,0) -- (1.1,0);
\draw (0, {(2/6+3/6)/2}) node[left] {$Q_1$}
    (0, {(3/6+4/6)/2}) node[left] {$Q_2$}
    (0, {(5/6+6/6)/2}) node[left] {$Q_3$}
    ({(5/6+1)/2}, 5/6) node[below] {$Q_4$}
    ({(5/6+1)/2}, {1/6}) node[above] {$Q_5$}
    (3/6, {(4/6+5/6)/2}) node[right] {$Q_6$}
;
\draw (0,0) -- (0,-0.03) node[below] {$0$}
({\v},0) -- ({\v},-0.03) node[below] {$v$}
(2/3,-0.03) node[below,white] {$\vphantom{\nicefrac{2}{3}}$}
(1,0) -- (1,-0.03) node[below] {$1$}
(0,0) -- (-0.03,0)
(0,1) -- (-0.03,1)
;
\draw[very thick, gray] 
(1/3,2/3) rectangle (1/2,5/6)
;
\draw[very thick, gray] 
(1/2,2/3) -- ({1+2/3},0)
(1/2,5/6) -- ({1+2/3},1)
;
\draw[very thin, mygreen, fill = mygreen]
    ({1+2/3},0) rectangle ({1+2/3+12*\eps},1)
    ({1+2/3+24*\eps},0) rectangle ({1+2/3+48*\eps},1)
;
\draw [very thin, myblue, fill = myblue]
({1+2/3+12*\eps},0) rectangle ({1+2/3+18*\eps},1/2)
({1+2/3+18*\eps},1/2) rectangle ({1+2/3+24*\eps},1)
;
\fill [myred]
({1+2/3+12*\eps},1/2) rectangle ({1+2/3+18*\eps},1)
({1+2/3+18*\eps},0) rectangle ({1+2/3+24*\eps},1/2)
;
\draw[very thin, gray]
({1+2/3+12*\eps},1/2)
-- ({1+2/3+24*\eps},1/2)
({1+2/3+12*\eps},1)
-- ({1+2/3+12*\eps},1-1/8)
({1+2/3+12*\eps},1/2+1/7)
-- ({1+2/3+12*\eps},1/2-1/7)
({1+2/3+12*\eps},1/8)
-- ({1+2/3+12*\eps},0)
({1+2/3+18*\eps},1)
-- ({1+2/3+18*\eps},0)
({1+2/3+24*\eps},1)
-- ({1+2/3+24*\eps},1-1/8)
({1+2/3+24*\eps},1/2+1/7)
-- ({1+2/3+24*\eps},1/2-1/7)
({1+2/3+24*\eps},1/8)
-- ({1+2/3+24*\eps},0)
;
\draw[very thick, gray] 
(1+2/3, 0) rectangle (2+2/3, 1)
;
\draw 
({1+2/3+12*\eps},3/4) node {$R^1_{v,\e}$}
({1+2/3+12*\eps},1/4) node {$R^2_{v,\e}$}
({1+2/3+24*\eps},3/4) node {$R^3_{v,\e}$}
({1+2/3+24*\eps},1/4) node {$R^4_{v,\e}$}
(2+2/3, 1/2) node[right] {\LARGE$Q_6$}
;
\end{tikzpicture}
\ 
\begin{tikzpicture}[scale = 3
, 
declare function={
egft(\p,\a)
= 
1 / ( 6*( 1 + 8*\a ) ) * (
    (\p < 0) * (0)
    + and(\p >= 0, \p <= 1/6) 
        * ( 3*\p*( 5 + 29*\a - 6*( 1 + 3*\a )*\p ) )
    + and(\p > 1/6, \p <= 1/2) 
        * ( 2 + 13*\a )
    + and(\p > 1/2, \p <= 2/3) 
        * ( -18*\a*\p*\p + 3*\a*\p + 2*( 1 + 8*\a ) )
    + and(\p > 2/3, \p <= 5/6) 
        * ( -18*\p*\p + 15*\p + 10*\a )
    + and(\p > 5/6, \p <= 1) 
        * ( 72*\a*\p*( 1 - \p ) )
    + (\p > 1) * (0)
)
;
tent(\u,\r,\x)
=
(\x < \u - \r) * (0)
+ and(\x >= \u - \r, \x <= \u) 
        * ( 1 + ( \x - \u )/\r )
+ and(\x > \u, \x <= \u + \r)
    * ( 1 - ( \x - \u )/\r )
+ (\x > \u + \r) * (0)
;
}
]
\draw[gray!25!white, very thin] 
({\vme},{\cOne* egft(1/2, 2*ln(27/16)) + \cTwo* (\eps/12) + 0.1}) -- ({\vme}, 0)
({\v},{\cOne* egft(1/2, 2*ln(27/16)) + \cTwo* (\eps/12)}) -- ({\v}, 0)
({\vpe},{\cOne* egft(1/2, 2*ln(27/16)) + \cTwo* (\eps/12) + 0.1 }) -- ({\vpe}, 0)
(0, {\cOne* egft(1/2, 2*ln(27/16))}) -- (1/6, {\cOne* egft(1/2, 2*ln(27/16))})
(1/2, {\cOne* egft(1/2, 2*ln(27/16))}) -- (1, {\cOne* egft(1/2, 2*ln(27/16))})
(0, {\cOne* egft(1/2, 2*ln(27/16)) + \cTwo* (\eps/12)}) -- (1, {\cOne* egft(1/2, 2*ln(27/16)) + \cTwo* (\eps/12)})
(2/3, {\cOne* egft(2/3, 2*ln(27/16))}) -- (2/3,0)
(2/3, {\cOne* egft(2/3, 2*ln(27/16))}) -- (0,{\cOne* egft(2/3, 2*ln(27/16))})
;
beautified plot
\draw[red] 
    (0,0) -- (1/6,
        {
        \cOne* egft(1/6, 2*ln(27/16))
        + \cTwo* (\eps/12)*tent(\v,\eps,1/6) 
        + \cThree* (\eps*\eps)*tent(3/4,1/12,1/6) 
        }) --
    ({\vme},
        {
        \cOne* egft(1/6, 2*ln(27/16))
        + \cTwo* (\eps/12)*tent(\v,\eps,1/6) 
        + \cThree* (\eps*\eps)*tent(3/4,1/12,1/6) 
        }) --
    ({\v},
        {
        \cOne* egft(\v, 2*ln(27/16))
        + \cTwo* (\eps/12)*tent(\v,\eps,\v) 
        + \cThree* (\eps*\eps)*tent(3/4,1/12,\v) 
        }) --
    ({\vpe},
        {
        \cOne* egft(1/6, 2*ln(27/16))
        + \cTwo* (\eps/12)*tent(\v,\eps,1/6) 
        + \cThree* (\eps*\eps)*tent(3/4,1/12,1/6) 
        }) --
    (1/2,
        {
        \cOne* egft(1/6, 2*ln(27/16))
        + \cTwo* (\eps/12)*tent(\v,\eps,1/6) 
        + \cThree* (\eps*\eps)*tent(3/4,1/12,1/6) 
        }) --
    (2/3,
        {
        \cOne* egft(2/3, 2*ln(27/16))
        + \cTwo* (\eps/12)*tent(\v,\eps,2/3) 
        + \cThree* (\eps*\eps)*tent(3/4,1/12,2/3) 
        }) --
    (3/4,
        {
        \cOne* egft(3/4, 2*ln(27/16))
        + \cTwo* (\eps/12)*tent(\v,\eps,3/4) 
        + \cThree* (\eps*\eps)*tent(3/4,1/12,3/4) 
        }) --
    (5/6,
        {
        \cOne* egft(5/6, 2*ln(27/16))
        + \cTwo* (\eps/12)*tent(\v,\eps,5/6) 
        + \cThree* (\eps*\eps)*tent(3/4,1/12,5/6) 
        }) --
    (1,0);
\draw[densely dotted, thick] 
    (0,0) -- (1/6,
        {
        \cOne* egft(1/6, 2*ln(27/16))
        + \cTwo* (\eps/12)*tent(\v,\eps,1/6) 
        + \cThree* (\eps*\eps)*tent(3/4,1/12,1/6) 
        }) --
    (1/2,
        {
        \cOne* egft(1/6, 2*ln(27/16))
        + \cTwo* (\eps/12)*tent(\v,\eps,1/6) 
        + \cThree* (\eps*\eps)*tent(3/4,1/12,1/6) 
        }) --
    (2/3,
        {
        \cOne* egft(2/3, 2*ln(27/16))
        + \cTwo* (\eps/12)*tent(\v,\eps,2/3) 
        + \cThree* (\eps*\eps)*tent(3/4,1/12,2/3) 
        }) --
    (5/6,
        {
        \cOne* egft(5/6, 2*ln(27/16))
        + \cTwo* (\eps/12)*tent(\v,\eps,5/6) 
        + \cThree* (\eps*\eps)*tent(3/4,1/12,5/6) 
        }) --
    (1,0);
\draw [<->] (0,1.1) -- (0,0) -- (1.1,0) node[right] {$p$};
\draw (0,0) -- (0,-0.03) node[below] {$0$}
({\v},0) -- ({\v},-0.03) node[below] {$v$}
(2/3,0) -- (2/3,-0.03) node[below] {$\nicefrac{2}{3}$}
(1,0) -- (1,-0.03) node[below] {$1$}
(1.01, {\cOne* egft(1/2, 2*ln(27/16)) + \cTwo* (\eps/12)}) 
    -- (1.03, {\cOne* egft(1/2, 2*ln(27/16)) + \cTwo* (\eps/12)}) 
    -- (1.03, {\cOne* egft(1/2, 2*ln(27/16))}) 
    -- (1.01, {\cOne* egft(1/2, 2*ln(27/16))})
(1.03, {\cOne* egft(1/2, 2*ln(27/16)) + 0.5* \cTwo* (\eps/12)}) node[right] {$\Theta(\e)$}
({\vme},{\cOne* egft(1/2, 2*ln(27/16)) + \cTwo* (\eps/12) + 0.11})
-- ({\vme},{\cOne* egft(1/2, 2*ln(27/16)) + \cTwo* (\eps/12) + 0.13}) -- ({\vpe},{\cOne* egft(1/2, 2*ln(27/16)) + \cTwo* (\eps/12) + 0.13}) -- ({\vpe},{\cOne* egft(1/2, 2*ln(27/16)) + \cTwo* (\eps/12) + 0.11})
({\v}, {\cOne* egft(1/2, 2*ln(27/16)) + \cTwo* (\eps/12) + 0.13}) node[above] {$\Theta(\e)$}
(-0.01, {\cOne* egft(1/2, 2*ln(27/16))}) 
    -- (-0.03, {\cOne* egft(1/2, 2*ln(27/16))}) 
    -- (-0.03, {\cOne* egft(2/3, 2*ln(27/16))}) 
    -- (-0.01, {\cOne* egft(2/3, 2*ln(27/16))})
(-0.03, {(\cOne* egft(1/2, 2*ln(27/16)) + \cOne* egft(2/3, 2*ln(27/16)))/2}) node[left] {$\Theta(1)$}
;
\end{tikzpicture}
\caption{\footnotesize Left/center: The six squares $Q_1, \dots, Q_6$ (in green) are the support of the base density $f$, and the four rectangles $R^1_{v,\e},\dots, R^4_{v,\e}$ (in red and blue) inside $Q_6$ are the regions where the density is perturbed with $g_{v,\e}$. 
Right: The corresponding qualitative plots of $p \mapsto \bbE[\gft(p,S,B)]$ (black, dotted) and $p \mapsto \bbE^{v,\e}[\gft(p,S,B)]$ (red, solid).%
}
\label{fig:LB-six-squares}
\end{figure}
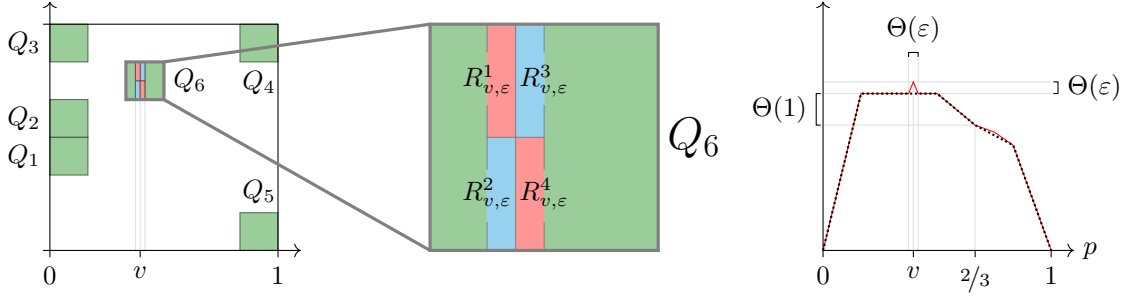%
    The base probability density function $f$ is defined for all $(x,y) \in [0,1]^2$ by
    \[
        f(x,y)
    =
        \frac{36}{1+8a} \cdot \lrb{
            \frac{ 5 - 6 (y+x) }{ 6(y-x) } \I_{Q_1}(x,y)
        + a \I_{ Q_2 }(x,y)
        + 2a \I_{ Q_3 \cup Q_4 \cup Q_5 } (x,y)
        + \I_{ Q_6 } (x,y)
        },
    \]
    where $a$ is set to $2 \cdot \ln(27/16)$ for normalization. Each perturbations is parametrized by a center $v$ and a scale $\e$, with $(v,\e) \in
        \perturb
    =
        \textstyle{
        \Bcb{ 
        (v,\e) \in \brb{\frac13,\frac12} \times \brb{ 0, \frac{1}{12} }
        \mid 
        \frac13 + \e \le v \le \frac12 - \e
        }.
    }$
    For all $(v,\e)$, we have four disjoint rectangles (\Cref{fig:LB-six-squares}, left): 
    \begin{align*}
        R^1_{v, \e} 
    & =
    \textstyle{
        \left[ v - \e, v \right) \times \lsb{ \frac34, \frac56 }
    }
    \;,
       & R^2_{v, \e} &
    =
    \textstyle{
        \left[ v - \e, v \right) \times \left[ \frac23, \frac34 \right)
    }
    \;,
    \\
        R^3_{v, \e} &    
    =
    \textstyle{
        \left[ v , v + \e \right] \times \lsb{ \frac34, \frac56 }
    }
    \;,
        & R^4_{v, \e} &
    =
    \textstyle{
        \left[ v , v + \e \right] \times \left[ \frac23, \frac34 \right) 
    \;.
    }
    \end{align*}
    The $R^i_{v, \e}$  rectangles are included in $Q_6$ and are the support of the corresponding perturbation $g_{v,\e}$ defined for all $(x,y) \in [0,1]^2$ by
    \[
        g_{v,\e}(x,y)
    =
        \frac{36}{1+8a} \cdot 
    \lrb{
        \I_{R^1_{v, \e} \cup R^4_{v, \e}}(x,y)
        - \I_{R^2_{v, \e} \cup R^3_{v, \e}}(x,y)
    } \;.
    \]
    The perturbed density functions are obtained by summing together the base probability density function $f$ and one of the perturbations $g_{v,\e}$.
    Formally, for all $(v,\e) \in \perturb$, we let $
        f_{v,\e} = f + g_{v,\e}.$

    Let $\Pb$ (resp., $\Pb^{v,\e}$, for all $(v,\e) \in \perturb$) be a probability measure such that the sequence of seller/buyer evaluations $(S,B)$, $(S_1,B_1)$, $(S_2,B_2)$, $\dots$ is i.i.d. and the distribution of $(S,B)$ has density $f$ (resp., $f_{v,\e}$) with respect to the Lebesgue measure.
    We denote the expectation with respect to $\Pb$ (resp., $\Pb^{v,\e}$, for all $(v,\e) \in \perturb$) by $\bbE$ (resp., $\bbE^{v,\e}$).
    Note that $\Pb_{(S,B)}$ (resp., $\Pb_{(S,B)}^{v,\e}$, for all $(v,\e) \in \perturb$) is $\nicefrac{1}{9}$-smooth (hence, each adversary corresponding to these distributions is $\sigma$-smooth for any $\sigma \le \tfrac 19$).

    \paragraph{Expected gain from trade.} The particular choice of distributions $f_{v,\e}$ is due to the specific structure of the expected gain from trade and feedback they induce. We start analyzing the former. For each $(v,\e) \in \perturb$, and $p \in [0,1]$, it is easy to argue, by linearity, that 
    \begin{align}
        \nonumber
            \bbE^{v,\e} \bsb{ \gft (p,S,B) }
        &=
            \bbE \bsb{ \gft (p,S,B) } + \int_{[0,p]\times[p,1]}(y-x) g_{v,\e}(x,y) \, \mathrm{d}x\mathrm{d}y
        \\
        \label{eq:gft_pert}
        &=
            \bbE \bsb{ \gft (p,S,B) } + \tfrac{\e}{864(1+8a)} \cdot \Lambda_{v,\e}(p) + \tfrac{\e^2}{72 (1+8a)} \cdot \Lambda_{\frac{3}{4},\frac{1}{12}}(p) \;,
    \end{align}
    where $\Lambda_{u,r}$ is the tent map centered at $u$ with radius $r$,  $\Lambda_{u,r}(x)= \lrb{1-\frac{|x-u|}{r}}^+$. \Cref{eq:gft_pert} nicely decomposes the expected gain from trade in a fixed term that depends only on the base distribution, a perturbation term centered in $v$ and a second order $\Theta(\e^2)$ term. Simple calculations yields the  analytical expression of $\E{\gft(p,S,B)}$:
    \begin{equation}
    \label{e:gft-base}
        \bbE \bsb{ \gft (p,S,B) } = \frac{1}{6 (1+8a)} \cdot
    \begin{cases}
        3p\brb{ 5 + 29a - 6(1+3a)p} &\text{if $p \in \bsb{ 0,\frac{1}{6} }$}\\
        2 + 13a &\text{if $p \in \bigl(\frac{1}{6},\frac{1}{2}\bigr]$}\\
        -18ap^2 + 3ap + 2(1+8a) &\text{if $p \in \bigl(\frac{1}{2},\frac{2}{3}\bigr]$}\\
        -18p^2 + 15p + 10a &\text{if $p \in \bigl(\frac{2}{3},\frac{5}{6}\bigr]$}\\
        72ap(1-p) &\text{if $p \in \bigl(\frac{5}{6},1\bigr]$}\\
    \end{cases}\;
    \end{equation}
    To have a qualitative understanding of \Cref{e:gft-base} we refer to \Cref{fig:LB-six-squares} (dotted black plot on the right): it is clear that the maximum is attained in the plateau corresponding to $p\in [\tfrac 16, \tfrac12 ]$. Furthermore, for each $(v,\e) \in \perturb$, price $v$ is the unique maximizer of the perturbed expected gain from trade $\bbE^{v,\e}\bsb{ \gft (p,S,B) }$, which is increasing on $\bsb{ 0, \frac16}$, constant on $\bsb{ \frac16, v-\e}$, has a symmetric spike on $[v-\e,v+\e]$, becomes constant again on $\bsb{ v+\e, \frac12 }$, and decreases on $\bsb{ \frac12, 1}$(\Cref{fig:LB-six-squares}, red plot on the right). Recall that there is no point in posting two different prices to maximize the gain from trade, thus the expected gain from trade under $\bbE^{v,\e}$ is maximized by posting $(v,v)$. $\U$ denotes the upper triangle: the set of all the feasible pair of prices the learner can post. It holds: 
    \[
        \max_{(p,q) \in \U} \bbE^{v,\e} \bsb{ \gft(p,q,S,B) } = \bbE^{v,\e} \bsb{ \gft(v,S,B) }.
    \]

    \paragraph{Two-bit feedback.} We move our attention to the description of the distribution of the 2-bit feedback $\lrb{ \I \bcb{S \le p}, \I\bcb{q \le B}}$. It is the same regardless of the underlying perturbed probability measure unless the learner selects a pair of prices $(p,q)$ in one of the four rectangles $R^i_{v,\e}$ where the perturbations occur. For the sake of simplicity, we use the random variable $Z$ to denote $\lrb{ \I \bcb{S \le p}, \I\bcb{q \le B}}$.
\begin{claim}
\label{cl:perturbation}
    Fix any $(v,\e)\in \perturb$, $(p,q) \in \cU \setminus \bigcup_{i \in [4]} R_{v,\e}^i$, and let $Z = ( \I \{ S \le p \}, \I \{q \le B\} )$.
    Then $Z$ follows the same distribution both under $\mathbb P$ and $\mathbb P^{v,\e}$. Formally, the following holds
    \[
    \Pb^{v,\e}
    \Bsb{ Z = (i,j) }
    =
    \Pb
    \Bsb{  Z= (i,j) } \quad \forall \, (i,j) \in \{0,1\}^2.
    \]
\end{claim}
\begin{proof}
For each $(v,\e) \in \perturb$, and each $(p,q) \in \cU$, the distribution under $\Pb^{v,\e}$ of the 2-bit feedback $Z$ is given by:
\begin{itemize}
    \item $\P{Z = (0,0)} =  \Pb^{v,\e} \bsb{ S>p \cap B < q } = \int_p^1 \int_0^q f(x,y) \dif x \mathrm{d}y + \int_p^1 \int_0^q g_{v,\e} (x,y) \dif x \mathrm{d}y$
    \item $\P{Z = (0,1)} = \Pb^{v,\e} \bsb{ S>p \cap B \ge q }= \int_p^1 \int_q^1 f(x,y) \dif x \mathrm{d}y + \int_p^1 \int_q^1 g_{v,\e} (x,y) \dif x \mathrm{d}y$
    \item $\P{Z = (1,0)} = \Pb^{v,\e} \bsb{ S\le p \cap B < q }= \int_0^p \int_0^q f(x,y) \dif x \mathrm{d}y + \int_0^p \int_0^q g_{v,\e} (x,y) \dif x \mathrm{d}y$
    \item $\P{Z = (1,1)} = \Pb^{v,\e} \bsb{ S\le p \cap B \ge q }= \int_0^p \int_q^1 f(x,y) \dif x \mathrm{d}y + \int_0^p \int_q^1 g_{v,\e} (x,y) \dif x \mathrm{d}y$.
\end{itemize}
By symmetry, all integrals of $g_{v,\e}$ in the previous formulae vanish if $(p,q)$ does not belong to one of the four rectangles $R^1_{v, \e}, R^2_{v, \e}, R^3_{v, \e}, R^4_{v, \e}$. The desired claim follows from the fact that $(p,q) \notin Q_6 \supseteq R^1_{v, \e} \cup R^2_{v, \e} \cup R^3_{v, \e} \cup R^4_{v, \e}$.
\end{proof}

\paragraph{The cost of exploration and of suboptimality.} In the following of the proof, we use $K = \lce{T^{1/4} }$, and $\e = \frac1{12K}$. For each $k \in \{1,\dots, K\}$, let $v_k = \frac{1}{3} + (2k-1) \e$ be a candidate center. For the sake of convenience, for each $k \in [K]$ denote $\Pb^{v_k, \e}$ by $\Pb^k$ and the corresponding expectation by $\bbE^k$, and similarly, denote $\Pb$ by $\Pb^0$ and the corresponding expectation by $\bbE^0$. Our construction has two crucial features: first, posting prices that are $\e$ far from the optimal one yields $\Theta(\e)$ instantaneous regret (this is immediate from the analytic expression of expected gain from trade and is formalized in \Cref{cl:cost_sub}); second, the learner, to locate the actual perturbation, is forced to post prices in a suboptimal regiorn $Q_6$ which incurs in constant instantaneous regret (\Cref{cl:cost_expl}).
\begin{claim}[Cost of suboptimality]
\label{cl:cost_sub}
    Fix any perturbation pair $(v,\e) \in \perturb$ and let $p$ be any price not in $[v-\e,v+\e]$, then the following holds:
    \[
        \bbE^{v,\e} \bsb{ \gft (v,S,B ) }
-
    \bbE^{v,\e} \bsb{ \gft ( p,S,B ) }
\ge
    \e \cdot \frac{1}{864(1+8a)}      
    \]
\end{claim}

\begin{claim}[Cost of exploration]
\label{cl:cost_expl}
    Fix any perturbation pair $(v,\e) \in \perturb$. Let $(p,q) \in Q_6$ and $p' \in [\tfrac 13, \tfrac 12]$, the following holds:
    \[
        \bbE^{v,\e} \bsb{ \gft (p',S,B ) }
-
    \bbE^{v,\e} \bsb{ \gft ( p,q,S,B ) }
\ge
    \frac{a}{2(1+8a)}
\in [0.05, 0.06] = \Theta(1)        
    \]
\end{claim}
\begin{proof}
    Fix any perturbation pair $(v,\e) \in \perturb$ and consider any pair of prices $(p,q)\in Q_6$. The expected gain from trade corresponding to $(p,q)$ is dominated by the one attainable by posting $(\tfrac12,\tfrac23)$ (each pair of valuations $(s,b)$  such that a trade happens for $(p,q)$ also yields a trade under $(\tfrac12,\tfrac23)$ ). Thus the following holds:
\[
    \bbE^{v,\e} \bsb{ \gft(p,q,S,B) }
\le
    \bbE^{v,\e} \lsb{ \gft \lrb{ \tfrac12,\tfrac23,S,B } }
\le
    \bbE^{v,\e} \lsb{ \gft \lrb{\tfrac23,S,B } }.
\]
On the other hand, posting any pair of prices $(p',p')$ for $p'$ belonging to the potentially optimal region $\bsb{ \tfrac13, \tfrac12 }$ returns
\[
    \bbE^{v,\e} \bsb{ \gft (p',S,B) }
\ge
    \bbE^{v,\e} \lsb{ \gft \lrb{ \tfrac12,S,B } }
\;.
\]
Putting the two inequalities together we get the claimed bound:
\begin{align*}
    \bbE^{v,\e} \bsb{ \gft (p',S,B ) }
-
    \bbE^{v,\e} \bsb{ \gft ( p,q,S,B ) }
&\ge
    \bbE^{v,\e} \lsb{ \gft \lrb{\tfrac12,S,B } }
    -
    \bbE^{v,\e} \lsb{ \gft \lrb{\tfrac23,S,B } }
    \\
&=
    \frac{a}{2(1+8a)}
\in [0.05, 0.06]. \qedhere
\end{align*}
\end{proof}

    \subsection{The \gener{} problem}
    \label{proof:step2}
    Before proceeding further, let's recap what we have obtained so far and where we plan to go. At a high level, we constructed a family of $K$ i.i.d. $\sigma$-smooth adversaries for the repeated bilateral trade problem, each characterized by a probability measure $\Pb^k$ where the valuations $(S,B)$ are sampled from. 
    Under each $\Pb^k$ the expected gain from trade is maximized in a different pair of prices $(v_k,v_k)$.
    Every time the learner posts a price that is $\Omega(\e)$ far from the optimal $v_k$ it suffers instantaneous regret that is $\Omega(\e)$ (\Cref{cl:cost_sub}). To identify the optimal $v_k$, the learner needs to identify the actual perturbation. There are $K = \Theta(\frac 1\e)$ different possible perturbations and, due to the feedback structure, the learner needs to probe separately $K$ disjoint rectangles in $Q_6$ to identify the actual perturbation it is playing against. Recall, posting prices in the suboptimal region $Q_6$ leads to a constant instanatneous regret (\Cref{cl:cost_expl}).

    To better analyze this underlying structure of our construction, we introduce a cleaner discrete problem (that we call \gener{}) on $2K$ actions. The $2K$ actions are of two types: half of them are ``exploring'' actions, while the others are ``exploiting''. At a high level, each exploring action corresponds to one of the exploring rectangles in $Q_6$ for the repeated bilateral trade problem, while each exploiting action is related to a possibly optimal $v_k$. We consider a family of $K$ i.i.d. adversaries for \gener{}, each corresponding to a probability measure $\Pb^1, \dots, \Pb^K$ over the reward vectors.
    Each exploring action $i \in [K]$ yields zero reward but, if played at time $t \in \mathbb{N}$, reveals the reward incurred by the corresponding exploiting action $K+i$. Conversely, exploiting actions yield no feedback\footnote{The reader familiar with the notion of online learning with directed feedback graphs \citealt{AlonCDK15} can see that the feedback model described here corresponds to the weakly observable feedback graph in \Cref{fig:feedback-graph}, left.}. Under each $\Pb^k$, the optimal action is $K+k$, which is $\e$ apart from all the other exploiting actions. 
    The only way the learner has to locate the optimal action is to probe sequentially the exploring actions. Assessing whether a given exploiting arm is the optimal one requires playing $\Omega(\frac 1{\e^2}
    )$ times the corresponding exploring action, which yields constant instantaneous regret. On the other hand, playing a suboptimal action yields $\e $ instantaneous regret. All in all any learner suffers a regret of order $\Omega\brb{\min \brb{ \frac{K}{\e^2}, \e T } } = \Omega(T^{3/4})$, given our choices of $K$ and $\e$.

\begin{figure}
\centering
\begin{tikzpicture}[->]
\draw[white] (0,0) -- (3,3);
\foreach \x in {1,...,4}
{
    \node[draw, circle, inner sep = 2pt] (\x) at (\x*0.85,2.8) {$\x$};
    \path (\x) edge [loop above] (\x);
}
\foreach \x [evaluate={\y=\x-4;}] in {5,...,8}
{
    \node[draw, circle, inner sep = 2pt] (\x) at (\y*0.85,1.25) {$\x$};
}
\path (1) edge (5);
\path (2) edge (6);
\path (3) edge (7);
\path (4) edge (8);
\path (1) edge [loop above] (1);
\end{tikzpicture}
\hspace{2cm}
\definecolor{myColOne}{RGB}{255,0,0}
\definecolor{myColTwo}{RGB}{0,255,0}
\definecolor{myColThree}{RGB}{0,0,255}
\definecolor{myColFour}{RGB}{255,0,255}
\begin{tikzpicture}[scale = 3]
\draw [<->] (0,1.1) -- (0,0) -- (1.1,0);
\draw (0,0) -- (0,-0.03) node[below] {$0$}
    (1,0) -- (1,-0.03) node[below] {$1$}
    (0,0) -- (-0.03,0) node[left] {$0$}
    (0,1) -- (-0.03,1) node[left] {$1$}
;
\fill[myColOne] ({1/3}, 4/6) rectangle ({1/3+2*\eps}, 5/6);
\fill[myColTwo] ({1/3+2*\eps}, 4/6) rectangle ({1/3+4*\eps}, 5/6);
\fill[myColThree] ({1/3+4*\eps}, 4/6) rectangle ({1/3+6*\eps}, 5/6);
\fill[myColFour] ({1/3+6*\eps}, 4/6) rectangle (1/2, 5/6);
\fill[myColOne!25!white] ({1/3}, 4/6) -- ({1/3+2*\eps}, 4/6) -- ({1/3+2*\eps}, {1/3+2*\eps}) -- ({1/3}, {1/3}) -- cycle;
\fill[myColTwo!25!white] ({1/3+2*\eps}, 4/6) -- ({1/3+4*\eps}, 4/6) -- ({1/3+4*\eps}, {1/3+4*\eps}) -- ({1/3+2*\eps}, {1/3+2*\eps}) -- cycle;
\fill[myColThree!25!white] ({1/3+4*\eps}, 4/6) -- ({1/3+6*\eps}, 4/6) -- ({1/3+6*\eps}, {1/3+6*\eps}) -- ({1/3+4*\eps}, {1/3+4*\eps}) -- cycle;
\fill[myColFour!25!white] (0,0) -- (1/3,1/3) -- (1/3,5/6) -- (1/2,5/6) -- (1/2,2/3) -- ({1/2-2*\eps},2/3) -- ({1/2-2*\eps},{1/2-2*\eps}) -- (1,1) -- (0,1) -- cycle;
\draw [very thin, gray] (0,0) -- (1,1) -- (0,1) -- cycle;
    \draw [very thin, gray] (1/3, 2/3) rectangle (1/2, 5/6);
    \draw [very thin, gray]
        (1/3, 2/3) -- (1/3, 1/3)
        ({1/3+2*\eps}, 5/6) -- ({1/3+2*\eps}, {1/3+2*\eps})
        ({1/3+4*\eps}, 5/6) -- ({1/3+4*\eps}, {1/3+4*\eps})
        ({1/3+6*\eps}, 5/6) -- ({1/3+6*\eps}, {1/3+6*\eps})
;
\draw[white] 
    ( {1/3 + \eps}, 3/4 ) node {\tiny$1$}
    ( {1/3 + 3*\eps}, 3/4 ) node {\tiny$2$}
    ( {1/3 + 5*\eps}, 3/4 ) node {\tiny$3$}
    ( {1/3 + 7*\eps}, 3/4 ) node {\tiny$4$}
;
\draw
    ( {1/3 + \eps}, {(1/3 + 7*\eps + 2/3)/2} ) node {\tiny$5$}
    ( {1/3 + 3*\eps}, {(1/3 + 7*\eps + 2/3)/2} ) node {\tiny$6$}
    ( {1/3 + 5*\eps}, {(1/3 + 7*\eps + 2/3)/2} ) node {\tiny$7$}
    ( {1/3 + 7*\eps}, {(1/3 + 7*\eps + 2/3)/2} ) node {\tiny$8$}
;
\end{tikzpicture}
\caption{Left: The feedback graph of \gener{} for $K=4$. Right: The map $\iota$.}
\label{fig:feedback-graph}
\end{figure}
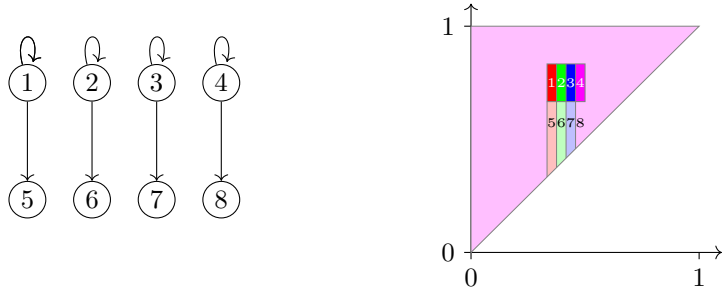

\paragraph{The rewards.} We describe the random rewards of $K+1$ instances of the \gener{} problem associated to $K+1$ probability measures $\mathbb{P}^0,\dots,\Pb^K$. 
Set the parameter $\cprob = \tfrac 3{4a}$, and consider the i.i.d.\ sequence of random vectors $Y,Y_1,Y_2, \dots, Y_T$ such that $Y\in \{0,1\}^{2K}$ and, for each $k\in\{ 0, \dots, K \}$ it holds that
\begin{equation}
    \label{eq:Yprob}
    \Pb^k \bsb{ Y(i) = 1 } = 
        \begin{cases}
            \frac{1}{2} &\text{if $i \in [K] \setminus \{k+K\}$}\\
            \frac{1}{2} + \cprob \cdot \e &\text{if $ i = k + K$}\\
            0 &\text{if $i \in [2K]\setminus [K]$}
        \end{cases}
\end{equation}
    The random vectors $Y_1, Y_2, \dots, Y_T$ are responsible for the reward the learner gets in this new problem.
Formally, a learner that plays action $i \in [2K]$ at time $t$ gets reward $\rho_t(i) = \rho(i,Y_t)$ where
\[
    \rho(i,y) = 
    \begin{cases}
            0 &\text{if $i \in [K]$}\\\cplat + \frac{\cspike}{\cprob} \cdot \brb{y(i-K) - \frac{1}{2}}  &\text{ otherwise}
    \end{cases}
\]
Where we introduced the following two parameters 
\[
    \cplat = \frac{a}{2 (1+8a)}, \, \, \cspike = \frac{1}{864 (1+8a)}.
\]
Note, for any $i\in [K]$, we denote the $i$-th component of $y$ by $y(i)$. Finally, observe that for all $k\in\{0,\dots,K\}$, we have
\begin{equation}
\label{eq:revenue}
\bbE^k\bsb{ \rho(i,Y) }
=
\begin{cases}
0 & \text{ if } k \in [K]\\ 
\cplat + \cspike \cdot \e & \text{ if } k = i-K\\
\cplat & \text{ otherwise }
\end{cases}    
\end{equation}
Before moving on, we spend some word to motivate our choice of $\cplat$ and $\cspike$, whose goal is to relate \Cref{eq:revenue} with the properties of the expected gain from trade in \Cref{e:gft-base,eq:gft_pert}. In particular, $\cplat$ is exactly equal to the cost of exploration term in \Cref{cl:cost_expl}, while $\cspike$ corresponds to the cost of suboptimality, as in \Cref{cl:cost_sub}.

\paragraph{The feedback.} The learner in \gener{} receives two types of feedback. If action  $i \ge K+1$ (an exploitation arm) is played at time $t$, then no feedback is received (modeled by $Y_t(i) = 0$). 
If instead, action  $i \le K$ (an exploration arm) is played, then feedback $Y_t(i)$ is observed. This feedback structure describes an instance of online learning with feedback graphs, where the underlying graph is the one in \Cref{fig:feedback-graph} (left). 
The rewards incurred by the exploring arms are fixed and known irregardless of the action played, while the only way to learn the expected value of $\rho_t(i)$ for $i>K$ is to play the corresponding exploring action $i-K$.

    \subsection{Relating the two problems} 
    \label{proof:step3}
        In this final step of the proof, we formally map the repeated bilateral trade problem to the \gener{} one. This mapping is such that the adversary $\Pb^k$ for the former problem corresponds to the adversary $\Pb^k$ for the latter. 

    \paragraph{From prices to actions.}We start partitioning the upper triangle $\U$ in the following $2K$ disjoint regions:
\begin{itemize}[topsep=4pt,itemsep=0pt,leftmargin=9pt]
    \item $ \forall k \in [K-1], J_k = [v_k - \e, v_k + \e) \times [\frac{2}{3},\frac{5}{6}] $.
    \item $ J_K = [v_K - \e, v_K + \e] \times [\frac{2}{3},\frac{5}{6}] $
    \item $\forall k \in [K-1], J_{k+K} = \{ (p,q ) \in \U \mid v_k - \e \le p < v_k + \e \text{ and } q < \frac{2}{3}\}$
    \item $J_{2K} = \U \backslash \bigcup_{k=1}^{2K-1} J_k$
\end{itemize}
Note, the first $K$ partitions are in the exploring square $Q_6$, while the last $K$ contain a partition of the segment on the diagonal that contains the optimal price. To formalize this mapping, define $\iota \colon \U \to [2K]$ as the function that associates to each $(p,q) \in \U$ the unique $i \in [2K]$ such that $(p,q) \in J_{i}$ (\Cref{fig:feedback-graph}, right).

    \paragraph{Simulating the feedback.} Consider the \gener{} problem, its feedback is less rich than the one experienced in our original problem. However, we show how it is possible to simulate the latter via the former and some independent randomness. Formally, beside the sequence of $\{0,1\}^{2K}$-valued random variables $Y,Y_1,\dots, Y_T$, consider two sequences of $[0,1]$-valued random variables $U,U_1, \dots , U_T$ and $ V,V_1,\dots, V_T$ such that:
\begin{itemize}[topsep=4pt,itemsep=0pt,leftmargin=9pt]
    \item For each $k \in \{0,\dots,K\}$ the sequence $Y,Y_1,\dots, Y_T$ is $\Pb^k$-i.i.d. and follows the distribution in \Cref{eq:Yprob}
    \item For each $k \in \{0,\dots,K\}$ the sequence $V,V_1,\dots, V_T$ is $\Pb^k$-i.i.d. and $V$ is a uniform distribution under $\Pb^k$.
    \item For each $k \in \{0,\dots,K\}$, the following sequences are independent under $\Pb^k$: $(V,V_1,\dots V_T)$, $(U,U_1,\dots U_T)$, and $\lrb{ (S,B), (S_1,B_1),\dots, (S_T,B_T) }$.
\end{itemize}
    The random variables $U_t$ models the random bits of the learning algorithm, while $V_t$ are used to simulate, together with $Y_t$, the feedback $\brb{\I(S\le p), \I\{q \le B\}}$. Formally, we have this crucial claim.
\begin{restatable}{claim}{restatable}
\label{claim-5-appe}
For any $(p,q) \in \U$ there exists a function $\varphi_{p,q}:\{0,1\}\times [0,1]\to \{0,1\}^2$ such that, for all $k\in \{0,\dots,K\} $, the distributions under $\mathbb{P}^k$ of $\varphi_{p,q}(Y( \iota(p,q) ),V)$ and  $\brb{\I(S\le p), \I\{q \le B\}}$ coincide.
\end{restatable}

    For all $(p,q) \in \U$, fix a $\varphi_{p,q}$ as in \Cref{claim-5-appe}. We get back to our learning algorithm $\cA$ for the repeated bilateral trade problem and we simulate a run of it against an i.i.d. sequence $(S,B)$ issued by the adversary $\Pb^k$ using a run of the \gener{} against the adversary $\Pb^k$ (for the latter problem). Formally,  we construct an algorithm $\tilde \cA$ (based on $\cA$ and the sequence of random seeds $V_1, V_2, \dots$) to solve this new problem in the following way: for each time $t$ the following happens
\begin{enumerate}[topsep=4pt,itemsep=0pt,leftmargin=9pt]
    \item If $\cA$ selects a pair $(\tilde P_t, \tilde Q_t) \in \cU$, then $\tilde \cA$ plays action $\tilde I_t = \iota (\tilde P_t, \tilde Q_t) \in [2K]$.
    \item We feed $\cA$ with the feedback $\fhi_{\tilde P_t, \tilde Q_t} \brb{ Y_t( \tilde I _t ), V_t } \in \{0,1\}^2$.
\end{enumerate}
By induction on $t$, \Cref{claim-5-appe} implies that for all $k\in\{0,\dots,K\}$ and $t\in[T]$, we have\footnote{Given a random variable $X$ we denote with $\Pb_X$ the pushforward of the probability measure $\Pb$ to the codomain of $X$. Furthermore, we use $\bbL$ to denote the Lebesgue measure. We refer to \Cref{s:inverse-transform} for further details.}
\[
    \Pb^k_{ ( \tilde P_t , \tilde Q_t ) }
=
    \Pb^k_{ ( P_t , Q_t ) }.
\]
This means that the feedback received by $\cA$ under the simulation (i.e., using the $Y_t$ and the random seeds $V_t$) indeed behaves exactly like $\brb{\I(S\le p), \I\{q \le B\}}$ under $\Pb^k$. This observation, together with the fact that 
$
    \Pb^k_{(\tilde P_t, \tilde Q_t, Y_t)} 
= 
    \Pb^k_{(\tilde P_t, \tilde Q_t)}
    \otimes \Pb^k_{Y_t}
$ for all $k\in \{0,\dots,K\}$ and $t\in[T]$, yields
\begin{align*}
    R_T^k(\cA)
&
=
    T \bbE^k \bsb{ \gft(v_k, S, B) }
    - \sum_{t=1}^T \bbE^k\bsb{ \gft ( P_t, Q_t, S_t, B_t ) }
\\
&
\ge
    T \bbE^k \bsb{ \rho(k+K, Y) }
    - \sum_{t=1}^T \bbE^k \Bsb{ \rho \brb{ \iota(P_t,Q_t), Y_t } }
\\
&=
    T \bbE^k \bsb{ \rho(k+K, Y) }
    - \sum_{t=1}^T \bbE^k \Bsb{ \rho \brb{ \iota(\tilde P_t, \tilde Q_t), Y_t } }
\\
&
=
    T \bbE^k \bsb{ \rho(k+K, Y) }
    - \sum_{t=1}^T \bbE^k \bsb{ \rho ( \tilde I_t, Y_t ) }=
    \tilde R_T^k ( \tilde \cA ) \;,
\end{align*}
where $R_T^k(\cA)$ (resp., $\tilde R_T^k ( \tilde \cA )$) is the regret suffered by the algorithm $\cA$ (resp., $\tilde \cA$) after $T$ rounds of the bilateral trade problem with two-bit feedback (resp., the related problem on $2K$ actions) in the scenario $\Pb^k$. Note, the first inequality follows by definition of the reward $\rho$ and our choices of $\cplat$ and $\cspike.$
Summing over $k\in [K]$ and dividing by $K$, this implies
\[
    \frac1K \sum_{k\in[K]} R_T^k  (\cA)
\ge
    \frac1K \sum_{k\in[K]} \tilde R_T^k  (\tilde \cA)
\ge
    \inf_{\bar{\cA} \in \mathrm{Rand}} \frac1K \sum_{k\in[K]} \tilde R_T^k  (\bar{\cA})
=
    \inf_{\bar{\cA} \in \mathrm{Det}} \frac1K \sum_{k\in[K]} \tilde R_T^k  (\bar{\cA}) \;,
\]
where the first (resp., second) infimum is over the set $\mathrm{Rand}$ (resp., $\mathrm{Det}$) all randomized (resp., deterministic) algorithms $\bar{\cA}$ for the related problem on $2K$ actions, and the last standard equality is a straightforward consequence of the stochastic i.i.d.\ setting.

In \Cref{lemma:finale} in \Cref{app:lowerbound} we show that for any deterministic algorithm $\cA$ for the related problem on $2K$ actions, it either holds that $\frac1K \sum_{k\in[K]} \tilde R_T^k(\cA) \ge \frac{1}{50^3} T^{3/4}$ or that $\tilde R_T^0(\cA) \ge \frac{1}{50^3} T^{3/4}$. This, together with the inequalities above implies that there exists an $k\in \{0,\dots,K\}$ such that $R_T^k  (\cA) \ge \frac{1}{50^3} T^{3/4}$, concluding the proof of \Cref{thm:two-bit-two-prices-lower}.
\end{proof}

    \begin{algorithm*}[t]
        \begin{algorithmic}[t]
        \State \textbf{Input:} price $p$
        \State \textbf{Environment:} fixed pair of seller and buyer valuations $(s,b)$
        \State Toss a biased coin with probability $p$ of Heads
        \State \textbf{if} Heads \textbf{then} draw $U$ uniformly at random in $[0,p]$ and set $\hat p \gets U$, $\hat q \gets p$
        \State \textbf{else} draw $V$ uniformly at random in $[p,1]$ and set $\hat p \gets p$, $\hat q \gets V$
        \State Post price $\hat p$ to the seller and $\hat q$ to the buyer and observe the one-bit feedback $\ind{s \le \hat p\le \hat q \le b}$
        \State \textbf{Return} $\egft(p) \gets \ind{s \le \hat p\le \hat q \le b}$ \Comment{Unbiased estimator of $\gft(p)$}
         \caption*{\textbf{Estimation procedure of $\gft$ using two prices and one-bit feedback}}
    \end{algorithmic}
    \end{algorithm*}

\section{A \texorpdfstring{$T^{3/4}$}{Omega(T 3/4)} upper bound: one bit and two prices}
\label{sec:one-bit}

In this section, we introduce our algorithm, \blindExpThree{}, for the one-bit feedback setting against a $\sigma$-smooth adaptive adversary that achieves a bound on the regret of order $T^{3/4}$, up to logarithmic terms. 
A key technique that we use is a Monte Carlo estimation procedure $\egft$ (see pseudocode for details) that allows us to estimate the expected gain from trade $\bbE \bsb{ \gft(p, S_t, B_t) }$ of a price $p$, by posting {\em two} different prices $(\hat p, \hat q)$ and receiving {\em one} bit of feedback.

    \begin{lemma}[Lemma 1 of \citet{AzarFF22}]
    \label{lem:estimators}
        Fix any agents' valuations $(s,b) \in [0,1]^2$. For any price $p \in [0,1]$, it holds that $\egft(p)$ is an unbiased estimator of $\gft(p)$, i.e., $\E{\egft(p)} = \gft(p)$, where the expectation is with respect to the randomness of the estimation procedure.
    \end{lemma}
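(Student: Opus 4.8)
The plan is to compute $\E{\egft(p)}$ directly by conditioning on the outcome of the biased coin (Heads with probability $p$, Tails with probability $1-p$), exploiting the fact that each branch of the procedure randomizes exactly one endpoint of the would-be trade interval so that the corresponding one-sided acceptance probability reproduces a length.

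First I would treat the Heads branch, where $(\hat p,\hat q)=(U,p)$ with $U$ uniform on $[0,p]$, so the returned bit is $\ind{s\le U\le p\le b}$. Since $U\le p$ deterministically, this equals $\ind{p\le b}\,\ind{s\le U}$, and integrating over $U$ (using $s\in[0,1]$) gives $\E{\egft(p)\mid\text{Heads}}=\ind{p\le b}\cdot\frac{(p-s)^{+}}{p}$. Symmetrically, on the Tails branch $(\hat p,\hat q)=(p,V)$ with $V$ uniform on $[p,1]$, the returned bit is $\ind{s\le p\le V\le b}=\ind{s\le p}\,\ind{V\le b}$ because $V\ge p$ deterministically, and integrating over $V$ (using $b\in[0,1]$) gives $\E{\egft(p)\mid\text{Tails}}=\ind{s\le p}\cdot\frac{(b-p)^{+}}{1-p}$. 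Combining via the law of total expectation, the factors $p$ and $1-p$ cancel, leaving
\[
    \E{\egft(p)}=(p-s)^{+}\,\ind{p\le b}+(b-p)^{+}\,\ind{s\le p}.
\]

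I would then finish with a three-way case split on the ordering of $s$, $p$, $b$: if $s\le p\le b$ both positive parts are active and the right-hand side telescopes to $(p-s)+(b-p)=b-s=\gft(p)$; if $p<s$ the first summand vanishes since $(p-s)^{+}=0$ and the second vanishes since $\ind{s\le p}=0$; and if $p>b$ both summands vanish symmetrically — matching $\gft(p)=(b-s)\,\ind{s\le p\le b}=0$ in the latter two cases. I expect no genuine obstacle here: the statement is exactly the one-price specialization of Lemma~1 of \citet{AzarFF22}, and the only mildly fussy point is the boundary $p\in\{0,1\}$, where one coin branch has probability zero and the associated uniform draw is degenerate; there the identity is checked by hand (e.g.\ for $p=0$ the coin is always Tails, $\hat p=0$, $V$ uniform on $[0,1]$, and $\E{\egft(0)}=\ind{s\le 0}\,\P{V\le b}=b\,\ind{s=0}=\gft(0)$).
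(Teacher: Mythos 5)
Your proof is correct. The paper itself imports this lemma from \citet{AzarFF22} and does not reproduce a proof, so there is no in-paper argument to compare against. Your calculation — conditioning on the coin flip, integrating out the single uniform draw in each branch to get $\ind{p\le b}\cdot\frac{(p-s)^{+}}{p}$ and $\ind{s\le p}\cdot\frac{(b-p)^{+}}{1-p}$, letting the $p$ and $1-p$ weights cancel, and then verifying that $\ind{p\le b}(p-s)^{+}+\ind{s\le p}(b-p)^{+}=(b-s)\,\ind{s\le p\le b}$ via a case split on the ordering of $s,p,b$ — is exactly the standard argument for this estimator, and your treatment of the degenerate endpoints $p\in\{0,1\}$ is also sound.
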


    Once we have this procedure, we can present our algorithm. 
    At a high level, it mimics the behavior of Exp3 on a fixed discretization of $K$ prices, but uses the estimation procedure to perform the uniform exploration step. 
    Our algorithm is ``blind'' because---unlike what happens in the bandit case---posting a price does not reveal  the corresponding gain from trade. 
    With a careful analysis, we show that the uniform exploration term is indeed enough to achieve the tight regret bound of order $\tilde{O}(T^{3/4})$. 
    (We recall that the $\sigma$-smoothness of the valuations distributions is crucial to ensure that the performance of the best fixed price in hindsight on a grid is ``close enough'' to the performance of the best fixed price overall.)
    
    \begin{algorithm}[t]
    \caption*{\textbf{Learning algorithm with 1-bit feedback and two prices:} \blindExpThree{}}
    \begin{algorithmic}[t]
    \State \textbf{input:} Learning rate $\eta>0$, exploration rate $\gamma \in (0,1)$, grid of prices $G$, with $|G| = K$
    \State \textbf{initialization:} Set $w_1(i)$ to $1$ for all $i \in [K]$ and $W_1 = K$
    \For{time $t=1,2,\ldots$}
        \State Let $\pi_t(i) = \frac{w_t(i)}{W_t}$ for all $i \in [K]$ 
        \State Toss a biased coin with probability $\gamma$ of Heads
        \If{Tails} \Comment{Exploitation step}
        \State Post price $p_t$ drawn according to distribution $\pi_t$ and set $\hat r_t(i) = 0$ for all $i \in [K]$
        \Else \Comment{Exploration step}
            \State Draw a price $g_{i}$ uniformly at random in $G$
            \State Use the estimation procedure on price $g_{i}$ and receive $\egft_t(g_i)$
            \State Set $\hat r_t(i) = \frac{K}{\gamma}\cdot \egft_t(g_i)$ and $\hat r_t(j) = 0$ for all $j \neq i$.
        \EndIf
        \State Let $ w_{t+1}(i) = w_t(i) \cdot \exp\brb{ \eta \hat r_t(i)}$ for all $i \in [K]$ \Comment{Update of the Exponential Weights}
        \State Let $W_{t+1} = \sum_{p_i \in G} w_{t+1}(i)$
        
    \EndFor
    \end{algorithmic}
    \end{algorithm}
    
    \begin{theorem}
    \label{thm:one-bit-two-prices-upper}
        Consider the problem of repeated bilateral trade against a $\sigma$-smooth adaptive adversary in the one-bit feedback model, for any $\sigma \in (0,1]$. If we run \blindExpThree{} with exploration rate $\gamma \in (0,1)$, learning rate $\eta >0$, and the uniform $K$-grid $G$ such that $\frac{2\eta K}{\gamma} \le 1$, then, for each time horizon $T \in \N$, we have that
        \[
        R_T(\text
{\blindExpThree{}}) \le \frac{\ln K}{\eta} + \brb{ \gamma + \eta\frac{K}{\gamma} + \frac{1}{\sigma K} } T.
        \]
        In particular, if we set the number of grid points $K = \lfl{  T^{1/4} }$, the exploration rate $\gamma = \frac{ (\ln T)^{1/3} }{ T^{1/4} }$, and the learning rate $\eta = \frac{1}{2} \frac{ (\ln T)^{2/3} }{ T^{3/4} }$, then $R_T(\text{\blindExpThree{}}) \le \lrb{ \frac2\sigma + 3 (\ln T)^{1/3} }\cdot T^{3/4} \;.$
    \end{theorem}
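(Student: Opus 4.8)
The plan is to recognise \blindExpThree{} as Exp3 with $\gamma$-uniform exploration run on the grid $G$, the only two non-standard ingredients being that the exponential weights are updated with the one-bit Monte Carlo estimate $\egft$ of \Cref{lem:estimators} instead of a usual importance-weighted bandit estimate, and that the gap between the best grid price and the best price overall is controlled by \Cref{claim:discretization}. Fix a $\sigma$-smooth adversary $\cS$ and let $g^\star$, of (deterministic) index $i^\star$, be a grid price maximising $\bbE\lsb{\sum_{t}\gft_t(g)}$. Since a single-price comparator is never worse than a two-price one for cumulative gain from trade (any $r\in[p,q]$ pointwise dominates $(p,q)$; cf.\ \Cref{obs:twopriceobs}) and the uniform $K$-grid has $\delta(G)\le 1/K$, \Cref{claim:discretization} --- which, exactly as used for \fullfed{}, holds verbatim for adaptive $\sigma$-smooth sequences --- yields
\[
    R_T(\blindExpThree{},\cS)
    \;\le\;
    \frac{T}{\sigma K}
    + \bbE\lsb{\sum_{t}\gft_t(g^\star)} - \bbE\lsb{\sum_{t}\gft_t(P_t,Q_t)}\;.
\]

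For the second term I would run the textbook Exp3 argument, conditioning at each round $t$ on the past together with the adversary's chosen law $\mu_t$ of $(S_t,B_t)$ and the weights $\pi_t$, and writing $\mu_t(i)=\bbE_{\mu_t}\bsb{\gft(p_i,S_t,B_t)}\in[0,1]$. First, index $i$ gets a nonzero estimate $\hat r_t(i)=\tfrac{K}{\gamma}\egft_t(g_i)$ with probability $\gamma/K$, so \Cref{lem:estimators} (over the procedure's internal coins, then averaged over $\mu_t$) gives $\bbE[\hat r_t(i)\mid\cdot]=\mu_t(i)$ and $\bbE[\sum_i\pi_t(i)\hat r_t(i)\mid\cdot]=\sum_i\pi_t(i)\mu_t(i)$; consequently $\bbE[\sum_t\hat r_t(i^\star)]=\bbE[\sum_t\gft_t(g^\star)]$ and $\bbE[\sum_t\sum_i\pi_t(i)\hat r_t(i)]=\bbE[\sum_t\sum_i\pi_t(i)\mu_t(i)]$. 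Second, $\hat r_t(i)\ge 0$ and the hypothesis $\tfrac{2\eta K}{\gamma}\le 1$ makes $\eta\hat r_t(i)\le\tfrac{\eta K}{\gamma}\le 1$, so the standard exponential-weights inequality (from $e^x\le 1+x+x^2$ for $x\le 1$) gives, pathwise,
\[
    \sum_{t}\hat r_t(i^\star) - \sum_{t}\sum_i\pi_t(i)\hat r_t(i)
    \;\le\; \frac{\ln K}{\eta} + \eta\sum_{t}\sum_i\pi_t(i)\hat r_t(i)^2\;,
\]
and since $\egft_t(g_i)^2=\egft_t(g_i)$, the same conditioning gives $\bbE[\sum_i\pi_t(i)\hat r_t(i)^2\mid\cdot]=\tfrac{K}{\gamma}\sum_i\pi_t(i)\mu_t(i)\le\tfrac{K}{\gamma}$, bounding the second-order term by $\eta\tfrac{K}{\gamma}T$ in expectation. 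Third, on an exploitation round (probability $1-\gamma$) the learner posts $(p_t,p_t)$ with $p_t\sim\pi_t$, earning expected gain $\sum_i\pi_t(i)\mu_t(i)$, while on an exploration round it still earns a nonnegative gain (the estimation procedure posts a budget-balanced pair), so $\bbE[\gft_t(P_t,Q_t)\mid\cdot]\ge(1-\gamma)\sum_i\pi_t(i)\mu_t(i)\ge\sum_i\pi_t(i)\mu_t(i)-\gamma$. Chaining these three facts bounds the second term by $\tfrac{\ln K}{\eta}+\eta\tfrac{K}{\gamma}T+\gamma T$, which together with $\tfrac{T}{\sigma K}$ is exactly the claimed bound $\tfrac{\ln K}{\eta}+\brb{\gamma+\eta\tfrac{K}{\gamma}+\tfrac{1}{\sigma K}}T$.

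Finally, for the concrete rates I would plug in $K=\lfl{T^{1/4}}$, $\gamma=(\ln T)^{1/3}T^{-1/4}$, $\eta=\tfrac12(\ln T)^{2/3}T^{-3/4}$, check that $\tfrac{2\eta K}{\gamma}\le(\ln T)^{1/3}T^{-1/4}\le 1$ (so the hypothesis holds), and estimate the four terms: $\tfrac{\ln K}{\eta}\le\tfrac12(\ln T)^{1/3}T^{3/4}$, $\gamma T=(\ln T)^{1/3}T^{3/4}$, $\eta\tfrac{K}{\gamma}T\le\tfrac12(\ln T)^{1/3}T^{3/4}$, and $\tfrac{T}{\sigma K}\le\tfrac{2}{\sigma}T^{3/4}$ (using $K\ge\tfrac12T^{1/4}$); summing gives $\brb{\tfrac{2}{\sigma}+2(\ln T)^{1/3}}T^{3/4}\le\brb{\tfrac{2}{\sigma}+3(\ln T)^{1/3}}T^{3/4}$, the spare $(\ln T)^{1/3}$ absorbing floor losses. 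The one genuinely delicate point is the first part of the Exp3 step: with an adaptive adversary the comparator $i^\star$ must be pinned down as a deterministic maximiser of a deterministic functional of $\cS$, whereas the unbiasedness of $\hat r_t$ holds only \emph{conditionally} on $\mu_t$, so these identities have to be established at the level of conditional expectations and only then summed and de-conditioned via the tower rule; everything else is routine bookkeeping transplanted from the classical Exp3-with-exploration analysis to the one-bit estimator $\egft$.
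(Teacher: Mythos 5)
Your proposal is correct and follows essentially the same route as the paper's proof: the standard exponential-weights potential argument for Exp3 with $\gamma$-uniform exploration, using \Cref{lem:estimators} for conditional unbiasedness of $\hat r_t$ (taken over the estimation procedure's coin and then the adversary's law, and de-conditioned via the tower rule), bounding the second-moment term by $\tfrac{K}{\gamma}$ via $\egft_t^2=\egft_t$, lower-bounding $\bbE[\gft_t(P_t,Q_t)]$ by $(1-\gamma)\sum_i\pi_t(i)\bbE[\gft_t(g_i)]$, and finishing with \Cref{claim:discretization}. The only differences are cosmetic --- the paper substitutes $\hat r_t(i)^2\le\tfrac{K}{\gamma}\hat r_t(i)$ pathwise before taking expectations rather than after, and instead of pinning down a deterministic comparator $i^\star$ it simply proves the bound for an arbitrary grid index and maximises at the end --- but the decomposition, the key lemmata, and the parameter tuning are the same.
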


\begin{proof}
    The analysis of \blindExpThree{} needs to carefully take into account many sources of randomness: the internal randomness of the algorithm, of the estimation procedures and of the $\sigma$-smooth distributions of the adversary. Note, moreover, that the adversary is non-oblivious, so the choice of the distribution $(S_t,B_t)$ depends on all the realizations of the past randomization. 
    Fix any exploration rate $\gamma \in (0,1)$, learning rate $\eta >0$ and number of grid points $K \in \N$ such that $\frac{2\eta K}{\gamma} \le 1$. Fix also any time horizon $T \in \N$. In the following, we use the random variables $(P_t,Q_t)$ to denote the randomized prices posted by the algorithm at time $t$. 
    
    Fix any history of the algorithm (i.e. realization of all the randomness involved). We have the following:
    \begin{align}
    \nonumber
            \ln \lrb{ \frac{W_{T+1}}{W_1} }
&=
    \ln \lrb{ \prod_{t=1}^T \frac{W_{t+1}}{W_t} }
=
    \sum_{t=1}^T \ln \lrb{\frac{W_{t+1}}{W_t}}
=
    \sum_{t=1}^T \ln \lrb{ \sum_{i \in [K]} \pi_t(i) \exp\lrb{\eta \hat{r}_t(i)}}
\\ \nonumber
&\le
    \sum_{t=1}^T \ln \lrb{ 1 + \eta \sum_{i \in [K]} \pi_t(i) \hat{r}_t(i) + \eta^2 \sum_{i \in [K]} \pi_t(i) \brb{\hat{r}_t(i)}^2}
    \\
\nonumber
&\le
    \eta \sum_{t=1}^T \sum_{i \in [K]} \pi_t(i) \hat{r}_t(i) + \eta^2  \sum_{t=1}^T \sum_{i \in [K]} \pi_t(i) \brb{\hat{r}_t(i)}^2 \tag*{(using $\hat{r}_t(i) \le \frac{K}{\gamma}$)}
\\
\label{eq:pivot}
&\le
    \eta \sum_{t=1}^T \sum_{i \in [K]} \pi_t(i) \hat{r}_t(i)\left[1 + \eta\frac{K}{\gamma}\right].
    \end{align}
Crucially, we can use the standard exponential and logarithmic inequalities $\exp(x) \le 1 + x + x^2$ (valid whenever $x \le 1$), and $\ln(1+x) \le x$ (valid whenever $x > -1$) only because the particular choice of the parameters $(\frac{2\eta K}{\gamma} \le 1)$ implies that $\eta \hat{r}_t(i) \le 1$ and 
    \[
        \eta \sum_{i \in [K]} \pi_t(i) \hat{r}_t(i) + \eta^2 \sum_{i \in [K]} \pi_t(i) \brb{\hat{r}_t(i)}^2 \le 2\eta \sum_{i \in [K]} \pi_t(i) \hat{r}_t(i) \le \frac{K}{\gamma}.
    \]
   
   Inequality \ref{eq:pivot} is the pivot of our analysis, as we construct upper and lower bounds to its two extremes. We start from its first term, take the expectation with respect to the whole randomness of the process and consider any price $g_i$ in the grid $G$:
\begin{align}
\nonumber
     \E{\ln \lrb{ \frac{W_{T+1}}{W_1}}} &= \E{\ln \lrb{ W_{T+1}}} - \ln K \ge  \E{\ln \lrb{ w_{T+1}(i)}} - \ln K\\
\label{eq:lower}     &= \eta \sum_{t=1}^T \E{ \hat{r}_t(i)} - \ln K = \eta \sum_{t=1}^T \E{\gft_t(g_i)} - \ln K.
\end{align}
The only delicate passage of the previous formula is the last equality, where we used that $\E{\hat{r}_t(i)} = \E{\gft_t(g_i)}$. To see why the latter holds, consider the filtration $\{\cF_t\}_t$ relative to the story of the process: $\cF_t$ is the $\sigma$-algebra generated by all the random variables involved in the process up to time $t$ (excluded). Moreover, let $\cE^i_t$ be the event that at round $t$ the coin toss results in head and the price selected u.a.r. for exploration is $g_i$. We have the following:
\begin{align*}
    \E{\hat{r}_t(i) \mid \cF_t} &= \E{\ind{\cE_t^i}\hat{r}_t(i) \mid \cF_t} \tag*{$\hat{r}_t(i) = \ind{\cE_t^i}\hat{r}_t(i)$}\\
    &= \E{\ind{\cE_t^i} \E{\hat{r}_t(i) \mid \cF_t,\cE_t^i}\mid \cF_t}\tag*{Law of total exp.}\\
    &=\frac{K}{\gamma}\E{\ind{\cE_t^i} \E{\egft_t(g_i) \mid \cF_t,\cE_t^i}\mid \cF_t} \tag*{Def. of $\hat{r}_t(i)$}\\
    &= \frac{K}{\gamma}\Pb[\cE_t^i\mid \cF_t] \E{\gft_t(g_i) \mid \cF_t} \tag*{Lemma \ref{lem:estimators} and $(S_t,B_t)$ indep. of $\cE_t^i$}\\
    &= \E{\gft_t(g_i) \mid \cF_t} 
\end{align*}
For the final step, note that, conditioned on $\cF_t$, the event $\cE^i_t$ has probability $\frac{\gamma}K$: the random coin gives Tails with probability $\gamma$ and price $g_i$ is chosen (independently) u.a.r. as the one to be actually explored with probability $1/K$. Taking the expectation with respect to $\cF_t$ gives that $\E{\hat{r}_t(i)} = \E{\gft_t(g_i)}$.

Let's go back to \Cref{eq:pivot} and focus on the last term. Conditioning with respect to $\cF_t$:
\[
    \E{\pi_t(i) \hat{r}_t(i)\mid \cF_t} = \pi_t(i) \E{\hat{r}_t(i)\mid \cF_t} = \pi_t(i) \E{\gft_t(g_i)\mid \cF_t}.
\]
Taking the expectation with respect to $\cF_t$ and summing over all the $g_i \in G$, we have the following:
    \begin{align}
    \label{eq:upper}
    \E{\gft_t(P_t,Q_t)} \ge (1-\gamma)\sum_{i \in [K]}\E{\pi_t(i) \gft_t(g_i)}=(1-\gamma)\sum_{i \in [K]}\E{\pi_t(i) \hat{r}_t(i)},
    \end{align}
    where the first inequality follows from the fact that with probability $1-\gamma$ the learner at time $t$ chooses exploitation and thus posts a price in the grid $G$ according to distribution $\pi_t.$
    We can plug \Cref{eq:lower} and \Cref{eq:upper} into \Cref{eq:pivot} to obtain the following:
    \[
        \eta \sum_{t=1}^T \E{\gft_t(g_i)} - \ln K \le \frac{\eta}{1-\gamma}\left(1 + \eta\frac{K}{\gamma}\right)\sum_{t=1}^T \E{\gft_t(P_t,Q_t)} 
    \]
    Multiplying everything by $\frac{1-\gamma}\eta$, rearranging, and using that the gain from trade is always upper bounded by $1$, we get:
    \[
        \sum_{t=1}^T \E{\gft_t(g_i)} - \sum_{t=1}^T \E{\gft_t(P_t,Q_t)} \le \frac{\ln K}{\eta} + \left(\gamma + \eta\frac{K}{\gamma}\right)T
    \]
    The argument so far holds for any adaptive adversary $\cS$ and any choice of price on the grid $g_i$. This, together with the discretization result \Cref{claim:discretization} gives the desired bound:
    \[
        R_T(\text
{\blindExpThree{}}) \le \frac{\ln K}{\eta} + \left(\gamma + \eta\frac{K}{\gamma} + \frac{1}{\sigma K}\right)T \qedhere
    \]
\end{proof}

    We remark that we tune \blindExpThree{} without relying on $\sigma$, obtaining guarantees for a learner that is oblivious to the smoothness parameter.

\section{Conclusions and open problems}

In this paper, we initiated the study of $\sigma$-smooth adversaries in online learning for pricing problems. Focusing on the repeated bilateral trade problem, we proved that one bit of feedback is sufficient to achieve sublinear regret, pushing the boundary of learnability beyond the i.i.d.\ setting. We hope that the smoothed adversarial approach will find more applications to learning pricing strategies that cannot otherwise be efficiently learned in the adversarial model under partial feedback.  

The surprising minimax regret regime of $T^{3/4}$ surpasses the $\sqrt{T}$ vs. $T^{2/3}$ dichotomy observed in other partial feedback models (e.g., partial monitoring and feedback graph), and motivates the intriguing question of whether techniques based on the generalized information ratio \citep{lattimore2019information} could be used to define a unified algorithmic tool in our framework and, more generally, to analyze online problems in digital markets.

\bibliographystyle{plainnat}
\bibliography{references}

\clearpage
\appendix

\section{One-bit/two-scenarios inverse-transformation representability}
\label{s:inverse-transform}

We recall that given two probability measures $\Pb$ and $\bbQ$ on a measurable space $(\Omega, \cF)$, we say that $\bbQ$ is absolutely continuous with respect to $\Pb$ and we write $\bbQ \ll \Pb$ if for all $E\in \cF$ such that $\Pb[E]=0$, it holds that $\bbQ[E]=0$.
Moreover, if $\bbQ \ll \Pb$, the Radon-Nikodym theorem states that there exists a density (called Radon-Nikodym derivative of $\bbQ$ with respect to $\Pb$ and denoted by) $\frac{\dif \bbQ}{\dif\Pb}\colon \Omega \to [0,\iop)$ such that, for all $E\in \cF$, it holds that
\[
    \bbQ[E]
=
    \int_E \frac{\dif \bbQ}{\dif\Pb}(\omega) \diff \Pb(\omega) \;.
\]
For a reference of the previous result, see \cite[Theorem~13.4]{bass2013real}.

Moreover, if $(\Omega, \cF, \Pb)$ is a probability space, $(\cX, \cF_{\cX})$ is a measurable space, and $X$ is a random variable from $(\Omega, \cF)$ to $(\cX, \cF_{\cX})$, we denote by $\Pb_X$ the push-forward measure of $\Pb$ by $X$, i.e., the probability measure defined on $\cF_{\cX}$ by $\Pb_X[F] = \Pb[X \in F]$, for all $F \in \cF_{\cX}$.

If $(\Omega, \cF)$ and $(\Omega', \cF')$ are two measurable spaces, we denote by $\cF \otimes \cF'$ the $\sigma$-algebra of subsets of $\Omega \times \Omega'$ generated by the collection of subsets of the form $F\times F'$, where $F \in \cF$ and $F' \in \cF'$.
If $(\Omega, \cF, \Pb)$ and $(\Omega', \cF',\Pb')$ are two probability spaces, we denote the product measure of $\Pb$ and $\Pb'$ by $\Pb\otimes\Pb'$, i.e., $\Pb\otimes\Pb'$ is the unique probability measure defined on $\cF \otimes \cF'$ which satisfies $(\Pb\otimes\Pb')[F\times F'] = \Pb[F]\Pb'[F']$, for all $E\in \cF$ and $E'\in \cF'$.

If $(\Omega, \cF, \Pb)$ is a probability space, $(\cX, \cF_{\cX})$ and $(\cY, \cF_{\cY})$ are measurable spaces, $X$ is a random variable from  $(\Omega, \cF)$ to $(\cX, \cF_{\cX})$, and $Y$ is a random variable from $(\Omega, \cF)$ to $(\cY, \cF_{\cY})$, we denote the conditional probability of $X$ given $Y$ by $\Pb_{X \mid Y}$, i.e., $\Pb_{X \mid Y}[E] = \Pb[X \in E \mid Y]$, for each $E \in\cF_{\cX}$. In this case, for each $E \in \cF_{\cX}$, we recall that $\Pb_{X \mid Y}[E]$ is a $\sigma(Y)$-measurable random variable.
Furthermore, if $X'$ is another random variable from  $(\Omega, \cF)$ to some measurable space $(\cX', \cF_{\cX'})$, $f$ and $g$ are two real-valued bounded measurable functions (respectively from $(\cX \otimes \cY , \cF_{\cX} \otimes \cF_{\cY})$ to the reals and from $(\cX' \otimes \cY , \cF_{\cX'} \otimes \cF_{\cY})$ to the reals), and both $(\cX, \cF_{\cX})$ and $(\cX, \cF_{\cX'})$ are measurable spaces that arise from considering the Borel subsets of separable and complete metric space $(\cX,d)$ and $(\cX',d')$ respectively, it holds that
\[
    \bbE \bsb{ f(X,Y) g(X',Y) \mid Y }
=
    \bbE \bsb{ f(X,Y) \mid Y } \cdot \bbE \bsb{  g(X',Y) \mid Y }
\]
whenever
\[
    \Pb_{(X,X') \mid Y}
=
    \Pb_{X \mid Y}\otimes\Pb_{X' \mid Y} \;.
\]

\subsection{Our inverse-transformation result}
\label{appe:inverse-transform-section}

In this section, we present a theorem that extends, in spirit, the classic inverse transformation method.
This result that can be of independent interest for replacing a type of feedback with another of better quality in lower-bound constructions based on reductions to simpler games.

\begin{definition}[Inverse-transformation representability]
    Let $(\Omega, \cF, \Pb)$ be a probability space and $\cB$ be the Borel $\sigma$-algebra of $[0,1]$.
    We say that $\Pb$ is \emph{inverse-transformation-representable} if there exists a measurable function $\psi$ from $\brb{[0,1], \cB}$  to $(\Omega, \cF)$ such that\footnote{We recall that $\leb$ is the Lebesgue measure on $\cB$.} $\Pb = \bbL_\psi$.
\end{definition}

The following theorem is a simple consequence of \cite[Corollary A.11]{takesaki1979operator}, and shows ``inverse-transformation representability in separable and complete metric spaces''.

\begin{theorem}
\label{t:inverse-transformation-method}
Suppose that $(\cY,d)$ is a separable and complete metric space, with $\cF_\cY$ as the Borel $\sigma$-algebra of $(\cY,d)$. Then any probability measure defined on $\cF_\cY$ is inverse-transformation-representable.
\end{theorem}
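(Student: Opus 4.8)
The plan is to reduce the statement for a general separable complete metric space $\cY$ to the special case $\cY=[0,1]$, and then to build $\psi$ from the classical quantile (generalized inverse CDF) construction. First I would record the reduction. Since $(\cY,d)$ is separable and complete, $(\cY,\cF_\cY)$ is a standard Borel space, and the classification of standard Borel spaces imported through \cite[Corollary A.11]{takesaki1979operator} yields a bimeasurable bijection $h$ of $(\cY,\cF_\cY)$ onto a Borel subset $B\subseteq[0,1]$ equipped with its trace $\sigma$-algebra (when $\cY$ is countable one simply enumerates it inside $[0,1]$, so $B$ is countable). Let $\mu$ be the given probability measure on $\cF_\cY$ and set $\nu:=\mu_h$, a Borel probability measure on $[0,1]$ with $\nu(B)=1$. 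Assuming for the moment that we have a Borel-measurable $\phi\colon[0,1]\to[0,1]$ with $\bbL_\phi=\nu$, we note that $\bbL\brb{\phi^{-1}\brb{[0,1]\setminus B}}=\nu\brb{[0,1]\setminus B}=0$, so we may redefine $\phi$ on that $\bbL$-null Borel set to equal a fixed $b_0\in B$; the modified map is still Borel measurable, still has pushforward $\nu$, and now takes values in $B$. Then $\psi:=h^{-1}\circ\phi\colon[0,1]\to\cY$ is measurable and $\bbL_\psi=(\bbL_\phi)_{h^{-1}}=\nu_{h^{-1}}=(\mu_h)_{h^{-1}}=\mu$, i.e.\ $\mu$ is inverse-transformation-representable.

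It then remains to carry out the quantile construction on $[0,1]$. Given the Borel probability measure $\nu$ on $[0,1]$, let $F(x):=\nu\brb{[0,x]}$ be its cumulative distribution function, which is non-decreasing and right-continuous with $F(1)=1$, and define $\phi(u):=\inf\bcb{x\in[0,1]:F(x)\ge u}$ for $u\in(0,1]$, with $\phi(0):=0$. Being non-decreasing, $\phi$ is Borel measurable, and right-continuity of $F$ gives the key identity $\bcb{u\in(0,1]:\phi(u)\le x}=(0,F(x)]$ for every $x\in[0,1]$. Taking Lebesgue measure, and noting that the single point $u=0$ is negligible, we obtain $\bbL_\phi\brb{[0,x]}=F(x)=\nu\brb{[0,x]}$ for all $x\in[0,1]$; since the intervals $[0,x]$ form a $\pi$-system generating the Borel $\sigma$-algebra of $[0,1]$ and both $\bbL_\phi$ and $\nu$ are probability measures, the uniqueness theorem for measures agreeing on a generating $\pi$-system gives $\bbL_\phi=\nu$. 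Combined with the reduction above, this proves the theorem.

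The only genuinely nontrivial ingredient is the existence of the bimeasurable bijection $h$ onto a Borel subset of $[0,1]$, that is, the fact that every separable complete metric space, with its Borel $\sigma$-algebra, is a standard Borel space — this is exactly what is being cited from \cite[Corollary A.11]{takesaki1979operator}, so in the write-up it is a one-line invocation. Everything else is elementary: the generalized-inverse construction, the verification of the key identity via right-continuity of $F$, the uniqueness-of-measures argument, and the measure-zero adjustment that makes $\phi$ take values in $B$ so that the composition with $h^{-1}$ is well defined and measurable.
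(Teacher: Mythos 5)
Your proof is correct. The argument reduces cleanly to $[0,1]$ via the Borel isomorphism theorem for standard Borel spaces (the content of the cited Corollary A.11), constructs the generalized-inverse-CDF map $\phi$ there, verifies $\bbL_\phi=\nu$ via the identity $\{u\in(0,1]:\phi(u)\le x\}=(0,F(x)]$ (which hinges exactly on right-continuity of $F$), and then handles the measure-zero wrinkle so that the composition $h^{-1}\circ\phi$ is a genuine $\cY$-valued measurable map with the right pushforward. Every step checks out: in particular the equivalence $\phi(u)\le x\iff F(x)\ge u$ is correctly justified, the $\pi$-system uniqueness argument is applied correctly, and the adjustment of $\phi$ on the $\bbL$-null set $\phi^{-1}([0,1]\setminus B)$ is both necessary (so that $h^{-1}\circ\phi$ is well defined) and correctly observed to leave $\bbL_\phi$ unchanged.

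The paper itself provides no proof — the statement is followed only by the remark that it is ``a simple consequence'' of \cite[Corollary A.11]{takesaki1979operator} — so there is no fully written-out argument to compare against. Your write-up is the standard route and is presumably what the authors had in mind: Corollary A.11 supplies the bimeasurable injection of any standard Borel space into $[0,1]$, and the one genuinely elementary ingredient that has to be added on top of the citation is the quantile construction on $[0,1]$ (together with the null-set fix). You have supplied exactly that missing ingredient and nothing superfluous.
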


We are now ready to state the main theorem of this section. 
When we are uncertain about the underlying probability according to which some samples are drawn, and the uncertainty is between two probability measure $\Pb$ and $\bbQ$, the theorem provides a characterization under which we can simulate a random variable $Y$ using some independent random seed $U$ and having access to a 1-bit random variable $X$. 
This theorem can be of independent interest as a tool for lower bound reductions in online learning problems, as we used for example in \Cref{thm:two-bit-two-prices-lower}.
It establishes ``One-bit/two-scenarios inverse-transformation representability in separable and complete metric spaces''.

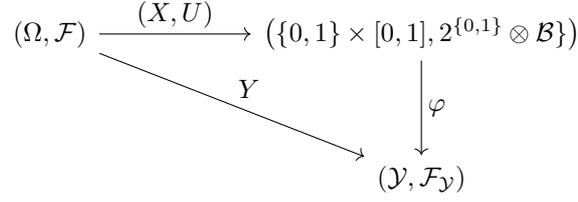
\begin{figure}
    \centering%
    \begin{tikzcd}
        {(\Omega,\mathcal F)} & {} & {\left(\{0,1\}\times[0,1],2^{\{0,1\}}\otimes\mathcal B\}\right)} \\
        \\
        && {(\mathcal Y, \mathcal F_{\mathcal Y})}
        \arrow["{{\varphi}}", from=1-3, to=3-3]
        \arrow["{(X,U)}", from=1-1, to=1-3]
        \arrow["{Y}", from=1-1, to=3-3]
    \end{tikzcd}
    \caption{\footnotesize{Pictorial representation of \Cref{t:inverse-transformation-method-2}. The way to interpret it is not event by event but in probability: the probability of a measurable set in $\mathcal F_Y$ can be computed in $\Omega$ equivalently via the pullback of $Y$, or of $\varphi\circ (X,U) $.}}
    \label{fig:diagram}
\end{figure}

\begin{theorem}
\label{t:inverse-transformation-method-2}
Suppose that $(\cY,d)$ is a separable and complete metric space with $\cF_\cY$ as the Borel $\sigma$-algebra of $(\cY,d)$. Let $(\Omega, \cF)$ be a measurable space, $X$ a random variable from $(\Omega, \cF)$ to $\brb{ \{0,1\}, 2^{\{0,1\}} }$,
$Y$ a random variable from $(\Omega,\cF)$ to $(\cY, \cF_\cY)$,
and $U$ random variable from $(\Omega, \cF)$ to $\brb{ [0,1], \cB }$, 
where $\cB$ is the Borel $\sigma$-algebra of $[0,1]$.
Suppose that $\Pb,\bbQ$ are probability measures defined on $\cF$, and $p \in (0,1)$, $q\in [0,1]$ are such that:
\begin{itemize}[topsep=4pt,itemsep=0pt,leftmargin=9pt]
    \item $\Pb[X=1] = p$ and $\bbQ[X=1] = q$.
    \item $U$ is a uniform random variable on $[0,1]$ both under $\Pb$ and $\bbQ$, i.e., we have that $\Pb_U = \leb = \bbQ_U$.
    \item $U$ is independent of $X$ both under $\Pb$ and $\bbQ$, i.e., $\Pb_{(X,U)} = \Pb_X \otimes \Pb_U$ and $\bbQ_{(X,U)} = \bbQ_X \otimes \bbQ_U$.
\end{itemize}
Then, the following are equivalent:
\begin{enumerate}
    \item\label{item:inverse-transform-one} There exists a measurable function $\fhi$ from $\brb{ \{0,1\}\times[0,1], 2^{ \{0,1\} } \otimes \cB }$ to $(\cY, \cF_\cY)$ such that
    \[
        \Pb_Y = \Pb_{\fhi(X,U)}
        \qquad \text{ and } \qquad
        \bbQ_Y = \bbQ_{\fhi(X,U)} \;.
    \]
    \item \label{item:inverse-transform-two} $\bbQ_Y \ll \Pb_Y$, and $\Pb_Y$-almost-surely it holds that
    \[
        \min \frac{\dif \bbQ_X}{\dif \Pb_X}
    \le
        \frac{\dif \bbQ_Y}{\dif \Pb_Y}
    \le
        \max \frac{\dif \bbQ_X}{\dif \Pb_X} \;.
    \] 
\end{enumerate}
\end{theorem}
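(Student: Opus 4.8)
The plan is to route the equivalence through an intermediate characterization: condition~\ref{item:inverse-transform-one} holds if and only if there exists a \emph{single} pair of probability measures $\mu_0,\mu_1$ on $(\cY,\cF_\cY)$ with $\Pb_Y = p\,\mu_1 + (1-p)\,\mu_0$ and $\bbQ_Y = q\,\mu_1 + (1-q)\,\mu_0$. The bridge between such a pair and the function $\varphi$ is \Cref{t:inverse-transformation-method}: given $\mu_0,\mu_1$, it supplies measurable $\psi_0,\psi_1\colon[0,1]\to\cY$ with $\bbL_{\psi_i}=\mu_i$, and one sets $\varphi(x,u):=\psi_x(u)$, which is measurable because the factor $\{0,1\}$ is discrete and each slice $\varphi(i,\cdot)=\psi_i$ is measurable; conversely, from $\varphi$ one sets $\mu_i:=\bbL_{\varphi(i,\cdot)}$. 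In both directions, the hypotheses that $X$ and $U$ are independent with $U$ uniform under \emph{both} $\Pb$ and $\bbQ$ yield, by a one-line pushforward computation, $\Pb_{\varphi(X,U)} = p\,\mu_1 + (1-p)\,\mu_0$ and $\bbQ_{\varphi(X,U)} = q\,\mu_1 + (1-q)\,\mu_0$, so $\Pb_Y=\Pb_{\varphi(X,U)}$ and $\bbQ_Y=\bbQ_{\varphi(X,U)}$ are precisely the two mixture identities. The problem therefore reduces to deciding \emph{when} such a decomposition exists and matching that with~\ref{item:inverse-transform-two}.

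For \ref{item:inverse-transform-one}$\Rightarrow$\ref{item:inverse-transform-two} I would note that $\Pb_Y=p\mu_1+(1-p)\mu_0$ with $p\in(0,1)$ forces $\mu_0,\mu_1\ll\Pb_Y$, hence $\bbQ_Y\ll\Pb_Y$; writing $a:=p\,\tfrac{\dif\mu_1}{\dif\Pb_Y}$ and $b:=(1-p)\,\tfrac{\dif\mu_0}{\dif\Pb_Y}$ one has $a+b=1$ $\Pb_Y$-a.s., so that
\[
    \frac{\dif\bbQ_Y}{\dif\Pb_Y}
    = q\,\frac{\dif\mu_1}{\dif\Pb_Y} + (1-q)\,\frac{\dif\mu_0}{\dif\Pb_Y}
    = \frac{q}{p}\,a + \frac{1-q}{1-p}\,b ,
\]
a convex combination of $q/p$ and $(1-q)/(1-p)$. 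Since $\Pb_X(\{0\})=1-p>0$ and $\Pb_X(\{1\})=p>0$, the derivative $\tfrac{\dif\bbQ_X}{\dif\Pb_X}$ is well defined and takes exactly the values $q/p$ (at $1$) and $(1-q)/(1-p)$ (at $0$); hence the convex-combination bound is exactly the sandwich $\min\tfrac{\dif\bbQ_X}{\dif\Pb_X}\le\tfrac{\dif\bbQ_Y}{\dif\Pb_Y}\le\max\tfrac{\dif\bbQ_X}{\dif\Pb_X}$.

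For the converse \ref{item:inverse-transform-two}$\Rightarrow$\ref{item:inverse-transform-one} I would set $g:=\tfrac{\dif\bbQ_Y}{\dif\Pb_Y}$ and solve the $2\times2$ mixture system for the weights. When $p\neq q$, define $\mu_i:=h_i\cdot\Pb_Y$ with $h_1:=\frac{(1-q)-(1-p)g}{p-q}$ and $h_0:=\frac{pg-q}{p-q}$; when $p=q$, condition~\ref{item:inverse-transform-two} forces $g\equiv1$ (then $\min=\max=1$), so $\Pb_Y=\bbQ_Y$ and one takes $\mu_0=\mu_1:=\Pb_Y$. A short algebraic check gives $ph_1+(1-p)h_0\equiv1$ and $qh_1+(1-q)h_0\equiv g$, which delivers both mixture identities and, upon integrating against $\Pb_Y$ and using $p\neq q$, also $\mu_0(\cY)=\mu_1(\cY)=1$. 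The one remaining point is $h_0,h_1\ge0$ $\Pb_Y$-a.s.: for $p>q$ this is equivalent to $\tfrac qp\le g\le\tfrac{1-q}{1-p}$, and a cross-multiplication shows $\tfrac qp\le\tfrac{1-q}{1-p}$, so these endpoints are $\min$ and $\max$ of $\tfrac{\dif\bbQ_X}{\dif\Pb_X}$; for $p<q$ the inequalities and the labels $\min/\max$ both swap; either way the requirement is exactly~\ref{item:inverse-transform-two}. Feeding $(\mu_0,\mu_1)$ back into the reduction of the first paragraph produces the desired $\varphi$.

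I expect the only genuinely delicate step to be this last one: checking that the \emph{signs} of the solved weights $h_0,h_1$ are pinned down \emph{exactly}---not just up to constants---by the two-sided bound in~\ref{item:inverse-transform-two}. This is precisely why the hypothesis is phrased with $\min$ and $\max$ of $\tfrac{\dif\bbQ_X}{\dif\Pb_X}$ rather than with $q/p$ and $(1-q)/(1-p)$ in a fixed order, and it forces the small case split on $\mathrm{sign}(p-q)$, with $p=q$ and the boundary cases $q\in\{0,1\}$ handled separately as above. Everything else---pushforwards through independent random variables, absolute-continuity bookkeeping, slice-wise measurability on a product with a discrete factor, and the appeal to \Cref{t:inverse-transformation-method}---is routine given the preliminaries of this appendix.
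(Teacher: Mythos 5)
Your proof is correct and follows essentially the same strategy as the paper: reduce condition~\ref{item:inverse-transform-one} to the existence of a single mixture pair $(\mu_0,\mu_1)$ via \Cref{t:inverse-transformation-method} and independence, then match that existence with condition~\ref{item:inverse-transform-two} by solving the $2\times 2$ linear system and checking non-negativity, with the $p=q$ case treated separately. The one place you diverge from the paper is the forward direction of the second step: the paper passes through two intermediate conditions stated in terms of signed-measure non-negativity and ratios $\bbQ_Y[A]/\Pb_Y[A]$ over all measurable sets, whereas you argue directly that $\frac{\dif\bbQ_Y}{\dif\Pb_Y}=\frac{q}{p}a+\frac{1-q}{1-p}b$ is $\Pb_Y$-a.s.\ a convex combination of the two values of $\frac{\dif\bbQ_X}{\dif\Pb_X}$; this is a modest streamlining that avoids the set-wise intermediate conditions entirely, but it is the same underlying computation.
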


\begin{proof}
We divide the proof in two parts, depending on whether or not $p=q$.

Assume first that $p\neq q$. 
In this case, we will prove the chain of equivalencies
\[
    \text{\Cref{item:inverse-transform-one} } \ \Leftrightarrow \
    \text{ \Cref{i:a} } \ \Leftrightarrow \ 
    \text{ \Cref{i:b} } \ \Leftrightarrow \
    \text{ \Cref{i:c} } \ \Leftrightarrow \ 
    \text{ \Cref{item:inverse-transform-two} } \;,
\]
where \Cref{i:a}, \Cref{i:b}, and \Cref{i:c} are the following propositions:
{
\setlist[enumerate]{label*=\alph*),ref=\alph*}
\begin{enumerate}
    \item \label{i:a} There exists two probability measures $\mu_0$ and $\mu_1$ over $\cF_\cY$ such that
    \[
        \Pb_Y = (1-p) \mu_0 + p \mu_1
    \qquad \text{ and } \qquad
        \bbQ_Y = (1-q) \mu_0 + q \mu_1 \;.
    \]
    \item \label{i:b} $\frac{q}{q-p} \Pb_Y - \frac{p}{q-p} \bbQ_Y \ge 0$ 
    and $\frac{1-p}{q-p} \Pb_Y - \frac{1-q}{q-p} \bbQ_Y \ge 0$ .
    \item \label{i:c} $\bbQ_Y \ll \Pb_Y$ and
    $
        \min \brb{ \frac{q}{p}, \, \frac{1-q}{1-p} }
    \le
        \frac{\bbQ_Y [A] }{\Pb_Y[A]}
    \le
        \max \brb{ \frac{q}{p}, \, \frac{1-q}{1-p} }
    $
    for all $A \in \cF_\cY$ such that $\Pb_Y[A] > 0$.
\end{enumerate}
}

We begin by proving that \Cref{item:inverse-transform-one} is equivalent to \Cref{i:a}.
Assume \Cref{item:inverse-transform-one}. 
Define $\mu_0 = \Pb_{\fhi(0,U)}$ and $\mu_1 = \Pb_{\fhi(1,U)}$.
Since $U$ is uniform under both under $\Pb$ and $\bbQ$, it also holds that $\mu_0 = \bbQ_{\fhi(0,U)}$ and $\mu_1 = \bbQ_{\fhi(1,U)}$. Thus
\begin{align*}
    \Pb_Y
&
    =
    \Pb_{\fhi(X,U)}
    =
    (1-p)\Pb_{\fhi(0,U)} + p\Pb_{\fhi(1,U)}
    =
    (1-p)\mu_0 + p\mu_1
\\
    \bbQ_Y
&
    =
    \bbQ_{\fhi(X,U)}
    =
    (1-q)\bbQ_{\fhi(0,U)} + q\bbQ_{\fhi(1,U)}
    =
    (1-q)\mu_0 + q\mu_1 \;,
\end{align*}
where we used that fact that $X$ and $U$ are independent both under $\Pb$ and $\bbQ$ and that $\Pb[X=1]=p$, $\bbQ[X=1]=q$.
This proves \Cref{i:a}.

Vice versa, assume \Cref{i:a}.
By \Cref{t:inverse-transformation-method}, we can find two measurable functions $\psi_0, \psi_1$ from $\brb{ [0,1], \cB }$ to $(\cY, \cF_\cY)$ such that $\mu_0 = \leb_{\psi_0}$ and $\mu_1 = \leb_{\psi_1}$ and define 
\[
    \fhi(x,u) 
= 
    \begin{cases}
        \psi_0 (u) & \text{ if } x = 0 \\
        \psi_1 (u) & \text{ if } x = 1
    \end{cases}
\]
for all $x\in\{0,1\}$ and $u\in[0,1]$.
Then $\fhi$ is a measurable function from $\brb{ \{0,1\}\times[0,1], 2^{ \{0,1\} } \otimes \cB }$ to $(\cY, \cF_\cY)$, and since $X$ is independent of $U$ and $U$ is uniform on $[0,1]$ both under $\Pb$ and $\bbQ$, we have
\begin{align*}
    \Pb_{\fhi(X,U)}
&
    =
    (1-p)\Pb_{\fhi(0,U)} + p\Pb_{\fhi(1,U)}
    =
    (1-p)\Pb_{\psi_0(U)} + p\Pb_{\psi_1(U)}
\\
&
    =
    (1-p)\leb_{\psi_0} + p\leb_{\psi_1}
    =
    (1-p)\mu_0 + p\mu_1
    =
    \Pb_Y
\\
    \bbQ_{\fhi(X,U)}
&
    =
    (1-q)\bbQ_{\fhi(0,U)} + q\bbQ_{\fhi(1,U)}
    =
    (1-q)\bbQ_{\psi_0(U)} + q\bbQ_{\psi_1(U)}
\\
&
    =
    (1-q)\leb_{\psi_0} + q\leb_{\psi_1}
    =
    (1-q)\mu_0 + q\mu_1
    =
    \bbQ_Y
\end{align*}
This proves \Cref{item:inverse-transform-one} and in turn yields that \Cref{item:inverse-transform-one} is equivalent to \Cref{i:a}.

We now prove that \Cref{i:a} is equivalent to \Cref{i:b}.
Assume \Cref{i:a}.
Then, for each $A \in \cF_{\cY}$ we have that the pair $\brb{\mu_0[A],\mu_1[A]}$ is the (only) solution of the linear system
\[
    \begin{cases}
        (1-p) x_0 + p x_1 &= \Pb_Y[A]  \\
        (1-q) x_0 + q x_1 &= \bbQ_Y[A]
    \end{cases}
\]
in the two variables $(x_0,x_1)$, which implies
\[
    \mu_0[A] = \frac{q}{q-p} \Pb_Y[A] - \frac{p}{q-p} \bbQ_Y[A]
\qquad \text{ and } \qquad
    \mu_1[A] =  \frac{1-p}{q-p} \bbQ_Y[A] - \frac{1-q}{q-p}\Pb_Y[A] \;.
\]
Since $\mu_0$ and $\mu_1$ are (non-negative) measures, this implies \Cref{i:b}.

Vice versa, assume \Cref{i:b}.
Define
\[
    \mu_0 = \frac{q}{q-p} \Pb_Y - \frac{p}{q-p} \bbQ_Y
\qquad \text{ and } \qquad
    \mu_1 =  \frac{1-p}{q-p} \bbQ_Y - \frac{1-q}{q-p}\Pb_Y \;.
\]
Since $\mu_0$ and $\mu_1$ are a linear combination of measures, they are signed measures and, by \Cref{i:b}, actually, they are (non-negative) measures. The fact that they are also probability measures follows trivially from $\Pb_Y[\cY] = 1 = \bbQ_Y[\cY]$. Now, a direct verification shows that $\Pb_Y = (1-p) \mu_0 + p \mu_1$ and $\bbQ_Y = (1-q) \mu_0 + q \mu_1$, i.e., that \Cref{i:a} holds.
We have then proved that \Cref{i:a} is equivalent to \Cref{i:b}.

We now prove that \Cref{i:b} is equivalent to \Cref{i:c}.
Firstly, note that by elementary linear-algebra (dividing by $\tilde p$ and solving by $\tilde q /\tilde p$ the linear system of inequalities), for each $\tilde q \in [0,1]$ and $\tilde p \in (0,1]$, the following equivalence holds
\begin{equation}
    \label{e:linear-algebra}
    \begin{cases}
    \displaystyle{ \frac{q}{q-p} \tilde p - \frac{p}{q-p} \tilde q \ge 0 }
    \\[2ex]
    \displaystyle{ \frac{1-p}{q-p} \tilde q - \frac{1-q}{q-p} \tilde p \ge 0 }
    \end{cases}
    \iff \ \
    \min \lrb{ \frac{q}{p}, \, \frac{1-q}{1-p} }
    \le
    \frac{\tilde q}{\tilde p}
    \le
    \max \lrb{ \frac{q}{p}, \, \frac{1-q}{1-p} \textbf{}}
\end{equation}

Assume \Cref{i:b}.
Note that if $p<q$ (resp., $q<p$), then if $A\in \cF_\cY$ is such that $\Pb_Y[A]=0$, the first (resp., second) inequality in \Cref{i:b} implies that also $\bbQ_Y[A]=0$, which in turn yields $\bbQ_Y \ll \Pb_Y$.
Furthermore, for each $A\in \cF_\cY$ such that $\Pb_Y[A] \neq 0$, the equivalence in \eqref{e:linear-algebra} with $\tilde p = \Pb_Y[A]$ and $\tilde q = \bbQ_Y[A]$ implies that 
\[
    \min \lrb{ \frac{q}{p}, \, \frac{1-q}{1-p} }
    \le
    \frac{\bbQ_Y[A]}{\Pb_Y[A]}
    \le
    \max \lrb{ \frac{q}{p}, \, \frac{1-q}{1-p} \textbf{}}
\]
which yields \Cref{i:c}.

Vice versa, assume \Cref{i:c}.
Note that \Cref{i:b} holds 
\begin{itemize}[topsep=4pt,itemsep=0pt,leftmargin=9pt]
    \item For all $A \in \cF_\cY$ such that $\Pb_Y[A] = 0$, because in this case also $\bbQ_Y[A] = 0$
    \item For all $A \in \cF_\cY$ such that $\Pb_Y[A] \neq 0$, by the equivalence in \eqref{e:linear-algebra} with $\tilde p = \Pb_Y[A]$ and $\tilde q = \bbQ_Y[A]$
\end{itemize}  
This proves that \Cref{i:b} and \Cref{i:c} are equivalent.

We now prove that \Cref{i:c} is equivalent to \Cref{item:inverse-transform-two}.
Assume \Cref{i:c}. Assume by contradiction that \Cref{item:inverse-transform-two} does not hold. Then, there exists $A \in \cF_{\cY}$ such that $\Pb_Y[A] > 0$ such that either for all $y \in A$ it holds that $\max\brb{\frac{\dif \bbQ_X}{\dif \Pb_X}} < \frac{\dif \bbQ_Y}{\dif \Pb_Y}(y)$ or it holds that $\min\brb{\frac{\dif \bbQ_X}{\dif \Pb_X}} > \frac{\dif \bbQ_Y}{\dif \Pb_Y}(y)$. In the first case
\[
    \max\lrb{\frac{\dif \bbQ_X}{\dif \Pb_X}}
=
    \max \lrb{ \frac{q}{p}, \, \frac{1-q}{1-p} }
\ge
    \frac{\bbQ_Y[A]}{\Pb_Y[A]}
=
    \frac{1}{\Pb_Y[A]} \int_A \frac{\dif \bbQ_Y}{\dif \Pb_Y} \diff \Pb_Y
>
    \max\lrb{\frac{\dif \bbQ_X}{\dif \Pb_X}} \;,
\]
yielding the contradiction we were seeking. The second case yields a contradiction in an analogous manner.

Vice versa, assume \Cref{item:inverse-transform-two}.
Then, if $A \in \cF_{\cY}$ is such that $\Pb_Y[A] > 0$, notice that
\[
    \min \lrb{ \frac{q}{p}, \frac{1-q}{1-p} }
=
    \min\lrb{\frac{\dif \bbQ_X}{\dif \Pb_X}}
\le
    \frac{1}{\Pb_Y[A]} \int_A \frac{\dif \bbQ_Y}{\dif \Pb_Y} \diff \Pb_Y
\le
    \max\lrb{\frac{\dif \bbQ_X}{\dif \Pb_X}}
=
    \max \lrb{ \frac{q}{p}, \frac{1-q}{1-p} } 
\]
which together with
\[
    \frac{\bbQ_Y[A]}{\Pb_Y[A]}
=
    \frac{1}{\Pb_Y[A]} \int_A \frac{\dif \bbQ_Y}{\dif \Pb_Y} \diff \Pb_Y
\]
(since $\bbQ_Y \ll \Pb_Y$), implies \Cref{i:c}.
This proves that \Cref{i:c} and \Cref{item:inverse-transform-two} are equivalent and shows in turn that \Cref{item:inverse-transform-one} is equivalent to \Cref{item:inverse-transform-two} whenever $p\neq q$.

Assume now that $p=q$.
Assume \Cref{item:inverse-transform-one}.
Since $X$ is independent of $U$ and $U$ is uniform on $[0,1]$ both under $\Pb$ and $\bbQ$, we get
\[
    \Pb_Y
=
    \Pb_{\fhi(X,U)}
=
    (1-p)\Pb_{\fhi(0,U)} + p\Pb_{\fhi(1,U)}
=
    (1-q)\bbQ_{\fhi(0,U)} + q\bbQ_{\fhi(1,U)}
=
    \bbQ_{\fhi(X,U)}
=
    \bbQ_Y \;.
\]
Hence, in particular $\bbQ_Y \ll \Pb_Y$ and $\frac{\dif \bbQ_Y}{\dif \Pb_Y} = 1$ $\Pb_Y$-almost-surely, which, together with the fact
\[
    \min\lrb{ \frac{\dif \bbQ_X}{\dif \Pb_X} }
\le
    1
\le
    \max\lrb{ \frac{\dif \bbQ_X}{\dif \Pb_X} }
\]
implies \Cref{item:inverse-transform-two}.

Vice versa, assume \Cref{item:inverse-transform-two}.
Fix a measurable function $\psi$ from $\brb{ [0,1], \cB }$ to $(\cY, \cF_\cY)$ such that $\Pb_Y = \leb_\psi$ (whose existence is guaranteed by \Cref{t:inverse-transformation-method}). 
Let $\fhi(x,u)=\psi(u)$ for all $x\in \{0,1\}$ and $u\in [0,1]$.
Being $U$ uniform both under $\Pb$ and $\bbQ$, we get that $\Pb_{\fhi(X,U)} = \Pb_{\psi(U)}= \leb_\psi = \bbQ_{\psi(U)} = \bbQ_{\fhi(X,U)}$. 
Moreover, since $p=q$, we have that $\min \frac{\dif \bbQ_X}{\dif \Pb_X} = 1 = \max \frac{\dif \bbQ_X}{\dif \Pb_X}$, which, together with \Cref{item:inverse-transform-two}, yields that, for any $A \in \cF_\cY$, 
\[
    \bbQ_Y[A] 
= 
    \int_A \frac{\dif \bbQ_Y}{\dif \Pb_Y} \diff \Pb_Y 
= 
    \int_A 1 \diff \Pb_Y
= 
    \Pb_Y[A] \;,
\]
thus $\Pb_Y = \bbQ_Y$.
Putting everything together, since we proved that all distributions $\Pb_{\fhi(X,U)}$, $\Pb_Y$, $\bbQ_{\fhi(X,U)},$ $\bbQ_Y$ are equal to each other, we obtain \Cref{item:inverse-transform-one}, concluding the proof.
\end{proof}

\section{Missing proofs from Section \ref{sec:lowerbound}}
\label{app:lowerbound}
\restatable*
\begin{proof}
A direct verification shows that, for all $(p,q)\in Q_6$ and $k\in[K]$, it holds that
\[
    \min \lrb{ \frac{\dif \Pb^k_{Y(k)}}{\dif \Pb^0_{Y(k)}} }
=
    1-2\cprob \cdot \e
\le
    \frac{\dif \Pb^k_{\lrb{ \I(S\le p), \I\{q \le B\}}}}{\dif \Pb^0_{\lrb{\I(S\le p), \I\{q \le B\}}}}
\le
    1+2\cprob \cdot \e
=
    \max \lrb{ \frac{\dif \Pb^k_{Y(k)}}{\dif \Pb^0_{Y(k)}} }
\]
and $\Pb^k_{\lrb{ \I(S\le p), \I\{q \le B\}}} \ll \Pb^0_{\lrb{\I(S\le p), \I\{q \le B\}}}$. 
For each $(p,q) \in Q_6$, by \Cref{t:inverse-transformation-method-2}, there exists (and we fix)
\[
    \fhi_{p,q} \colon \{0,1\} \times [0,1] \to \{0,1\}^2
\]
such that
\[
    \Pb^{\iota(p,q)}_{\fhi_{p,q}(Y(\iota(p,q)),V)}
=
    \Pb^{\iota(p,q)}_{\lrb{ \I(S\le p), \I\{q \le B\}}}
\qquad \text{ and } \qquad
    \Pb^{0}_{\fhi_{p,q}(Y(\iota(p,q)),V)}
=
    \Pb^{0}_{\lrb{ \I(S\le p), \I\{q \le B\}}} \;.
\]
Since for all $(p,q)\in Q_6$ and all $k \in [K]\m\bcb{ \iota(p,q) }$, we have  
$
    \Pb^{k}_{\lrb{ \I(S\le p), \I\{q \le B\}}}
=
    \Pb^{0}_{\lrb{ \I(S\le p), \I\{q \le B\}}}
$ (by \Cref{cl:perturbation}) and
$
    \Pb^{k}_{\fhi_{p,q}(Y(\iota(p,q)),V)}
=
    \Pb^{0}_{\fhi_{p,q}(Y(\iota(p,q)),V)}
$, then, for all $(p,q)\in Q_6$ and all $k\in\{0,\dots,K\}$, it holds that
\[
    \Pb^{k}_{\fhi_{p,q}(Y(\iota(p,q)),V)}
=
    \Pb^{k}_{\lrb{ \I(S\le p), \I\{q \le B\}}} \;.
\]
Moreover, since for all $(p,q)\in \cU \m Q_6$ and for all $k\in\{0,\dots,K\}$, it holds that 
$
    \Pb^{k}_{\lrb{ \I(S\le p), \I\{q \le B\}}} 
= 
    \Pb^{0}_{\lrb{ \I(S\le p), \I\{q \le B\}}}
$ (by \Cref{cl:perturbation}), then, by \Cref{t:inverse-transformation-method}, there exists (and we fix) 
\[
    \tilde \fhi_{p,q} \colon [0,1] \to \{0,1\}^2
\]
such that, for all $k\in\{0,\dots,K\}$, it holds that
\[
    \Pb^{k}_{\tilde \fhi_{p,q}(V)}
=
    \Pb^{k}_{\lrb{ \I(S\le p), \I\{q \le B\}}} \;.
\]
Defining for all $(p,q) \in \cU \m Q_6$ and $(y,v) \in \{0,1\} \times [0,1] $, $\fhi_{p,q}(y,v) = \tilde \fhi_{p,q}(v)$, we obtain the result.
\end{proof}

\begin{lemma}
\label{lemma:finale}
    Fix any deterministic algorithm $\cA$ for the related \gener{} problem on $2K$ actions, then at least one of the following two inequality holds: $\frac1K \sum_{k\in[K]} \tilde R_T^k  (\cA) \ge \frac{1}{50^3} T^{3/4}$ or $\tilde R_T^0(\cA) \ge \frac{1}{50^3} T^{3/4}$.
\end{lemma}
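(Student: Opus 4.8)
The plan is to reduce everything to a counting argument plus a single change‑of‑measure estimate. Since the main proof has already reduced randomized algorithms to deterministic ones, I would fix a deterministic $\cA$ for the \gener{} problem, so that the history $\cH_T$ (the sequence of played actions and observed bits) is a deterministic function of the feedback and its law under a scenario is determined by the laws of the observable bits alone. First I would introduce the bookkeeping quantities $N^k_j = \bbE^k[\sum_{t\le T}\I\{\tilde I_t = j\}]$, $M^k_j = \bbE^k[\sum_{t\le T}\I\{\tilde I_t = K+j\}]$ and $E^k = \sum_{j\in[K]} N^k_j$, for $k\in\{0,\dots,K\}$ and $j\in[K]$ (expected pull counts of the $j$-th exploring action, the $j$-th exploiting action, and all exploring actions, under $\Pb^k$). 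From the reward table \Cref{eq:revenue} I get two elementary per‑round facts: in $\Pb^0$ every exploring pull costs $\cplat$ relative to the optimum $\cplat T$, hence $\tilde R^0_T(\cA) \ge \cplat\,E^0$; and in $\Pb^k$, $k\in[K]$, every action other than the unique optimal $K+k$ costs at least $\cspike\,\e$, hence $\tilde R^k_T(\cA) \ge \cspike\,\e\,(T - M^k_k)$.

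Next I would split on the size of $E^0$. If $E^0 \ge c_1 T^{3/4}$ for a suitable absolute constant $c_1$, then already $\tilde R^0_T(\cA) \ge \cplat\,c_1\,T^{3/4} \ge \tfrac{1}{50^3}T^{3/4}$ and the first alternative holds. Otherwise $\sum_{j\in[K]} N^0_j = E^0 < c_1 T^{3/4}$, so Markov's inequality gives, for any fixed $\delta\in(0,1)$, that $\cG := \{ j\in[K] : N^0_j < \tfrac{c_1 T^{3/4}}{\delta K} \}$ has $|\cG|\ge (1-\delta)K$. The idea is that for $j\in\cG$ the learner has probed the $j$-th exploring action too rarely to separate scenario $\Pb^j$ from $\Pb^0$, and---this being the \emph{only} action that leaks such information---it therefore cannot concentrate its pulls on the optimal arm $K+j$ under $\Pb^j$ either.

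To turn this into a bound I would use a divergence decomposition. The structural observation is that, comparing $\Pb^0$ with $\Pb^k$, the only action whose feedback law changes is the $k$-th exploring action: by \Cref{eq:Yprob} all other informative bits $Y_t(i)$, $i\in[K]\setminus\{k\}$, are $\mathrm{Ber}(\tfrac12)$ under both measures, and exploiting actions reveal nothing. Hence $\kl(\Pb^0_{\cH_T}\,\|\,\Pb^k_{\cH_T}) = N^0_k\,\kl(\mathrm{Ber}(\tfrac12)\,\|\,\mathrm{Ber}(\tfrac12+\cprob\,\e)) \le C\,N^0_k\,\e^2$ for an absolute constant $C$. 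For $k\in\cG$, the choices $\e = \tfrac1{12K}$ and $K=\lce{T^{1/4}}$ make $N^0_k\,\e^2 \le \tfrac{c_1}{144\,\delta}\cdot\tfrac{T^{3/4}}{K^3}\le \tfrac{c_1}{144\,\delta}$ a small constant, so by Pinsker's inequality the total variation distance between the two history laws is at most $\tau = \sqrt{C c_1/(288\,\delta)}$, which I can make arbitrarily small. Since $M^k_k$ is the $\Pb^k$-mean of a $[0,T]$-valued $\cH_T$-measurable statistic, $M^k_k \le M^0_k + T\tau$; summing over $k\in\cG$ and using $\sum_{j\in[K]} M^0_j \le T$, I get $\sum_{k\in\cG}(T - M^k_k) \ge ((1-\delta)(1-\tau) - \tfrac1K)\,K\,T$, whence $\tfrac1K\sum_{k\in[K]}\tilde R^k_T(\cA) \ge \cspike\,\e\cdot\tfrac1K\sum_{k\in\cG}(T - M^k_k) \ge ((1-\delta)(1-\tau)-\tfrac1K)\,\tfrac{\cspike}{12K}\,T$, which is $\ge \tfrac{1}{50^3}T^{3/4}$ thanks to $K=\lce{T^{1/4}}$ and $T\ge 8008$. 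I would then pick $c_1$ and $\delta$ so that both branches close at the common constant $\tfrac{1}{50^3}$.

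The hard part, I expect, is the change‑of‑measure step together with the constant bookkeeping: I must justify the divergence decomposition cleanly for this \emph{weakly observable} feedback structure---exploiting actions leak nothing, so all the information the learner ever gets about which scenario it faces passes through the single exploring action $k$---and then keep $\cplat$, $\cspike$, $\cprob$, the KL‑to‑variance constant $C$, the split parameter $c_1$, the pigeonhole slack $\delta$ and the rounding loss in $K = \lce{T^{1/4}}$ under tight enough control to land precisely at $\tfrac{1}{50^3}T^{3/4}$---which is exactly what forces the hypothesis $T\ge 8008$.
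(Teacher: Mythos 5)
Your proposal is correct in substance and uses the same core ingredients as the paper---the regret decomposition into ``exploration cost'' $\cplat$ and ``suboptimality cost'' $\cspike\e$, the chain rule for KL divergence across the observed bits, Pinsker's inequality, and a case split on the total exploration budget---but it organizes them differently. The paper proves a per-arm total-variation bound (\Cref{claim:finale}, giving $\bbE^k[M_T(k)] - \bbE^0[M_T(k)] \le \cprob\e T\sqrt{2\bbE^0[N_T(k)]}$), then averages over all $k\in[K]$ using Jensen's inequality applied to the concave square root, and finally splits on whether $\cprob\e\sqrt{(2/K)\bbE^0[N_T]}$ exceeds $1/10$. You instead split up front on the size of $E^0=\bbE^0[N_T]$, use Markov's inequality to carve out a subset $\cG$ of at least $(1-\delta)K$ rarely-probed exploring arms, and run the change-of-measure argument only on those, closing with a counting argument via $\sum_j M^0_j\le T$. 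Both strategies are standard in bandit-style lower bounds; Jensen avoids the extra parameters $c_1,\delta$ and the pigeonhole slack, whereas your route makes the ``detection budget'' threshold more transparent. One caveat: you explicitly defer the constant bookkeeping (``I would then pick $c_1$ and $\delta$...''). With $\cspike = \tfrac{1}{864(1+8a)}\approx \tfrac{1}{8100}$ and $K=\lce{T^{1/4}}$, the target constant $\tfrac{1}{50^3}$ is already tight in the paper's own argument (which loses only a factor $\tfrac{4}{5}$ and $\tfrac{1}{10}$), so the extra slack your Markov step introduces (the $(1-\delta)$ and $\tau$ factors, plus a $1/K$ loss) needs to be tracked carefully---you may land at a slightly worse absolute constant, even though the $\Omega(T^{3/4})$ rate is unaffected.
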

\begin{proof}
For any deterministic algorithm $\cA$ for the related problem on $2K$ actions, let $I_1, I_2, \dots$ be the actions played by $\cA$ on the basis of the sequential feedback received $Z_1,Z_2, \dots$ and define $N_t(i)$ and $M_t(i)$ as the random variables counting the number of times the learning algorithm $\cA$ plays action $i$, respectively $i+K$, up to time $t$, for any $i \in [K]$ and any time $t \in [T]$:
\[
    N_t(i) = \sum_{s=1}^t \I\{I_s = i\}, \,\, M_t(i) = \sum_{s=1}^t \I\{I_s = i + K\}.
\]
Using these variables, we can define the $N_t$ and $M_t$ as the counters of how many times exploiting, respectively exploring, actions have been played up to time $t$, for any $t \in [T]$:
\[
    N_t = \sum_{i\in[K]} N_t(i) \;, \
    M_t = \sum_{i\in[K]} M_t(i) \;.
\]
We relate the expected values of $M_T(k)$ under $\Pb^0$ and $\Pb^k$ as a function of the expected number of times the algorithm plays the corresponding exploring actions ($N_T(k)$). This formalizes the intuition that to discriminate between the different $\Pb^k$ the learner needs to play exploring actions. 
\begin{claim}
\label{claim:finale}
    The following inequality holds true for any $k\in[K]$:
    \begin{equation}
        \bbE^k \bsb{ M_T(k) } - \bbE^0 \bsb{ M_T(k) } \le  \cprob \cdot \e \cdot T \cdot \sqrt{ 2 \bbE^0 [N_T(k)] }.    
    \end{equation}
\end{claim}
\begin{proof}[Proof of \Cref{claim:finale}]
    For any $t\in [T]$, the action $I_t = I_t(Z_1, \dots, Z_{t-1})$ selected by $\cA$ at round $t$ is a deterministic function of $Z_1, \dots, Z_{t-1}$, for each $k\in[K]$. In formula, we then have the following
\begin{align}
\nonumber
 \bbE^k \bsb{ M_T(k) } - \bbE^0 \bsb{ M_T(k) } &=
    \sum_{t = 2}^T \Brb{ \Pb^k\bsb{ I_t (Z_1, \dots, Z_{t-1} ) = k + K } - \Pb^0\bsb{ I_t (Z_1, \dots, Z_{t-1} ) = k + K } }\\
\label{eq:TV}&\le
    \sum_{t = 2}^T \bno{ \Pb^k_{(Z_1, \dots, Z_{t-1} )} - \Pb_{(Z_1, \dots, Z_{t-1} )}^0 }_{\mathrm{TV}},
\end{align}
where $\lno{\cdot}_{\mathrm{TV}}$ denotes the total variation norm. We move now our attention towards bounding the total variation norm. To that end we use Pinsker's inequality and apply the chain rule for the KL divergence $\kl$. For each $k \in [K]$ and $t \in [T]$ we have the following:

\begin{align}
\nonumber
    \bno{ \Pb_{(Z_1,\dots,Z_t)}^0 &- \Pb^k_{(Z_1,\dots,Z_t)}}_{\mathrm{TV}}
\le 
    \sqrt{\frac12 \kl \brb{ \Pb_{(Z_1,\dots,Z_t)}^0,\, \Pb^k_{(Z_1,\dots,Z_t)} }}
\\
\label{eq:secondKL}
&\le
    \sqrt{ \frac12 \lrb{ \kl\brb{ \Pb_{Z_1}^0, \, \Pb^k_{Z_1} } + \sum_{s=2}^t \bbE\Bsb{ \kl \brb{ \Pb_{Z_s \mid Z_1,\dots,Z_{s-1} }^0 , \, \Pb^k_{Z_s \mid Z_1,\dots,Z_{s-1}} } } } }
\end{align}
We bound the two KL terms separately. $\cA$ is a deterministic algorithm, thus $I_1$ is a fixed element of $[2K]$, which implies that, for all $k\in [K]$,
\begin{align}
\nonumber
&
    \kl\brb{ \Pb_{Z_1}^0, \, \Pb^k_{Z_1} }
\\
\nonumber 
& \quad
=
    \lrb{ 
    \ln\lrb{ \frac{\Pb^0[Y_1(k) = 0]}{ \Pb^k[Y_1(k) = 0] } } \Pb^0[Y_1(k) = 0]
    +
    \ln\lrb{ \frac{\Pb^0[Y_1(k) = 1]}{ \Pb^k[Y_1(k) = 1] } } \Pb^0[Y_1(k) = 1]
    } \I\bcb{ I_1 = k }
\\
\label{eq:KLterm1}
& \quad
=
    \frac12\lrb{
    \ln \frac{\nicefrac{1}{2}}{\nicefrac{1}{2} - \cprob \cdot \e}
    +
    \ln\frac{\nicefrac{1}{2}}{\nicefrac{1}{2} + \cprob \cdot \e}
    } \cdot \I\{I_1 = k\}
\end{align}
Similarly, since $\cA$ is a deterministic algorithm, for all $s \ge 2$, the action $I_s = I_s(Z_1,\dots,Z_{s-1})$ selected by $\cA$ at time $t$ is a function of $Z_1,\dots, Z_{s-1}$ only, which implies, for all $k \in [K]$, 
\begin{align}
\nonumber
&
    \kl \brb{ \Pb_{Z_s \mid Z_1,\dots,Z_{s-1} }^0 , \, \Pb^k_{Z_s \mid Z_1,\dots,Z_{s-1}} }
\\
\nonumber
& 
\qquad =
    \bbE^0 \left[
        \ln \lrb{ \frac{ \Pb^0[Z_s = 0 \mid Z_1, \dots, Z_{s-1} ] }{ \Pb^k[Z_s = 0 \mid Z_1, \dots, Z_{s-1} ] } } \Pb^0[Z_s = 0 \mid Z_1, \dots, Z_{s-1} ]
        \right.
\\
\nonumber
& \qquad 
        \qquad
        +
        \left.
        \ln \lrb{ \frac{ \Pb^0[Z_s = 1 \mid Z_1, \dots, Z_{s-1} ] }{ \Pb^k[Z_s = 1 \mid Z_1, \dots, Z_{s-1} ] } } \Pb^0[Z_s = 1 \mid Z_1, \dots, Z_{s-1} ]
    \right]
\\
\nonumber
&
\qquad
=
    \bbE^0 \bigg[ \lrb{ 
    \ln\lrb{ \frac{\Pb^0[Y_s(k) = 0]}{ \Pb^k[Y_s(k) = 0] } } \Pb^0[Y_s(k) = 0]
    +
    \ln\lrb{ \frac{\Pb^0[Y_s(k) = 1]}{ \Pb^k[Y_s(k) = 1] } } \Pb^0[Y_s(k) = 1]
    } 
\\
\nonumber& \qquad \qquad 
    \times \I\bcb{ I_s (Z_1, \dots, Z_{s-1}) = k } \bigg]
\\
\label{eq:KLterm2}
&
\qquad
=
    \frac12\lrb{
    \ln \frac{\nicefrac{1}{2}}{\nicefrac{1}{2} - \cprob \cdot \e}
    +
    \ln\frac{\nicefrac{1}{2}}{\nicefrac{1}{2} + \cprob \cdot \e}
    }
    \Pb^0 \bsb{ I_s (Z_1, \dots, Z_{s-1}) = k } 
\end{align}
For $T\ge 8008$ the following useful inequality holds:
\begin{equation}
\label{eq:useful}
    \frac12\lrb{
    \ln \frac{\nicefrac{1}{2}}{\nicefrac{1}{2} - \cprob \cdot \e}
    +
    \ln\frac{\nicefrac{1}{2}}{\nicefrac{1}{2} + \cprob \cdot \e}
    }
\le
    4 \cdot \cprob^2 \cdot \e^2.
\end{equation}
We can combine the inequalities in \Cref{eq:KLterm1} and \Cref{eq:KLterm2} into \Cref{eq:secondKL} and plug in the bound in to obtain:
\[
    \lno{ \Pb_{(Z_1,\dots,Z_t)}^0 - \Pb^k_{(Z_1,\dots,Z_t)}}_{\mathrm{TV}} \le \cprob \cdot \e \cdot \sqrt{ 2 \bbE [N_t(k)] }
\]
Once we have this upper bound on the total variations of the random variables $(Z_1,\dots,Z_t)$ under $\Pb^0$ and $\Pb^k$ we can get back to the initial \Cref{eq:TV} and obtain the desired bound via Jensen: 

\[
    \bbE^k \bsb{ M_T(k) } - \bbE^0 \bsb{ M_T(k) }
\le
    \sum_{t=2}^T \cprob \cdot \e \cdot \sqrt{ 2 \bbE^0 [N_{t-1}(k)] }
\le
    \cprob \cdot \e \cdot T \cdot \sqrt{ 2 \bbE^0 [N_T(k)] }. \qedhere
\]    
\end{proof}
Averaging the quantitative bounds in \Cref{claim:finale} for all $k$ in $[K]$ and applying Jensen's inequality we get the following :
    \begin{align}
    \nonumber
    \frac{1}{K} \sum_{k \in [K]} \bbE^k[M_T(k)]
&\le
    \frac{1}{K} \sum_{k \in [K]} \bbE^0[M_T(k)] + \cprob \cdot \e \cdot T \cdot \sqrt{ \frac{2}{K} \sum_{k \in [K]}\bbE^0 \lsb{  N_T(k) } }
\\
\nonumber
&=
    \frac{1}{K} \bbE^0[M_T] + \cprob \cdot \e \cdot T \cdot \sqrt{ \frac{2}{K} \bbE^0[N_T] }\\
    \label{eq:M_TvsN_T}
&\le
    \lrb{ \frac{1}{10} +  \cprob \cdot \e \cdot \sqrt{ \frac{2}{K} \bbE^0[N_T] } } \cdot T,
\end{align}
    where in the last inequality we used the particular choice of $K=\lceil T^{1/4}\rceil$ and the fact that $T \ge 8008$.

    We have all the ingredients to directly lower bound the average regret suffered by $\cA$. Note that every time an exploring action is played the learner suffers instantaneous regret that is $(\cplat + \cspike \cdot \e)$, while everytime a (wrong) exploiting action is played the learner suffers only $\cspike \cdot \e$:
    \begin{align*}
        \frac{1}{K} \sum_{k\in [K]} \tilde R_T^k (\cA)
    &=
        \frac{1}{K} \sum_{k\in [K]} \Brb{ \cspike \cdot \e \cdot \bbE^k\bsb{ T - M_T(k) - N_T }  + (\cplat + \cspike \cdot \e) \cdot \bbE^k [ N_T ] }
    \\
    &
    \ge
        \cspike \cdot \e \lrb{ T - \frac{1}{K} \sum_{k\in [K]} \bbE^k\bsb{ M_T(k) }} \\
&\ge
    \cspike \cdot \e \cdot  \lrb{ \frac{9}{10} - \cprob \cdot \e \cdot \sqrt{ \frac{2}{K} \bbE^0[N_T] } } \cdot T.
\end{align*}

We have now two cases: if $\cprob \cdot \e \cdot \sqrt{ \frac{2}{K} \bbE^0[N_T] }$ is at most $\tfrac 1{10}$, then the previous inequality yields that
\[
    \frac{1}{K} \sum_{k\in [K]} \tilde R_T^k (\cA)
\ge
    \frac45 \cspike \cdot \e T\ge
    \frac{1}{50^3} T^{3/4} \;.
\]
If, on the other hand, it holds that $\cprob \cdot \e \cdot \sqrt{ \frac{2}{K} \bbE^0[N_T] } > \frac1{10}$, then
\[
    \tilde R_T^0(\cA)
\ge
    \cplat \bbE^0 [N_T] 
>   
    \frac{1}{50^3} T^{3/4} \;. \qedhere
 \]
\end{proof}
\end{document}